\documentclass[twoside]{article}

\usepackage[preprint]{aistats2027}
\let\originalaistatsaddress\aistatsaddress
\renewcommand{\aistatsaddress}[1]{%
  \originalaistatsaddress{#1}%
  \removelastskip
  \vspace{1em}%
}

\usepackage{amsmath,amsfonts,bm}

\def\eqref#1{equation~\ref{#1}}

\def\1{\bm{1}}

\DeclareMathAlphabet{\mathsfit}{\encodingdefault}{\sfdefault}{m}{sl}
\SetMathAlphabet{\mathsfit}{bold}{\encodingdefault}{\sfdefault}{bx}{n}

\newcommand{\E}{\mathbb{E}}

\newcommand{\R}{\mathbb{R}}

\usepackage{hyperref}
\usepackage{url}
\usepackage{amsmath,amssymb,amsthm}
\usepackage{mathtools,mathrsfs}
\PassOptionsToPackage{dvipsnames}{xcolor}
\usepackage{tikz}
\usepackage{enumitem}
\usepackage{hyperref}
\usepackage{booktabs}
\usepackage{graphicx}          
\usepackage{subcaption}        
\graphicspath{{experiments_sizegen/}{../experiments_sizegen/}} 
\usepackage{colortbl}
\usepackage{multicol}
\usepackage[expansion=false]{microtype}
\usepackage[dvipsnames]{xcolor}
\usepackage{titlesec}
\usepackage{authblk}

\hypersetup{colorlinks=true, linkcolor=blue!60!black, urlcolor=blue!60!black, citecolor=blue!60!black}

\newcommand{\risk}{\mathcal R}
\newcommand{\err}{\mathcal E}
\newcommand{\calA}{\mathcal A}

\theoremstyle{plain}

\newtheorem{corollary}{Corollary}
\newtheorem{proposition}{Proposition}
\newtheorem{lemma}{Lemma}
\theoremstyle{definition}
\newtheorem{definition}{Definition}
\theoremstyle{remark}

\newtheorem{assumption}{Assumption}

\begin{document}

\onecolumn

\aistatstitle{Looped GNNs Beyond Fixed Depth: Graph Size Generalization and Parameter-Efficient Adaptation}
\aistatsauthor{Zhiyang Wang$^{1,*}$ \And Zhishang Luo$^{2}$ \And  Yusu Wang$^{2}$ }
\aistatsaddress{ $^{1}$Washington University in St.~Louis, St.~Louis, MO, USA \\  $^{2}$University of California, San Diego, La Jolla, CA, USA } 
{\renewcommand{\thefootnote}{*}\footnotetext{Preprint. This work was conducted while the author was at University of California, San Diego (UCSD).}}

\begin{abstract}
Generalizing from small graphs to larger ones can require computation beyond the depth of a neural model used during training. We study target tasks that admit a common local iterative computation and whose effective dependency radius grows with graph size.
The central idea is to learn a local update rule through GNN pretraining, then repeatedly apply the same learned processor beyond the training depth to extend information reach on larger graphs without retraining.
Calling such parameter-shared architecture a \emph{looped GNN}, we categorize six schemes for looped GNNs to separate the effects of depth extension, fine-tuning, and freeze-and-steer adaptation. 
Our analysis establishes fixed-depth information limits and sufficient conditions under which accurate iterative reuse overcomes them.
To improve beyond unadapted reuse, we introduce \emph{Loop-Freeze-Steer} (LFS), which freezes the pretrained processor and learns a small controller to steer its repeated application, using much fewer trainable parameters than full fine-tuning.
Our finite-sample analysis gives sufficient conditions for adaptation to improve performance on larger graphs.
Experiments on synthetic tasks and chip placement evaluate depth extension and adaptation, highlighting LFS's accuracy--trainable-parameter trade-off for size generalization.

\end{abstract}

\section{Introduction}\label{sec:intro}
Generalizing from small training graphs to larger deployment graphs is an important challenge in graph learning~\cite{yehudai2021from,ruiz2020graphon}. In many tasks, increasing graph size also increases the amount of local computation needed for an accurate prediction. In such settings, it is natural to adapt a model trained on smaller graphs so that it can operate effectively on larger ones. The overarching questions we are interested in understanding are what adaptation strategies are effective, which factors govern their success, and under what conditions adaptation can provide a genuine (ideally provably guaranteed) benefit.

In this paper, we take a step toward answering these questions by focusing on a family of graph targets that admit a common local iterative computation whose effective dependency radius grows with graph size. This radius measures how much neighborhood information is needed to meet a specified accuracy for the desired target function. A common update rule propagates information through repeated applications, motivating a learned processor that can be reused at greater deployment depths.

For tasks with a growing effective dependency radius, a fixed-depth GNN has insufficient information to meet the desired accuracy for larger graphs, even after parameter adaptation (Proposition~\ref{prop:ceiling}). This motivates extending computation at deployment. If a processor learned on small graphs captures the task's local update, repeatedly applying the same processor can propagate information farther on larger graphs without introducing new processor parameters. We refer to graph models with this repeated, parameter-shared computation as \emph{looped GNNs}. Our starting point is a GNN pretrained at a fixed depth, with unadapted looping reusing its processor beyond that depth without further parameter updates. 

Greater information reach and accurate computation are distinct requirements. Accurate prediction at the training depth does not by itself establish that the learned processor approximates the task's local update on states encountered during longer computations.
Additional iterations therefore create both an opportunity to access more information and a need to control accumulated update errors.
Our analysis makes this distinction explicit through a fixed-radius information floor and an alignment-based bound on iterative prediction error. Together, these results give sufficient conditions under which extended reuse improves on fixed-depth prediction (Lemma~\ref{lem:accum}, Proposition~\ref{prop:iter}).

Within an encode--process--decode formulation, we separately study deployment-depth extension and parameter adaptation.
Figure~\ref{fig:scheme-overview} organizes six schemes along these two axes: Fixed, Fixed-Tune, and Fixed-Steer retain the pretraining depth, while Loop, Loop-Tune, and Loop-Freeze-Steer allow adjustable deployment depth.
Within each family of the same depth, the model is used without adaptation, fine-tuned, or adapted through a controller around a frozen processor.
The adapted schemes use a limited labeled set of intermediate-size graphs before evaluation on larger graphs. Matched comparisons examine the effect of depth under each adaptation policy and the effect of adaptation at a given depth.

Accurate prediction at the training depth does not by itself ensure that the learned processor approximates the task's local update rule, which matters when it is reused for additional iterations. 
This motivates how adaptation can improve extended reuse while retaining useful pretrained updates.
Among the six schemes, our proposed \emph{Loop-Freeze-Steer} (LFS) addresses this question by freezing the pretrained processor and learning a small controller to steer the hidden states passed between iterations, combining depth extension with parameter-efficient adaptation. Its motivation is that a pretrained processor may provide a useful local update even when its repeated application requires adjustment on larger graphs. LFS freezes the encoder and processor and learns a small size-conditioned controller, together with the readout.
The controller acts \emph{between iterations}, modifying the hidden states entering subsequent processor applications. It thereby adapts the repeated computation itself, with the same controller parameters reused across steps and graph sizes. Compared with updating the full network, this design concentrates the trainable budget in a compact controller and readout. Its practical aim is to improve on unadapted reuse while approaching fine-tuning accuracy with far fewer trainable parameters.

Specifically, for adaptation, our finite-sample analysis distinguishes the correction achievable within an adaptation class, the cost of estimating that correction from limited data, and its persistence across graph sizes.
Adaptation improves expected deployment risk over its unadapted baseline when the achievable gain exceeds estimation cost and the signed loss of benefit across sizes
(Lemma~\ref{lem:P3}, Proposition~\ref{prop:adapt}).
This criterion applies to both fine-tuning and steering. It also separates the trainable-parameter savings of LFS from the statistical conditions needed for accurate adaptation.

We evaluate these questions on controlled synthetic tasks and real-world chip-placement problems. Synthetic experiments examine the effects of depth, update alignment, and adaptation, while chip-placement experiments test transfer to larger designs and additional distribution shifts.
The comparisons assess both predictive error and trainable parameter budget, reflecting the practical value of obtaining comparable accuracy with a much smaller adaptation model. Our contributions are summarized as follows:
\begin{itemize}
 \item \textbf{A framework for graph-size generalization.}
 We distinguish effective dependency radius from task-computation
 depth and organize six schemes to separate depth extension from adaptation. Our analysis establishes fixed-depth information limits and sufficient conditions for useful iterative reuse and finite-sample adaptation across sizes.

 \item \textbf{Loop-Freeze-Steer.}
 We introduce adaptation of looped computation through a shared
 size-conditioned controller between frozen processor applications, together with readout adaptation, reducing trainable parameters relative to processor fine-tuning when the controller is smaller.

 \item \textbf{Synthetic and real-world validation.}
 We evaluate depth extension and adaptation on controlled synthetic tasks and real-world chip-placement problems, assessing performance and parameter efficiency across graph sizes.
\end{itemize}

\section{Related works}\label{sec:related}
This work connects iterative neural architectures and algorithmic reasoning, graph size generalization, parameter-efficient adaptation, and learned-controller designs. We review each direction, then identify our contributions.

\subsection{Iterative neural networks and algorithmic alignment.} 
Iterative architectures repeatedly apply a shared operator, allowing computational depth to change without increasing processor parameter count. IterGNN~\cite{tang2020itergnn} is a form of looped GNN under our definition, while it combines a learned stopping mechanism for adaptive iteration with homogeneous transformations for numerical input scaling.
Algorithmic alignment~\cite{xu2020what} relates learnability to the match between network structure and the computation of a task. Neural algorithmic reasoning studies learning to execute algorithms~\cite{velickovic2020nar}. CLRS provides inputs, intermediate states, and outputs for 30 algorithms to evaluate generalization to larger instances~\cite{velickovic2022clrs}, while generalist learners train a shared processor across multiple algorithms \cite{velickovic2022generalist}.
Mahdavi et al.~\cite{mahdavi2022better} study out-of-distribution generalization in this setting, where larger instances can require longer reasoning trajectories.
For shortest paths, Nerem et al.~\cite{tang2025provable} establish conditions under which minimizing a sparsity-regularized training objective yields a GNN implementing Bellman--Ford, and hence generalizing to arbitrary instance sizes. 
Related work on looped Transformers studies shared-block computation for learning iterative algorithms~\cite{yang2024looped} and length generalization on tasks with iterative solutions~\cite{fan2024looped}.
Recurrent algorithmic models also examine reliable computation beyond training depths~\cite{bansal2022end}.
Deep equilibrium models~\cite{bai2019deq} and implicit GNNs~\cite{gu2020ignn} define representations through fixed points, while comparisons of implicit and unfolded GNNs examine the relation between equilibrium solutions and explicit iterative propagation~\cite{yang2022implicit}.
These approaches motivate reusable computation, but finite-horizon iteration and equilibrium prediction impose different requirements.
Our analysis separates computational horizon from effective dependency radius, and studies when a pretrained local update can be reused accurately at greater depth. In contrast to learning adaptive termination as in IterGNN, LFS adapts the hidden states passed between applications of a frozen pretrained processor, using a limited labeled adaptation set.

\subsection{Size generalization in graph learning.} 
An extensive theoretical literature studies when a \emph{trained} graph network's risk transfers across sizes. 
Graphon-based approaches compare finite-graph networks through a common graphon limiting object~\cite{ruiz2020graphon,ruiz2023transferability,maskey2023transferability,maskey2023graphon}.
Manifold-based work studies graph filters and neural networks through geometric limits and statistical generalization \cite{wang2023geometric,wang2025manifold, levie2021transferability}.
Graphops extend approximation and transferability analysis to sparse as well as dense graph sequences~\cite{le2023limits}; continuous-depth GNNs have also been studied through graphon differential equations and trajectory convergence~\cite{herbst2025continuous}.
For graph Transformers, manifold-limit analysis establishes size transferability through convolutional positional encodings~\cite{porras2026size}.

Local-structure analysis identifies changes in neighborhood distributions as a source of size-generalization failure and motivates learning interventions~\cite{yehudai2021from}.
Beyond transfer guarantees for a given predictor, DISGEN uses augmentations and disentangled representations to reduce reliance on size information~\cite{huang2024disgen}. Size-insensitive attention (SIA) addresses size-induced shifts in biological graphs using structural information motivated by spectral analysis \cite{buffelli2023sizeshift}.
Thus this literature includes both transferability theory and methods designed to improve size generalization. Our distinction is to analyze deployment-depth extension together with supervised adaptation of a pretrained model. The resulting bounds separate information reach, update accuracy, estimation from adaptation graphs, and transfer of the learned correction. Our analysis allows the pretrained model to change through depth extension, supervised parameter adaptation, or both. We study how these changes improve graph-size generalization, rather than only how a fixed predictor transfers across sizes.

\subsection{Parameter-efficient and test-time adaptation for graphs.} 
Adapters~\cite{houlsby2019adapters} and low-rank updates~\cite{hu2022lora} provide general strategies for adapting pretrained networks through a small set of parameters.
AdapterGNN~\cite{li2024adaptergnn} develops lightweight adapters for frozen GNN backbones and examines their generalization and empirical performance relative to full fine-tuning.
Graph prompting offers further adaptation mechanisms: GraphPrompt~\cite{liu2023graphprompt} unifies pretraining and downstream tasks through a shared task formulation;
GPPT~\cite{sun2022gppt} aligns downstream node classification with graph pretraining through prompt tuning; and GPF/GPF-plus~\cite{fang2024gpf} provide feature-based prompts applicable to different GNN pretraining strategies.
For equivariant graph models, ELoRA~\cite{wang2025elora} and MMEA~\cite{jin2025mmea} develop parameter-efficient updates that preserve the required symmetries.
Graph foundation-model approaches such as GFT and GIT~\cite{wang2024gft,zehong2025git} further study reusable representations across graphs and tasks. LFS specializes parameter-efficient adaptation to repeated computation on larger graphs: a shared controller acts between applications of a frozen processor, together with an adapted readout.
Its trainable parameter count is independent of the number of iterations. Fixed-Steer provides the corresponding fixed-depth control, and Loop-Tune permits full updates of the pretrained network.
Our analysis accounts separately for trainable parameters and loss-class complexity, and gives conditional adaptation guarantees rather than inferring accuracy superiority from parameter count alone.

Another adaptation setting uses unlabeled target data at test time: GraphTTA~\cite{chen2022graphtta} uses adversarial pseudo-group contrast, HomoTTT~\cite{zhang2024homottt} uses homophily-based contrastive learning, and Matcha~\cite{bao2024matcha} addresses graph structure shifts. This setting concerns when and from which data a model adapts, whereas parameter efficiency concerns which parameters are updated.
Our protocol uses limited labeled intermediate-size graphs for adaptation, then holds fitted parameters fixed on larger deployment graphs.

\subsection{Adaptive computation and iterative reuse.}
Adaptive Computation Time~\cite{graves2016act} and PonderNet~\cite{banino2021pondernet} learn input-dependent halting, while IterGNN~\cite{tang2020itergnn} incorporates adaptive termination into graph computation. Feature conditioning instead adjusts the computation at each step, as in FiLM's channel-wise affine modulation~\cite{perez2018film}. Training through long unrollings can involve vanishing or exploding gradients \cite{pascanu2013difficulty}, and repeated updates can accumulate errors beyond the training depth. Our approach separates fixed-depth pretraining from extended reuse: LFS freezes the pretrained processor and adapts a small controller and readout. The controller uses FiLM-style modulation conditioned on graph size and iteration to steer reuse, rather than learn when to halt. This reduces the trainable budget, while gradients still pass through the frozen processor; our alignment conditions address the accuracy of the complete controlled update.

\subsection{Positioning of our work.}
We organize six schemes to separate deployment-depth extension from parameter adaptation. The analysis establishes an information floor for fixed-depth models and sufficient alignment and task-resolution conditions for iterative reuse to improve on that floor.
LFS combines adjustable depth with parameter-efficient steering of a frozen pretrained processor. Its adaptation guarantee depends on achievable gain, estimation cost, and cross-size loss of benefit.
This provides a conditional account of useful adaptation and an explicit trainable-budget comparison with full fine-tuning; the relative accuracy of the two approaches depends on the task and adaptation distribution.

\section{Problem setup}\label{sec:fclass}
For each $N\in\mathbb N$, let $\mathcal G_N$ denote the space of attributed graphs with exactly $N$ nodes, and let $\mathcal P_N$ be a probability distribution supported on connected graphs in $\mathcal G_N$. We write $G_N\sim\mathcal P_N$ for a sampled graph. Write $B_k(G,v)$ for the rooted attributed $k$-hop neighborhood of $v$ in $G$, and $\operatorname{diam}(G)$ for the hop diameter.
Assume that there exists a deterministic scale $D_N\to\infty$ that describes the diameter growth of the graph familiy, i.e.,
\begin{equation}
\operatorname{diam}(G_N)=\Theta_{\mathbb P}(D_N), \qquad G_N\sim\mathcal P_N.
\end{equation}
We train on a finite labeled dataset of graphs of size $N_{\mathrm{train}}$ and evaluate on graphs drawn from $\mathcal P_{N_{\mathrm{OOD}}}$, where $N_{\mathrm{OOD}}>N_{\mathrm{train}}$.

Let $f_N^\star:\mathcal G_N\to\mathbb R^{N\times d_y}$ be a permutation-equivariant node-level target, and write $f_N^\star(G,v):=[f_N^\star(G)]_v \in\mathbb R^{d_y}$ as the target value conditionally on $G_N$, sampled $v\sim\operatorname{Unif}(V(G_N))$. Let $\mathscr B_{N,k}$ denote the sigma-algebra generated by $B_k(G_N, v)$. {Intuitively, $\mathscr B_{N,k}$ represents all information observable within this $k$-hop neighborhood of node $v$.} 
Assume $\mathbb E\|f_N^\star(G_N, v)\|_2^2<\infty$, we define the optimal radius-$k$ approximation error.
\begin{equation}\label{eqn:opterror}
   \mathcal{E}_N(k)
    :=
    \inf_{g\in L^2(\mathscr B_{N,k};\mathbb R^{d_y})}
    \E\!\left[
        \left\|
            f_N^\star(G_N, v)-g
        \right\|^2
    \right],
\end{equation}
where expectation is taken over the sampled graph and the uniformly sampled node.
For a fixed tolerance $\epsilon>0$, define the effective dependency radius as
\begin{equation}
    L_\epsilon(N)
:=
\min\{k\in\mathbb N_0:\mathcal{E}_N(k)\le\epsilon^2\}.
\end{equation}
{Comparing $L_\epsilon(N)$ with $D_N$ distinguishes targets that can be predicted to some accuracy from a bounded neighborhood. We define three regimes as follows.}
\begin{definition}[Dependency-radius regimes]
\label{def:classes}
Relative to the diameter scale $D_N$, we distinguish:
\begin{align*}
\mathcal F_1^{(\epsilon)}
\ \text{(bounded local)}:
&\qquad L_\epsilon(N)=O(1),\\
\mathcal F_2^{(\epsilon)}
\ \text{(growing sub-diameter)}:
&\qquad L_\epsilon(N)\to\infty, \\ &\qquad L_\epsilon(N)=o(D_N),\\
\mathcal F_3^{(\epsilon)}
\ \text{(diameter-scale)}:
&\qquad L_\epsilon(N)=\Theta(D_N).
\end{align*}
The classification depends on the target--distribution pair $(\{f_N^\star\}_N,\{\mathcal P_N\}_N)$ and on the tolerance $\epsilon$.
\end{definition}

\paragraph{Examples.}
Node degree, local clustering coefficients, and fixed-hop label counts are bounded-local. To illustrate genuinely growing reach, consider an $N$-node cycle graph with independent standard Gaussian scalar node attributes $X_u$ and diameter $D_N=\lfloor N/2\rfloor$.
Choose $k_N\to\infty$ with $k_N=o(N)$ and define
\begin{equation}\label{eq:example-local-sum}
    f_N^\star(G_N,v)
    =\frac{1}{\sqrt{2k_N+1}}
      \sum_{u\in B_{k_N}(G_N,v)}X_u.
\end{equation}
For sufficiently large $N$, $k_N<N/2$. When $k<k_N$, the optimal local predictor is the conditional expectation
\begin{equation}
     \mathbb E[f_N^\star(G_N,v)\mid\mathscr B_{N,k}]
 =\frac{1}{\sqrt{2k_N+1}}
   \sum_{u\in B_k(G_N,v)}X_u.
\end{equation}
It retains the target's normalization: conditioning reveals $2k+1$ attributes, while each unobserved attribute has conditional mean zero.
The residual contains $2k_N+1-(2k+1)$ independent unit-variance terms, so its mean squared error is their count divided by $2k_N+1$.
For $k\ge k_N$, the entire target is observed and the error is zero.
Together, these cases give
\begin{equation}\label{eq:example-local-error}
    \mathcal E_N(k)
    =\left(1-\frac{2k+1}{2k_N+1}\right)_+,
    \qquad (x)_+:=\max\{x,0\}.
\end{equation}
For fixed $0<\epsilon<1$, this gives $L_\epsilon(N)=\Theta(k_N)$.
Thus $k_N=\lceil\log N\rceil$ and $k_N=\lfloor\sqrt N\rfloor$ give two examples in $\mathcal F_2^{(\epsilon)}$, both with unit output variance. By contrast, the broadcast global sum $N^{-1/2}\sum_u X_u$ has $\mathcal E_N(k)=1-\min\{N,2k+1\}/N$, giving $L_\epsilon(N)=\Theta(N)$ and membership in $\mathcal F_3^{(\epsilon)}$.
Detailed derivations and the effects of normalization are given in Appendix~\ref{app:regime-examples}.

\subsection{Iterative operator form of the target}\label{sec:operator-form}
We consider {task} families with target functions 
generated by repeated applications of a local update rule.

\begin{definition}[Iterative operator form]\label{def:operator-form}
A size-relevant target family $\{f_N^\star\}_N$ admits an \emph{iterative operator form} if there exist size-independent encoder, update, and decoder templates $E^\star, U^\star, D^\star$, and horizons $T^\star(N)\in\mathbb N_0$, such that
\begin{equation}\label{eqn:operator-form}
    f_N^\star(G_N)
    =
    \left[
        D^\star \circ
\bigl(U^\star_{G_N}\bigr)^{T^\star(N)}
        \circ E^\star
    \right](X_{G_N}),
\end{equation}
where $X_{G_N}$ denotes the input node attributes and $E^\star$ as a node-wise encoder producing hidden states in dimension $d_h$,  $U^\star_{G_N}:\mathbb R^{N\times d_h}\to \mathbb R^{N\times d_h}$ as a common one-hop, permutation-equivariant update rule, $T^\star(N)$ as the task-side computational horizon, $D^\star$ as a node-wise decoder mapping the final hidden states to outputs in $\mathbb R^{N\times d_y}$.
We denote the class of task families admitting this representation by $\mathcal F^{\mathrm{op}}$. App.~\ref{app:iterative-tasks} gives examples satisfying this definition and explains the conditions needed for other local dynamics.
\end{definition}
The shared templates couple the targets across graph sizes and make operator reuse well-defined. Because each update propagates information by at most one hop, $L_\epsilon(N)\le T^\star(N)$. Our analysis focuses on $\mathcal F^{\mathrm{op}}\cap (\mathcal F_1^{(\epsilon)}\cup\mathcal F_2^{(\epsilon)})$. A concrete example is the damped local diffusion used in our synthetic experiments in Sec.~\ref{sec:exp-synthetic}.
App.~\ref{app:discussion} discusses representative tasks, their iterative structure, and the distinction between dependency radius and homogeneity.

\paragraph{Example: restart diffusion.}
On the attributed cycle above, let $P_{G_N}$ average the two neighbors of each node. Starting from $h_0^\star=0$, consider
\begin{equation}\label{eq:example-restart}
    h_{t+1}^\star
    =\alpha P_{G_N}h_t^\star+(1-\alpha)X_{G_N},
    \qquad 0<\alpha<1,
\end{equation}
with target $f_N^\star(G_N)=h_{T^\star(N)}^\star$ and identity decoder. The fixed source $X_{G_N}$ is locally available at each update. This update repeatedly combines neighboring states with a fixed node-local input, a restart mechanism used in label propagation and APPNP~\cite{zhou2003learning,klicpera2019appnp}.
Since $\|P_{G_N}\|_2\le1$, for $0\le k\le T^\star(N)$,
\begin{equation}\label{eq:example-restart-error}
    \mathbb E\!\left[
        \frac1N
        \|h_{T^\star(N)}^\star-h_k^\star\|_2^2
    \right]
    \le
    \alpha^{2k}
    \mathbb E\!\left[
        \frac1N\|X_{G_N}\|_2^2
    \right]
    =\alpha^{2k}.
\end{equation}
Thus fixed $\alpha$ and fixed tolerance give bounded sufficient depth, even when $T^\star(N)$ grows.
A growing horizon alone therefore does not establish a growing dependency radius.
The damped update used in our synthetic experiments in Sec. \ref{sec:exp-synthetic} has a related form. Its information requirements additionally depend on the input distribution, signal scale, and damping.
Appendix~\ref{app:iterative-tasks} discusses further iterative examples and their contraction conditions, while Appendix~\ref{app:homogeneity} distinguishes radius from magnitude scaling.

\subsection{Learned predictors and deployment risk}
\label{sec:learned-predictors}
All schemes use the encode--process--decode architecture with a parameter-shared processor applied iteratively for $K$ steps: 
\begin{equation}\label{eq:learned-predictor}
f_{N,K}(G_N)
= \left[
D_\eta\circ(U_{\theta,G_N})^K\circ E
\right](X_{G_N}),
\end{equation}
where $E$ and $D_\eta$ are node-wise encoder and decoder maps, and $U_{\theta,G_N}$ is the realization on $G_N$ of a parameter-shared processor. 
The task-side horizon $T^\star(N)$ defines the target computation, whereas $K$ is the chosen deployment depth of the learned model.
The expected per-node deployment risk is
\begin{align}
    \mathcal R_N( f)
:= \mathbb E
\left[
\frac{1}{N}\sum_{v\in V(G_N)}
\| f(G_N,v)-f_N^\star(G_N,v)\|_2^2
\right].
\end{align}\label{eq:deployment-risk}

\section{Iterative Reuse: From Fixed Depth to Loop-Freeze-Steer}\label{sec:schemes}
\begin{figure*}[t]
\centering
\resizebox{0.94\textwidth}{!}{%
\begin{tikzpicture}[
  x=\linewidth,y=0.9cm,
  every node/.style={
    font=\sffamily\small,
    align=center,inner sep=2pt},
  scheme/.style={
    draw=black!25,fill=black!3,rounded corners=2pt,
    text width=0.225\linewidth,
    minimum height=0.65cm,
    font=\sffamily\bfseries\small},
  block/.style={
    draw=black!40,fill=black!5,rounded corners=2pt,
    text width=0.18\linewidth,minimum height=0.65cm},
  trained/.style={block,fill=blue!7},
  arrow/.style={
    ->,>=stealth,draw=black!65,line width=0.6pt}
]
  \node[anchor=west,font=\sffamily\bfseries\small]
    at (0.01,1.42) {(a) Depth and adaptation};

  \node at (0.32,0.87) {No adaptation};
  \node at (0.58,0.87) {Fine-tune};
  \node at (0.84,0.87) {Freeze \& steer};

  \node[text width=0.16\linewidth] at (0.09,0)
    {Fixed depth\\$K_{\mathrm{fix}}$};
  \node[text width=0.16\linewidth] at (0.09,-0.98)
    {Adjustable depth\\$K$};

  \node[scheme] at (0.32,0) {Fixed};
  \node[scheme] at (0.58,0) {Fixed-Tune};
  \node[scheme] at (0.84,0) {Fixed-Steer};
  \node[scheme] at (0.32,-0.98) {Loop};
  \node[scheme] at (0.58,-0.98) {Loop-Tune};
  \node[scheme,fill=blue!7] at (0.84,-0.98)
    {Loop-Freeze-Steer (LFS)};

  \node[anchor=west,font=\sffamily\bfseries\small]
    at (0.01,-1.84) {(b) LFS computation};

  \node[block] (encoder) at (0.10,-3.2)
    {$E_0$\\frozen};
  \node[trained] (controller) at (0.365,-3.2)
    {$C_\phi$\\trainable};
  \node[block] (processor) at (0.63,-3.2)
    {$R_{\theta_0,G_N}$\\frozen};
  \node[trained] (decoder) at (0.895,-3.2)
    {$D_\eta$\\trainable};
  \node (ctx) at (0.365,-2.28)
    {$\mathrm{ctx}_N^{(k)}$};

  \draw[arrow] (encoder) --
    node[above] {$h_0$} (controller);
  \draw[arrow] (ctx) -- (controller);
  \draw[arrow] (controller) -- (processor);
  \draw[arrow] (processor) --
    node[above] {$h_K$} (decoder);
  \draw[arrow] (processor.south) -- (0.63,-4.08)
    -- node[below] {$K$ steps; shared weights} (0.365,-4.08)
    -- (controller.south);
\end{tikzpicture}%
}
\caption{(a) Six schemes by depth and adaptation.
(b) LFS steers each frozen processor application with a shared
controller; the readout acts after $K$ steps. Blue blocks are trainable.}
\label{fig:scheme-overview}
\end{figure*}

The target's iterative structure motivates reuse of a pretrained local processor on larger graphs. We organize six schemes along two axes: deployment depth (fixed or adjustable) and adaptation (none, fine-tuning, or freeze-and-steer), as summarized in Fig.~\ref{fig:scheme-overview}(a). 

\subsection{Shared backbone and deployment depth}\label{sec:shared-backbone}
Let $E_0$, $U_{\theta_0,G_N}$, and $D_{\eta_0}$ denote the node-wise encoder, processor, and node-wise decoder pretrained on graphs of size $N_{\mathrm{train}}$ at depth $K_{\mathrm{fix}}$.
Within each scheme, the same processor parameters are reused at every iteration. Fig.~\ref{fig:scheme-overview}(a) shows fixed deployment depth $K=K_{\mathrm{fix}}$ (top row) from adjustable depth $K=K(N)$ (bottom row). Without adaptation (leftmost column), both choices use the pretrained predictor
\begin{equation}\label{eq:shared-predictor}
 f_{N,K}(G_N)=
 \left[D_{\eta_0}\circ U_{\theta_0,G_N}^{\,K}\circ E_0\right](X_{G_N}),
\end{equation}
whose receptive radius is at most $K$.

All schemes are evaluated on larger graphs of size $N_{\mathrm{OOD}}>N_{\mathrm{train}}$. The adapted schemes update selected parameters using labeled graphs in $\mathcal D_{\mathrm{adapt}}$ of size $N_{\mathrm{adapt}}$, where $N_{\mathrm{train}}\le N_{\mathrm{adapt}}<N_{\mathrm{OOD}}$.

For the looped schemes, deployment depth $K(N)$ is a size-dependent design choice that allows computation to scale with the task's information requirements. Under additional assumptions on the task computation, App.~\ref{app:resolution-depth} and~\ref{app:geometric} derive sufficient depth bounds and a special case with logarithmic resolution depth.

\subsection{Deployment and adaptation schemes}\label{sec:comparison-schemes}
We compare three fixed-depth schemes with their adjustable-depth counterparts, culminating in our proposed Loop-Freeze-Steer (LFS).
Let $d_E,d_R,d_D,d_\phi$ count encoder, processor, readout, and controller parameters, respectively. Shared parameters are counted once, regardless of depth.
\paragraph{Fixed.}\label{scheme:fixed}
Deploy Eq. \ref{eq:shared-predictor} with $K=K_{\mathrm{fix}}$ and no adaptation. This is the pretrained model used as is.
\paragraph{Fixed-Tune.}\label{scheme:ftfix}
Fine-tune encoder, processor, and readout on $\mathcal D_{\mathrm{adapt}}$, retaining depth $K_{\mathrm{fix}}$. Writing $E_\xi$ for the adapted encoder, its predictor is $[D_\eta\circ R_{\theta,G_N}^{K_{\mathrm{fix}}}\circ E_\xi](X_{G_N})$, with $d_E+d_R+d_D$ trainable parameters.
\paragraph{Fixed-Steer.}\label{scheme:fsfix}
Keep $E_0$ and $R_{\theta_0,G_N}$ frozen and adapt the node-wise controller $C_\phi$ and readout $D_\eta$, using the LFS computation defined below at depth $K_{\mathrm{fix}}$.
This trains $d_\phi+d_D$ parameters.
All three fixed-depth schemes remain radius-$K_{\mathrm{fix}}$ local.
\paragraph{Loop.}\label{scheme:loop}
Run Eq.~\ref{eq:shared-predictor} at deployment-selected depth $K(N)$, without parameter updates. This isolates iterative reuse of the pretrained processor. Greater depth permits a larger receptive field, while useful depth extension depends on the task and the accuracy of repeated model updates.

\paragraph{Loop-Tune.}\label{scheme:ft}
Fine-tune encoder, processor, and readout on $\mathcal D_{\mathrm{adapt}}$, with deployment depth $K=K(N)$.
Like Fixed-Tune, this updates all $d_E+d_R+d_D$ pretrained parameters.
The distinction is adjustable deployment depth.

\paragraph{Loop-Freeze-Steer (LFS).}\label{scheme:fs}
Our proposed scheme combines depth extension with parameter-efficient adaptation by placing a shared, size-conditioned controller \emph{between repeated applications of a frozen processor}.
LFS freezes $E_0$ and $R_{\theta_0,G_N}$ and adapts only $C_\phi$ and $D_\eta$ as shown in Fig.~\ref{fig:scheme-overview}(b). Starting from $h_0=E_0(X_{G_N})$, for $0\le k<K(N)$ it computes
\begin{align}\label{eq:lfs-update}
 &\widetilde h_k=C_\phi(h_k,\mathrm{ctx}_N^{(k)}),\quad 
 h_{k+1}=U_{\theta_0,G_N}(\widetilde h_k),\\
&\qquad \qquad f_{N,K(N)}^{\mathrm{LFS}}(G_N)=D_\eta(h_{K(N)}).
\end{align}
The same controller parameters $\phi$ and frozen processor parameters $\theta_0$ are shared across iterations and graph sizes, giving a parameter budget independent of the number of iterations. The context $\mathrm{ctx}_N^{(k)}$ may encode graph size, iteration index, and node-local history, allowing steering to respond to the current computation with an encoding defined at all deployment depths.

The identity condition $C_0(h,\mathrm{ctx})=h$ embeds unadapted Loop within LFS: setting $\phi=0$ and $\eta=\eta_0$ recovers the pretrained computation at the same depth. This provides a baseline from which the controller and readout can learn task-specific corrections.

\section{Performance Analysis}\label{sec:analysis}

We analyze when aligned iteration overcomes fixed-radius information limits and when adaptation improves prediction across graph sizes.
Additional iterations expand information reach, while operator alignment controls the accuracy of the repeated computation. Fine-tuning and steering improve how the model uses this information when their adaptation gains persist across sizes and exceed estimation costs.
Throughout, $\mathsf S$ denotes one of the six schemes:
Fixed, Fixed-Tune, Fixed-Steer, Loop, Loop-Tune, or LFS.

\subsection{Analytical primitives}\label{sec:primitives}
With notations in Sec.~\ref{sec:fclass} and denote $Y_N=f_N^\star(G_N,v)$, we define
\[
 m_{N,k}^\star=\E[Y_N\mid\mathscr B_{N,k}],
 \qquad \err_N(k)=\E\|Y_N-m_{N,k}^\star\|^2.
\]
Conditional expectation gives the best squared-error prediction using only the rooted $k$-hop neighborhood. Thus $\err_N(k)$ is exactly the optimal local approximation error from the problem setup.

For the target's iterative representation, set $h_0^\star=E^\star(X_{G_N})$ and $h_{k+1}^\star=U_{G_N}(h_k^\star)$, and define
\[
 \widetilde Y_{N,k}=[D^\star(h_k^\star)]_v,\qquad
 a_N(k)=\E\|Y_N-\widetilde Y_{N,k}\|^2.
\]
Task encoders and decoders are node-wise and updates are one-hop local, so $\widetilde Y_{N,k}$ is $\mathscr B_{N,k}$-measurable.
We consider square-integrable task outputs and predictions; expectations include both graph and node sampling. For a finite-horizon target, $Y_N=\widetilde Y_{N,T_N^\star}$ and $a_N(T_N^\star)=0$.
At $k>T_N^\star$, $a_N(k)$ measures over-iteration error and need not decrease. An equilibrium target is the decoded limit of the task states.

By the conditional-expectation projection identity, every square-integrable radius-$k$ predictor $f$ satisfies
\begin{equation}\label{eq:projection}
 \risk_N(f)=\err_N(k)+\E\|f(G_N,v)-m_{N,k}^\star\|^2
 \ge\err_N(k).
\end{equation}
In particular, $\err_N(k)\le a_N(k)$, since stopping the task computation gives one radius-$k$ predictor.
A short justification is in App.~\ref{app:information}.

We denote  $\|H\|_N^2=N^{-1}\sum_{v\in V(G_N)}\|H_v\|^2$ as the norm for hidden-state and output arrays. In particular, $a_N(k)=\E_G\|D^\star(h_k^\star)-f_N^\star(G_N)\|_N^2$.

\begin{assumption}[Contractive target]\label{ass:contraction}
For each $G_N$, there exists $\rho_N\in(0,1)$ such that, 
\begin{equation}
    \|U_{G_N}(h)-U_{G_N}(\tilde h)\|_N
 \le\rho_N\|h-\tilde h\|_N
\end{equation}
for all hidden states in its state space.
\end{assumption}
Contractivity controls propagation of state errors under repeated task updates. It prescribes neither a resolution profile nor a depth law; both $\err_N(k)$ and $a_N(k)$ depend on the target--distribution pair. 

\begin{lemma}[Finite-horizon operator error]\label{lem:accum}
Under Assumption~\ref{ass:contraction}, fix a fitted scheme $\mathsf S$
and a depth $K$. Its encoder gives
\[
 h_0^{\mathsf S}=E_{\mathsf S}(X_{G_N}),
\]
and the same processor $U_{\theta,G_N}$ is reused for $0\le j<K$. Suppose, for $\mathcal P_N$-almost every $G_N$,
\begin{align*}
 \|h_0^{\mathsf S}-h_0^\star\|_N&\le\delta_{E,\mathsf S}(N),\\
 \|h_{j+1}^{\mathsf S}-U^\star_{G_N}(h_j^{\mathsf S})\|_N
 &\le\varepsilon_{\mathsf S}(N),\qquad 0\le j<K.
\end{align*}
Then
\begin{equation}\label{eq:state-bound}
 \|h_K^{\mathsf S}-h_K^\star\|_N
 \le\rho_N^K\delta_{E,\mathsf S}(N)
       +\frac{1-\rho_N^K}{1-\rho_N}\varepsilon_{\mathsf S}(N).
\end{equation}
If the learned decoder $D_\eta$ is $L_D$-Lipschitz in per-node norms and satisfies
$\|D_\eta(h_K^\star)-D^\star(h_K^\star)\|_N
 \le\delta_{D,\mathsf S}(N)$ almost surely,
then $f_{N,K}^{\mathsf S}(G_N)=D_\eta(h_K^{\mathsf S})$ satisfies
\begin{align}\label{eq:alignment-risk}
 \risk_N(f_{N,K}^{\mathsf S})
 \le{}&2\Bigg[L_D\left(\rho_N^K\delta_{E,\mathsf S}(N)
       +\frac{1-\rho_N^K}{1-\rho_N}\varepsilon_{\mathsf S}(N)\right) +\delta_{D,\mathsf S}(N)\Bigg]^2+2a_N(K).
\end{align}
\end{lemma}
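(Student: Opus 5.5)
The plan is a one-step error recursion for the hidden states, followed by a triangle-inequality split at the decoder. Write $e_j:=\|h_j^{\mathsf S}-h_j^\star\|_N$. Insert the intermediate state $U^\star_{G_N}(h_j^{\mathsf S})$:
\[
 e_{j+1}\le \|h_{j+1}^{\mathsf S}-U^\star_{G_N}(h_j^{\mathsf S})\|_N+\|U^\star_{G_N}(h_j^{\mathsf S})-U^\star_{G_N}(h_j^\star)\|_N
 \le \varepsilon_{\mathsf S}(N)+\rho_N e_j .
\]
The first term is controlled by the per-step alignment hypothesis. The second is controlled by Assumption~\ref{ass:contraction}, since $h_{j+1}^\star=U^\star_{G_N}(h_j^\star)$ by definition of the task states.

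Starting from $e_0\le\delta_{E,\mathsf S}(N)$, induction on $j$ gives
\[
 e_K\le\rho_N^K\delta_{E,\mathsf S}(N)+\varepsilon_{\mathsf S}(N)\sum_{i=0}^{K-1}\rho_N^i ,
\]
and summing the geometric series yields \eqref{eq:state-bound}. This holds for $\mathcal P_N$-almost every $G_N$. The one point to check is that $h_j^{\mathsf S}$ lies in the state space on which contractivity is assumed, so that $U^\star_{G_N}(h_j^{\mathsf S})$ is meaningful. I will take this as implicit in the hypothesis, which already evaluates $U^\star_{G_N}$ at $h_j^{\mathsf S}$.

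For the risk bound, fix $G_N$ and split the output error as
\[
 D_\eta(h_K^{\mathsf S})-f_N^\star(G_N)=\bigl[D_\eta(h_K^{\mathsf S})-D^\star(h_K^\star)\bigr]+\bigl[D^\star(h_K^\star)-f_N^\star(G_N)\bigr].
\]
Applying $\|x+y\|_N^2\le2\|x\|_N^2+2\|y\|_N^2$ splits the squared error into two pieces.

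For the first bracket, insert $D_\eta(h_K^\star)$ and use the triangle inequality. Because $D_\eta$ is node-wise and $L_D$-Lipschitz per node, squaring and averaging over nodes gives $\|D_\eta(h)-D_\eta(h')\|_N\le L_D\|h-h'\|_N$. Together with the decoder hypothesis, this bounds the first bracket by $L_De_K+\delta_{D,\mathsf S}(N)$. Substituting \eqref{eq:state-bound} gives a deterministic (a.s.) bound on the bracket, so squaring it and taking expectation over $G_N$ leaves the first term of \eqref{eq:alignment-risk} unchanged.

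For the second piece, $\E_G\|D^\star(h_K^\star)-f_N^\star(G_N)\|_N^2=a_N(K)$ by the identity noted just before Assumption~\ref{ass:contraction}. Finally, $\risk_N(f)=\E_G\|f(G_N)-f_N^\star(G_N)\|_N^2$, which is immediate from the definition because $\|\cdot\|_N$ averages over nodes.

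There is no real obstacle. The steps needing care are the passage from the per-node Lipschitz property to the $\|\cdot\|_N$ bound, and keeping the almost-sure bounds separate from the final expectation.
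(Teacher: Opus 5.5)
Your proposal is correct and matches the paper's proof step for step. Both insert $U^\star_{G_N}(h_j^{\mathsf S})$ to get $e_{j+1}\le\varepsilon_{\mathsf S}(N)+\rho_N e_j$ and sum the geometric recurrence. Both then add and subtract $D^\star(h_K^\star)$, using $\|x+y\|_N^2\le2\|x\|_N^2+2\|y\|_N^2$, the Lipschitz-plus-$\delta_{D,\mathsf S}(N)$ bound, and the identity for $a_N(K)$. Your extra remarks on passing from per-node Lipschitz bounds to $\|\cdot\|_N$ and on keeping almost-sure bounds separate from the final expectation only spell out steps the paper leaves implicit.
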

Here the one-step mismatch is evaluated at states visited by the model. Its size is a separate alignment premise; contractivity bounds its accumulation. 
\begin{proof}
For almost every graph set $e_j=\|h_j^{\mathsf S}-h_j^\star\|_N$.
Inserting $U^\star_{G_N}(h_j^{\mathsf S})$ and using contractivity yields
\[
 e_{j+1}\le\|h_{j+1}^{\mathsf S}
             -U^\star_{G_N}(h_j^{\mathsf S})\|_N+\rho_N e_j
 \le\varepsilon_{\mathsf S}(N)+\rho_Ne_j.
\]
Both implementations in the lemma satisfy the assumed one-step bound.
The argument therefore applies to the shared processor with or without a controller, including iteration-dependent controller inputs.
Summing the geometric recurrence proves \eqref{eq:state-bound}.
The learned decoder satisfies
\[
 \|D_\eta(h_K^{\mathsf S})-D^\star(h_K^\star)\|_N
 \le L_De_K+\delta_{D,\mathsf S}(N).
\]
Add and subtract $D^\star(h_K^\star)$ in the total output error, use $\|x+y\|_N^2\le2\|x\|_N^2+2\|y\|_N^2$, and take graph expectations. The second expected squared norm is $a_N(K)$.
This proves Eq.~\ref{eq:alignment-risk}.
\end{proof}

The preceding lemma controls approximation at one graph size.
The next assumption controls its change across sizes.
Each $g\in\mathcal H_{\mathsf S}$ defines radius-$K$ predictions across sizes with shared parameters; write $g_N(G_N,v)$ for its node prediction at size $N$.

\begin{assumption}[Transfer of local approximation error]
\label{ass:size-transfer}
For $g\in\mathcal H_{\mathsf S}$, define
\[
 \calA_{N,K}(g)=\E\|g_N(G_N,v)-\widetilde Y_{N,K}\|^2.
\]
For the graph sizes considered, assume uniformly over the class that
\begin{equation}\label{eq:transfer}
 |\calA_{N,K}(g)-\calA_{N',K}(g)|
 \le\tau_{\mathsf S}(N,N',K),
\end{equation}
where $\tau_{\mathsf S}$ is a finite deterministic bound.
\end{assumption}
This assumption controls how local comparison mismatches change across graph sizes. Together with the estimation bound, it links fitting on adaptation graphs to deployment risk.
Sufficient distributional conditions are given in App.~\ref{app:transfer}.

\begin{lemma}[Estimation from adaptation graphs]\label{lem:P3}
At the prescribed depth, let adaptation use $m_{\mathrm{adapt}}$ i.i.d. graphs from $\mathcal P_{N_{\mathrm{adapt}}}$, independently of pretraining.
Suppose the graph-averaged squared-loss class of $\mathcal H_{\mathsf S}$ is bounded in $[0,B_{\mathrm{adapt}}]$ and has pseudo-dimension at most $p_{\mathsf S}\ge1$. Then
\begin{align}\label{eq:uniform-convergence}
 \E_{\mathcal D_{\mathrm{adapt}}}
 \sup_{g\in\mathcal H_{\mathsf S}}
 |\risk_{\mathrm{adapt}}(g)-\widehat\risk_{\mathrm{adapt}}(g)|
 \le\gamma_{\mathsf S}:=
 C_{\mathrm{gen}}B_{\mathrm{adapt}}
       \sqrt{\frac{p_{\mathsf S}}{m_{\mathrm{adapt}}}},
\end{align}
where $\risk_{\mathrm{adapt}}=\risk_{N_{\mathrm{adapt}}}$, $\widehat\risk_{\mathrm{adapt}}$ is its empirical version, and $C_{\mathrm{gen}}$ is universal. An empirical risk minimizer $f_{\mathsf S}\in\mathcal H_{\mathsf S}$ satisfies
\begin{equation}\label{eq:erm}
 \E_{\mathcal D_{\mathrm{adapt}}}\risk_{\mathrm{adapt}}(f_{\mathsf S})
 \le\inf_{g\in\mathcal H_{\mathsf S}}\risk_{\mathrm{adapt}}(g)
       +2\gamma_{\mathsf S}.
\end{equation}
\end{lemma}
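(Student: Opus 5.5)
The proof is a standard uniform-convergence argument applied to the loss class, followed by the usual ERM decomposition. Write $z=G$ for a graph drawn from $\mathcal P_{N_{\mathrm{adapt}}}$, and define the graph-averaged loss
\[
\ell_g(G)=\tfrac{1}{N_{\mathrm{adapt}}}\sum_{v}\|g(G,v)-f^\star_{N_{\mathrm{adapt}}}(G,v)\|^2 .
\]
Then $\risk_{\mathrm{adapt}}(g)=\E\,\ell_g(G)$ and $\widehat\risk_{\mathrm{adapt}}(g)=m_{\mathrm{adapt}}^{-1}\sum_{i}\ell_g(G_i)$. Independence of $\mathcal D_{\mathrm{adapt}}$ from pretraining matters here: it lets us treat the frozen components ($E_0$, $U_{\theta_0}$, and anything else fixed by pretraining) as deterministic. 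The class $\mathcal L=\{\ell_g:g\in\mathcal H_{\mathsf S}\}$ is therefore a fixed class of $[0,B_{\mathrm{adapt}}]$-valued functions of i.i.d. samples.

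\textbf{Step 1 (symmetrization).} The standard symmetrization bound gives
\[
\E\sup_{g}|\risk_{\mathrm{adapt}}(g)-\widehat\risk_{\mathrm{adapt}}(g)|\le 2\,\E\,\mathfrak R_{m}(\mathcal L),
\]
where $\mathfrak R_m$ is the empirical Rademacher complexity on the sample. Both deviation directions are covered, since the symmetrization argument applies to the class together with its negation.

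\textbf{Step 2 (complexity bound).} This is the main step. Rescale $\mathcal L$ to $[0,1]$ by dividing by $B_{\mathrm{adapt}}$. By Haussler's packing bound, a class of pseudo-dimension at most $p_{\mathsf S}$ has empirical $L_2$ covering numbers $\mathcal N(\epsilon)\le (C/\epsilon)^{c\,p_{\mathsf S}}$. Dudley's entropy integral then gives
\[
\mathfrak R_m\le \frac{12}{\sqrt m}\int_0^1\sqrt{\log\mathcal N(\epsilon)}\,d\epsilon\le C'\sqrt{p_{\mathsf S}/m}.
\]
This bound carries no $\log m$ factor, which is why chaining is used instead of a one-shot Sauer-type bound. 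Undoing the rescaling yields $\gamma_{\mathsf S}=C_{\mathrm{gen}}B_{\mathrm{adapt}}\sqrt{p_{\mathsf S}/m_{\mathrm{adapt}}}$, where $C_{\mathrm{gen}}$ absorbs the universal constants. The obstacle is purely one of bookkeeping: the statement assumes the pseudo-dimension of the loss class itself, not of the predictor class. We therefore never need to pass the complexity through the squared loss or through the vector-valued outputs.

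\textbf{Step 3 (ERM).} Fix any $g\in\mathcal H_{\mathsf S}$, and let $\Delta=\sup_{h}|\risk_{\mathrm{adapt}}(h)-\widehat\risk_{\mathrm{adapt}}(h)|$. Because $f_{\mathsf S}$ minimizes the empirical risk,
\[
\risk_{\mathrm{adapt}}(f_{\mathsf S})\le \widehat\risk_{\mathrm{adapt}}(f_{\mathsf S})+\Delta\le\widehat\risk_{\mathrm{adapt}}(g)+\Delta\le \risk_{\mathrm{adapt}}(g)+2\Delta .
\]
Taking expectations over $\mathcal D_{\mathrm{adapt}}$ and applying Step 2 gives $\E\risk_{\mathrm{adapt}}(f_{\mathsf S})\le\risk_{\mathrm{adapt}}(g)+2\gamma_{\mathsf S}$. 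Taking the infimum over $g$ proves Eq.~\ref{eq:erm}. If the infimum is not attained, the same chain with a near-minimizer $g$ still works, since the bound holds for every $g$.
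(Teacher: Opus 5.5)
Your proposal is correct and follows essentially the same route as the paper: condition on pretraining, pass to the rescaled $[0,1]$-valued graph-level loss class (observations are whole graphs), symmetrize (the paper uses a ghost sample and handles the absolute value with $\mathcal F\cup(-\mathcal F)\cup\{0\}$), bound empirical $L^2$ covering numbers through the pseudo-dimension (the paper cites the VC-subgraph entropy bound where you cite Haussler), and apply Dudley's integral to get $C\sqrt{p_{\mathsf S}/m_{\mathrm{adapt}}}$ with no $\log m$ factor. The ERM step is the same three-inequality chain followed by an infimum and an expectation.
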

Here $p_{\mathsf S}$ measures graph-loss complexity at the chosen depth.
The proof is in App.~\ref{app:estimation-proof}.
These primitives support the two parts of the following analysis: The optimal local prediction error establishes the information floor, while operator alignment and task-computation error control depth extension. Estimation and cross-size transfer then quantify adaptation as follows.

\subsection{Risk analysis of depth and adaptation}\label{sec:depth-adaptation}
We first compare fixed-depth prediction with iterative reuse, then analyze adaptation within each depth family. Throughout, $N$ is a generic graph size and $N_{\mathrm{train}}\le N_{\mathrm{adapt}}<N_{\mathrm{OOD}}$.
Conditional on pretraining, write $\overline{\mathcal R}_N(\mathsf S):=\E_{\mathcal D_{\mathrm{adapt}}}\risk_N(f_{\mathsf S})$ for scheme $\mathsf S$ at its specified depth, omitting the outer expectation for Fixed and Loop. Proofs and the common deployment bound (Lemma~\ref{lem:master}) are in App.~\ref{app:proofs}.

\subsubsection{Depth: information limits and iterative extension}
\paragraph{Fixed, Fixed-Tune, and Fixed-Steer.}
All three schemes retain $K_{\mathrm{fix}}$ one-hop processor applications.
Node-wise encoding, decoding, and controller interventions preserve this receptive radius.

\begin{proposition}[Information floor]\label{prop:ceiling}
For $\mathsf S\in\{\mathrm{Fixed},\mathrm{Fixed-Tune},\mathrm{Fixed-Steer}\}$, $K_{\mathrm{fix}}=O(1)$,
\begin{equation}\label{eq:ceiling}
 \err_N(K_{\mathrm{fix}})\le \overline{\mathcal R}_N(\mathsf S).
\end{equation}
If $L_\epsilon(N)\to\infty$, then $\overline{\mathcal R}_N(\mathsf S)>\epsilon^2$
eventually, including after adaptation.
\end{proposition}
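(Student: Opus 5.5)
The plan is to reduce everything to the projection identity \eqref{eq:projection} applied to each fitted predictor. The first step is to check that every predictor produced by the three fixed-depth schemes is a square-integrable radius-$K_{\mathrm{fix}}$ predictor, i.e. that $f(G_N,v)$ is $\mathscr B_{N,K_{\mathrm{fix}}}$-measurable. The encoder $E_0$ or $E_\xi$ and the decoder $D_\eta$ act node-wise. Each application of $U_{\theta,G_N}$ or $R_{\theta_0,G_N}$ is one-hop local. The Fixed-Steer controller $C_\phi$ is node-wise, with context built from graph size, iteration index and node-local history. An induction on $k$ then shows that $[h_k]_v$ is a measurable function of $B_k(G_N,v)$. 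The main care point is the context: $N$ is deterministic under $\mathcal P_N$, and node-local history is by construction a function of the node's own earlier states, so measurability is preserved. Hence, for any realized adaptation set $\mathcal D_{\mathrm{adapt}}$, the predictor $f_{\mathsf S}$ is radius-$K_{\mathrm{fix}}$. This holds because $\mathcal D_{\mathrm{adapt}}$ is independent of the deployment graph, so conditioning on it leaves a fixed radius-$K_{\mathrm{fix}}$ map.

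The second step applies \eqref{eq:projection} conditionally on $\mathcal D_{\mathrm{adapt}}$ (and on pretraining). This gives $\risk_N(f_{\mathsf S})\ge\err_N(K_{\mathrm{fix}})$ pointwise in the adaptation data. Taking $\E_{\mathcal D_{\mathrm{adapt}}}$ yields \eqref{eq:ceiling}. For Fixed no outer expectation is needed. The bound holds for every parameter value, so it holds in particular after fine-tuning or steering.

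The third step handles the threshold claim. Since $\mathscr B_{N,k}\subseteq\mathscr B_{N,k+1}$ (the $k$-hop ball is recoverable from the $(k+1)$-hop ball), $\err_N(k)$ is nonincreasing in $k$. Therefore $L_\epsilon(N)>k$ if and only if $\err_N(k)>\epsilon^2$. Because $K_{\mathrm{fix}}=O(1)$ is a fixed depth and $L_\epsilon(N)\to\infty$, eventually $L_\epsilon(N)>K_{\mathrm{fix}}$. By the definition of $L_\epsilon$ as a minimum and this monotonicity, $\err_N(K_{\mathrm{fix}})>\epsilon^2$. Combining with \eqref{eq:ceiling} gives $\overline{\mathcal R}_N(\mathsf S)>\epsilon^2$ for all large $N$.

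I expect the only genuine obstacle to be the measurability and locality bookkeeping in the first step. That step means verifying the nested sigma-algebras and that the controller context introduces no nonlocal information. The remaining steps are direct.
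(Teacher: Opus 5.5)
Your proposal is correct and follows essentially the same route as the paper's proof. An induction shows that the node-wise encoder, controller, and readout, together with $K_{\mathrm{fix}}$ one-hop processor applications, give a $\mathscr B_{N,K_{\mathrm{fix}}}$-measurable prediction; the projection identity is applied conditionally on the fitted weights (using independence of the deployment graph from $\mathcal D_{\mathrm{adapt}}$) and averaged over the adaptation sample; and the threshold argument finishes. Your monotonicity step for $\err_N(k)$ is valid but not needed, since $K_{\mathrm{fix}}<L_\epsilon(N)$ already gives $\err_N(K_{\mathrm{fix}})>\epsilon^2$ directly from the definition of $L_\epsilon(N)$ as a minimum.
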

This information floor is determined by the fixed receptive radius $K_{\mathrm{fix}}$ and applies to all three fixed-depth schemes. Adaptation changes the error above this floor by refining how the model uses the available neighborhood information.

\paragraph{Depth extension: Loop.}
Loop reuses the pretrained processor for a deployment-selected number of steps. Additional steps expand the available neighborhood, while their benefit depends on both the accuracy of the truncated task computation and the learned updates. Lemma~\ref{lem:accum} combines these effects into a prediction-risk bound.

\begin{proposition}[When iteration improves risk]\label{prop:iter}
Fix $\epsilon>0$. Suppose $L_\epsilon(N)>K_{\mathrm{fix}}$ and Loop is deployed at a depth where the conditions of
Lemma~\ref{lem:accum} hold and its risk bound is at most $\epsilon^2$.
Then, for Fixed, Fixed-Tune, and Fixed-Steer,
\begin{equation}\label{eq:separation}
 \overline{\mathcal R}_N(\mathrm{Loop}) \le\epsilon^2
 <\err_N(K_{\mathrm{fix}})\le \overline{\mathcal R}_N(\mathsf S).
\end{equation}
\end{proposition}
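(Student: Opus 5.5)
The chain in \eqref{eq:separation} consists of three inequalities, and I will establish each one separately. The leftmost, $\overline{\mathcal R}_N(\mathrm{Loop})\le\epsilon^2$, comes straight from the hypothesis. Loop performs no adaptation, so $\overline{\mathcal R}_N(\mathrm{Loop})$ is the deterministic risk $\risk_N(f^{\mathrm{Loop}}_{N,K})$ conditional on pretraining. Under the stated premises, Lemma~\ref{lem:accum} bounds this risk by the right-hand side of Eq.~\ref{eq:alignment-risk}, with encoder and decoder errors $\delta_{E,\mathrm{Loop}},\delta_{D,\mathrm{Loop}}$, the one-step mismatch $\varepsilon_{\mathrm{Loop}}$, and the task truncation term $a_N(K)$. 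By assumption this bound is at most $\epsilon^2$.

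The middle strict inequality, $\epsilon^2<\err_N(K_{\mathrm{fix}})$, follows from the definition of the effective dependency radius. Since $L_\epsilon(N)$ is the minimal $k\in\mathbb N_0$ with $\err_N(k)\le\epsilon^2$, and $K_{\mathrm{fix}}<L_\epsilon(N)$, minimality gives $\err_N(K_{\mathrm{fix}})>\epsilon^2$ directly. Monotonicity of $\err_N$ in $k$ is not needed, because we only evaluate at the single index $K_{\mathrm{fix}}$. The same observation shows that the premise ``risk bound at most $\epsilon^2$'' forces Loop's depth $K$ to satisfy $K\ge L_\epsilon(N)>K_{\mathrm{fix}}$. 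Indeed, Loop is a radius-$K$ predictor, so by Eq.~\ref{eq:projection} we have $\err_N(K)\le\epsilon^2$. This is a consistency check, not a step the proof requires.

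The rightmost inequality, $\err_N(K_{\mathrm{fix}})\le\overline{\mathcal R}_N(\mathsf S)$ for the three fixed-depth schemes, is Proposition~\ref{prop:ceiling}. To be explicit about the one point needing care, I will note the following. For any realization of $\mathcal D_{\mathrm{adapt}}$, the fitted predictor $f_{\mathsf S}$ is a composition of node-wise maps ($E_\xi$ or $E_0$, the controller $C_\phi$, and $D_\eta$) with $K_{\mathrm{fix}}$ one-hop permutation-equivariant updates. Its output at $v$ is therefore $\mathscr B_{N,K_{\mathrm{fix}}}$-measurable. For the controller I assume the context $\mathrm{ctx}_N^{(k)}$ uses only $N$, $k$, and node-local history. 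By the projection identity Eq.~\ref{eq:projection}, each such realization satisfies $\risk_N(f_{\mathsf S})\ge\err_N(K_{\mathrm{fix}})$. Since adaptation graphs are drawn independently of the deployment graph, taking the expectation over $\mathcal D_{\mathrm{adapt}}$ preserves this bound.

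The main obstacle is this measurability and independence step. The adapted parameters are random, so I must argue conditionally on $\mathcal D_{\mathrm{adapt}}$ before applying the projection identity; otherwise the predictor might appear to depend on information outside the neighborhood. I would handle this by fixing the parameters and applying Eq.~\ref{eq:projection} pointwise in $\mathcal D_{\mathrm{adapt}}$, then integrating. Chaining the three inequalities proves \eqref{eq:separation}.
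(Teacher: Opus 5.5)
Your proposal is correct and follows the paper's proof essentially step for step. The leftmost inequality comes directly from the hypothesis on the risk bound of Lemma~\ref{lem:accum} for the unadapted Loop predictor, the strict middle inequality from minimality in the definition of $L_\epsilon(N)$, and the rightmost inequality from Proposition~\ref{prop:ceiling}, which, like you, conditions on the fitted weights and applies the projection identity on an independent deployment graph.
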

Thus aligned iteration can reach an accuracy beyond the information available to fixed-depth models, including adapted ones.
The proof and quantitative risk gap are in App.~\ref{app:iter}.
Sufficient task depths and a specific $\log$-depth case are given in App.~\ref{app:resolution-depth} and~\ref{app:geometric}.

\subsubsection{Adaptation: fine-tuning and freeze-and-steer}
\paragraph{Adaptation at a prescribed depth.}
The baseline $b(\mathsf S)$ is Fixed for Fixed-Tune/Fixed-Steer and
Loop for Loop-Tune/LFS. Each comparison uses the same depth with the baseline.
\begin{assumption}[Adaptation protocol]\label{ass:extrapolation}
Conditional on pretraining, fix the depth and class $\mathcal H_{\mathsf S}$ before sampling adaptation data.
The class contains its baseline and satisfies Lemma~\ref{lem:P3}; $f_{\mathsf S}$ is a measurable empirical risk minimizer.
The risks below are finite.
\end{assumption}
Define the achievable adaptation-distribution gain and its signed relative deployment shift:
\begin{align}
 \operatorname{Gap}_{\mathsf S}
 &:=\overline{\mathcal R}_{N_{\mathrm{adapt}}}(b(\mathsf S))
   -\inf_{g\in\mathcal H_{\mathsf S}}\risk_{N_{\mathrm{adapt}}}(g),
 \label{eq:source-gain}\\
 \Delta_{\mathsf S}
 &:=[\overline{\mathcal R}_{N_{\mathrm{OOD}}}(\mathsf S)
       -\overline{\mathcal R}_{N_{\mathrm{OOD}}}(b(\mathsf S))]-[\overline{\mathcal R}_{N_{\mathrm{adapt}}}(\mathsf S)
       -\overline{\mathcal R}_{N_{\mathrm{adapt}}}(b(\mathsf S))].
 \label{eq:relative-shift}
\end{align}
Here $\operatorname{Gap}_{\mathsf S}$ measures the improvement over the baseline achievable within the adaptation class at $N_{\mathrm{adapt}}$. The signed quantity $\Delta_{\mathsf S}$ measures how much the fitted model's improvement over that baseline decreases when moving to $N_{\mathrm{OOD}}$, with a negative value meaning a larger improvement at deployment. Together, these quantities separate adaptation capacity from the cross-size persistence of its benefit.

\begin{proposition}[Conditional adaptation gain]\label{prop:adapt}
Under Assumption~\ref{ass:extrapolation}, each adapted scheme satisfies
$\operatorname{Gap}_{\mathsf S}\ge0$
\begin{equation}\label{eq:adaptation}
 \operatorname{Gap}_{\mathsf S}-2\gamma_{\mathsf S}-\Delta_{\mathsf S}
 \le \overline{\mathcal R}_{N_{\mathrm{OOD}}}(b(\mathsf S))
       -\overline{\mathcal R}_{N_{\mathrm{OOD}}}(\mathsf S).
\end{equation}
\end{proposition}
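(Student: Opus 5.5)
The proof is an algebraic rearrangement of the definition of $\Delta_{\mathsf S}$, combined with the ERM bound of Lemma~\ref{lem:P3}.

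\textbf{Nonnegativity of the gap.} By Assumption~\ref{ass:extrapolation}, the class $\mathcal H_{\mathsf S}$ contains the baseline predictor $b(\mathsf S)$. The baseline is fixed conditional on pretraining, since Fixed and Loop involve no adaptation, so $\overline{\mathcal R}_{N_{\mathrm{adapt}}}(b(\mathsf S))$ is simply the risk of one element of $\mathcal H_{\mathsf S}$. This risk is at least $\inf_{g\in\mathcal H_{\mathsf S}}\risk_{N_{\mathrm{adapt}}}(g)$, hence $\operatorname{Gap}_{\mathsf S}\ge0$.

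\textbf{Main inequality.} Rearranging Eq.~\ref{eq:relative-shift} gives the exact identity
\[
\overline{\mathcal R}_{N_{\mathrm{OOD}}}(b(\mathsf S))-\overline{\mathcal R}_{N_{\mathrm{OOD}}}(\mathsf S)
=\bigl[\overline{\mathcal R}_{N_{\mathrm{adapt}}}(b(\mathsf S))-\overline{\mathcal R}_{N_{\mathrm{adapt}}}(\mathsf S)\bigr]-\Delta_{\mathsf S}.
\]
It therefore suffices to lower bound the bracketed adaptation-size improvement by $\operatorname{Gap}_{\mathsf S}-2\gamma_{\mathsf S}$.

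Here $\overline{\mathcal R}_{N_{\mathrm{adapt}}}(\mathsf S)=\E_{\mathcal D_{\mathrm{adapt}}}\risk_{\mathrm{adapt}}(f_{\mathsf S})$. Because the depth and class are fixed before sampling and $f_{\mathsf S}$ is a measurable ERM, Lemma~\ref{lem:P3} applies and yields Eq.~\ref{eq:erm}:
\[
\overline{\mathcal R}_{N_{\mathrm{adapt}}}(\mathsf S)\le\inf_{g\in\mathcal H_{\mathsf S}}\risk_{N_{\mathrm{adapt}}}(g)+2\gamma_{\mathsf S}.
\]
Subtracting this from $\overline{\mathcal R}_{N_{\mathrm{adapt}}}(b(\mathsf S))$ and using the definition of $\operatorname{Gap}_{\mathsf S}$ in Eq.~\ref{eq:source-gain} gives
\[
\overline{\mathcal R}_{N_{\mathrm{adapt}}}(b(\mathsf S))-\overline{\mathcal R}_{N_{\mathrm{adapt}}}(\mathsf S)\ge\operatorname{Gap}_{\mathsf S}-2\gamma_{\mathsf S}.
\]
Substituting into the identity proves Eq.~\ref{eq:adaptation}.

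\textbf{Technical care.} No step is a genuine obstacle; two points need attention.

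First, the expectation over $\mathcal D_{\mathrm{adapt}}$ commutes with the infimum term, because the infimum is deterministic conditional on pretraining. The baseline risks also carry no outer expectation, consistent with the convention stated for Fixed and Loop.

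Second, all four risks must be finite so that the subtractions in the identity are legitimate; Assumption~\ref{ass:extrapolation} guarantees this.

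Finally, the argument never invokes Assumption~\ref{ass:size-transfer}. All cross-size behavior is absorbed into the signed quantity $\Delta_{\mathsf S}$, so the proposition holds for fine-tuning and steering schemes alike, at either depth.
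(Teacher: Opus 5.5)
Your proposal is correct and follows the paper's proof essentially step for step: baseline inclusion gives $\operatorname{Gap}_{\mathsf S}\ge0$, and the definition of $\Delta_{\mathsf S}$ gives the same exact identity. The expected empirical-risk-minimizer bound then closes the argument; the only cosmetic difference is that the paper re-derives that bound inline from the uniform deviation $Z_{\mathsf S}$ with $\E[Z_{\mathsf S}]\le\gamma_{\mathsf S}$, whereas you cite Eq.~\ref{eq:erm} directly.
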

Adaptation improves on its baseline when gain exceeds estimation cost and cross-size loss of benefit. The unknown shift $\Delta_{\mathsf S}$ can be bounded using local-error transfer and correction magnitude (App.~\ref{app:shift}), while changing depth at deployment additionally requires joint size-and-depth transfer control (App.~\ref{app:depth-schedules}).

\subsubsection{LFS: adaptation accuracy and parameter efficiency}
\label{sec:lfs-discussion}
LFS combines the information reach of looping with controller-and-readout adaptation. Its two comparisons answer different questions: comparison with Loop measures the benefit of adaptation, while comparison with Loop-Tune assesses accuracy relative to the trainable parameter budget.

\paragraph{Improvement over unadapted looping.}
Identity initialization places Loop within the LFS hypothesis class.
Proposition~\ref{prop:adapt} therefore gives the sufficient condition
\begin{equation}
    \operatorname{Gap}_{\mathrm{LFS}}
 >2\gamma_{\mathrm{LFS}}+\Delta_{\mathrm{LFS}}
\end{equation}
for strictly lower expected deployment risk than Loop. In words, the correction achievable by the controller and readout must exceed the finite-sample estimation cost and the loss of benefit across sizes.
Lemma~\ref{lem:accum} evaluates alignment of the complete controlled update, including the controller's effect on the frozen processor.

\paragraph{Trainable parameter budget.}
The counts for the six schemes are
\begin{center}
\begin{tabular}{@{}ll@{}}
\toprule
Schemes & Trainable parameters during adaptation\\
\midrule
Fixed, Loop & $0$\\
Fixed-Tune, Loop-Tune & $d_E+d_R+d_D$\\
Fixed-Steer, LFS & $d_\phi+d_D$\\
\bottomrule
\end{tabular}
\end{center}
Shared weights are counted once, independently of the number of iterations; $d_\phi$ includes any learned context embeddings.
Thus LFS uses the fraction
\[
 \frac{d_\phi+d_D}{d_E+d_R+d_D}
\]
of Loop-Tune's trainable budget. The saving is substantial when the controller is small relative to the frozen encoder and processor.
It reduces parameter-gradient and optimizer-state storage; total adaptation time and memory also depend on unrolled computation.

\paragraph{Adaptation data and size gaps.}
The comparison also depends on the adaptation-data budget and the size changes from pretraining to adaptation and deployment.
With few labeled adaptation graphs, LFS is attractive when its compact controller provides sufficient correction; a smaller loss-class complexity also reduces its estimation bound.
The ratio $N_{\mathrm{adapt}}/N_{\mathrm{train}}$ indicates how far adaptation extends beyond pretraining: LFS is suitable when the pretrained processor remains useful, whereas Loop-Tune offers broader updates when substantial correction is needed.
The ratio $N_{\mathrm{OOD}}/N_{\mathrm{adapt}}$ describes the remaining deployment size gap, over which either scheme's correction must transfer. These ratios contextualize the comparison rather than determine its outcome (App.~\ref{app:lfs-comparison}).


\section{Experimental Results}
\label{sec:experiments}
\renewcommand{\topfraction}{0.95}
\renewcommand{\bottomfraction}{0.6}
\renewcommand{\textfraction}{0.03}
\renewcommand{\floatpagefraction}{0.9}
\setcounter{topnumber}{3}
\setcounter{totalnumber}{4}

Our experiments answer three questions: (1) whether the task classes in Definition~\ref{def:classes} behave as predicted, (2) whether a looped model uses additional depth as graph size increases, and (3) when freeze-and-steer (LFS) is a better option than fine-tuning the looped model (Loop-Tune). We address the first two questions on synthetic tasks, and the third on both the synthetic tasks and a real-world chip-placement problem.
Our code, configuration files, and instructions to reproduce all
experiments are available at
\url{https://anonymous.4open.science/r/looped-gnn-size-gen-1F27}.
\subsection{Synthetic Dataset}
\label{sec:exp-synthetic}

\textbf{Tasks.}
We generate graphs from the random geometric model where $N$ points are placed
uniformly in the unit cube in $\mathbb{R}^d$, where two
points are connected when they are closer than a set radius $r$. The generated graphs keep the
average degree constant, so that the graph diameter grows as $N^{1/d}$.
On each graph we construct a node-level target in the iterative-operator
form of Definition~\ref{def:operator-form}. The node inputs $x_N$ are
encoded by a fixed linear map
$E^\star$. The update function is: 
\begin{equation*}
U_N(h) = (1-\alpha)\,\tilde A_N h + \alpha\, b(x_N),
\end{equation*}
with $\tilde A_N$ the symmetric normalized adjacency and $b$ a fixed
linear map, is applied $T^\star(N)$ times and then another linear map $D^\star$
produces the target values. This update contracts at rate
$\rho_N = 1-\alpha$ exactly, so $\alpha$ gives us direct
control over the contraction rate.

We construct four task families that are
identical except for how $T^\star$ grows with the graph size:
\textsc{const} keeps $T^\star = K_0$ fixed (class $\mathcal{F}_1$),
\textsc{log} grows it as $\log N$ (class $\mathcal{F}_2^{\log}$),
\textsc{poly} as $N^{1/2d}$, and \textsc{diam} as $N^{1/d}$, i.e.\
proportionally to the graph diameter (class $\mathcal{F}_3$). All four
families are normalized to the same horizon $T^\star = K_0 = 5$ at the
pretraining size $N_{\mathrm{ref}} = 200$.

\textbf{Protocol.}
Following the size ordering of Assumption~\ref{ass:extrapolation}, all
models are pretrained on small graphs ($N\in[100,200]$), adapted
on graphs a few times larger ($N\in[350,700]$), and then evaluated on
much larger held-out graphs ($N\in[10^3,2\times10^4]$). We compare the four schemes of
Sec.~\ref{sec:schemes}: \emph{Fixed} (implemented, as in standard
practice, with a distinct set of weights per layer; App.~\ref{app:exp-protocol}),
\emph{Loop}, \emph{Loop-Tune} and \emph{LFS} and put the results of the other two variants (\emph{Fixed-Tune} and
\emph{Fixed-Steer}) in App.~\ref{app:exp-tiers}. For the looped schemes, the deployment depth
at each test size is selected on the held-out validation graphs. We
further compare two pretraining objectives. Under \emph{target
supervision}, the standard machine learning end-to-end optimization approach, every iterate is
trained toward the final target only. Under \emph{operator-aligned}
supervision, each $k$-th iterate is trained toward the $k$-step partial
computation $D^\star U_N^{k} E^\star$. Graph sizes,
depths, parameter counts, and the depth-selection rule are given in
App.~\ref{app:exp-protocol}. Results average five random seeds if not otherwise stated.

\textbf{Fixed-radius error floor.}
We first verify that the constructed task families have the expected
dependency-radius behavior. As graph size increases, the fixed-radius
truncation error remains zero for \textsc{const} but increases for the
three growing-radius families, while its decay with radius is consistent
with the contraction analysis. Full results are in
App.~\ref{app:exp-primitives}.

\begin{figure*}[t]
  \centering
  \includegraphics[width=0.92\textwidth]{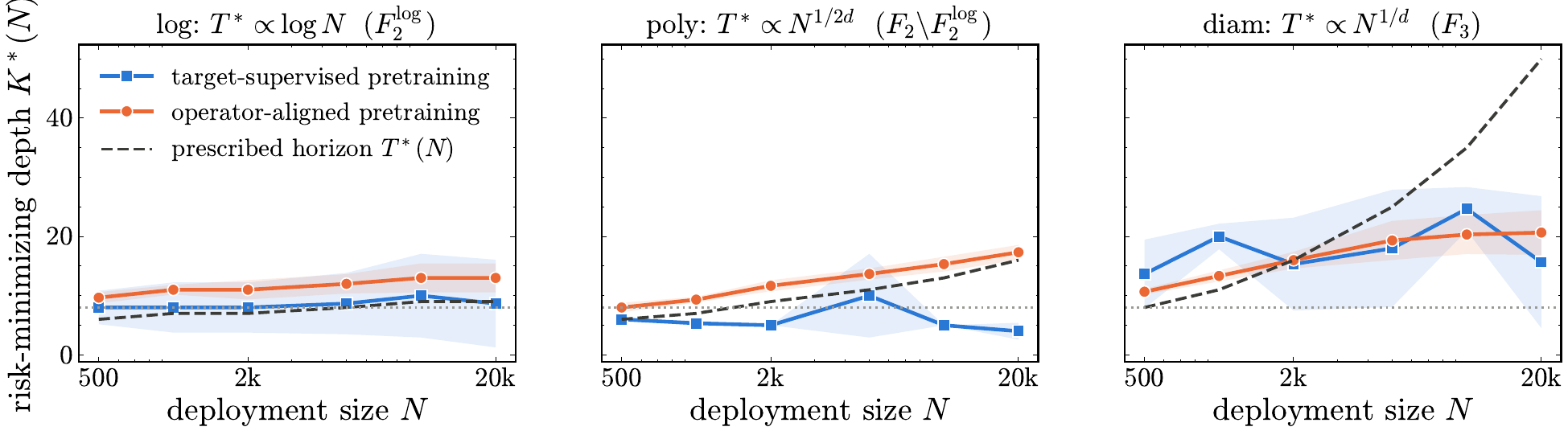}
  \caption{Risk-minimizing deployment depth as graph size increases.
  Operator-aligned pretraining tracks the prescribed task horizon
  $T^\star(N)$, whereas standard target supervision remains near the
  pretraining depth. Shading denotes variability across seeds; dotted
  line: the largest pretraining depth.}
  \label{fig:hier-kstar}
\end{figure*}

\textbf{Aligned processor.}
The horizon $T^\star(N)$ is known by construction, so we can check the
depth behavior that Proposition~\ref{prop:resolution-depth} suggests.
The proposition bounds the depth that the task computation needs, which
grows with the horizon $T^\star(N)$; if the learned processor
approximates the task update, the depth that minimizes test risk should
grow the same way.
Fig.~\ref{fig:hier-kstar} shows that whether this happens in fact depends on
how the model was pretrained. With operator-aligned pretraining, the optimal
depth grows with $N$ and tracks the prescribed horizon closely. With standard end-to-end target
supervision, the optimal depth stays at the pretraining depth in every
family, and risk rises beyond it. This shows that training end-to-end with the final target will make the processor only predicts in the pretraining horizon, not to approximate the
true update $U_N$. Therefore the ability to exactly extend depth at much larger graphs requires both looped architecture and 
aligned training, which is the role the
alignment error $\varepsilon_B$ plays in Lemma~\ref{lem:master}.
On \textsc{diam}, the chosen depth stops growing near $2.5\times$ the
pretraining depth instead of following the horizon to $T^\star = 50$.
The reason is that the learned update only approximates $U_N$, so every
additional step adds a small error, while the benefit of one more step
shrinks geometrically.

\begin{figure*}[t]
  \centering
  \includegraphics[width=0.75\textwidth]{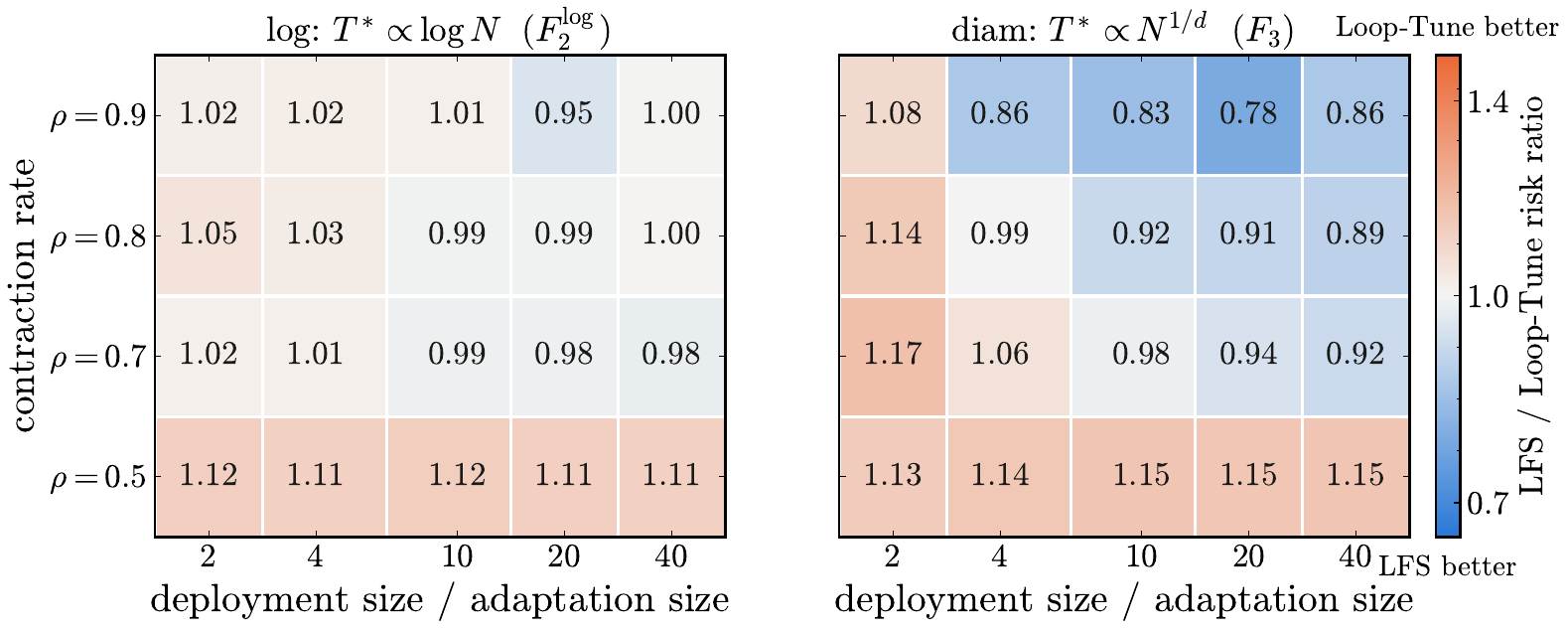}
  \caption{Risk ratio of LFS to Loop-Tune across contraction rates
  $\rho$ and deployment-to-adaptation size ratios, under
  target-supervised pretraining. Values $<1$
  indicate that LFS is more accurate. Results are averaged over five
  seeds.}
  \label{fig:hier-phase}
\end{figure*}

\textbf{Loop-Tune versus LFS.}
Fig.~\ref{fig:hier-phase} compares the two adaptation schemes across
the two quantities that the analysis of Sec.~\ref{sec:analysis}
identifies: how far the deployment size lies beyond the
adaptation sizes, and how slowly the update contracts. Neither scheme
dominates. Loop-Tune is more
accurate near the adaptation sizes and on tasks whose relevant
neighborhood is effectively small ($\rho = 0.5$, and the
$\mathcal{F}_1$ family at every $\rho$). LFS outperforms it on the two
families whose radius grows fastest (\textsc{poly} and \textsc{diam})
once the deployment size exceeds the adaptation size by roughly a factor
of ten at $\rho \ge 0.7$, by up to $22\%$ at $\rho = 0.9$; on
\textsc{log} the two stay within a few percent of each other.
Consistent with Proposition~\ref{prop:adapt}, the deciding factor is
whether adaptation would disturb the processor's behavior across sizes. 
When we instead pretrain and adapt with operator-aligned supervision, Loop-Tune wins
at every setting. The two main effects therefore do not occur together:
depth extension requires operator-aligned pretraining, and there
Loop-Tune is the better adaptation, while LFS's accuracy advantage
appears under target supervision, where the loop itself adds little once
the model is adapted (App.~\ref{app:exp-tiers}). What LFS keeps in both
regimes is the small adaptation budget. Its accuracy advantage also
returns when the adaptation data are scarce
(App.~\ref{app:exp-fsft}) or shifted: when we deliberately adapt on structurally
mismatched graphs, Loop-Tune degrades severely while LFS
is affected far less, winning in every seed at every size, by up to a
factor of $2.4$ (App.~\ref{app:exp-a14}). The complete study is in
App.~\ref{app:experiments-full}.

\subsection{Real-world Chip Placement}
\label{sec:exp-chip}

We now turn to a real-world task called very-large-scale
integration (VLSI) chip placement problem. In this problem, A circuit is given as a \emph{netlist},
a hypergraph $H = (V, E)$ whose nodes $V$ are the circuit cells and whose
hyperedges $E$ are the nets (wires) connecting them. Placement assigns
each cell a position on the two-dimensional die area, so that cells do
not overlap and connected cells stay close, minimizing objectives such as
total wirelength and congestion. We treat placement as node-level
regression: given the netlist, the model predicts each cell's
coordinates, with reference placements as supervision. A placement network with
a looped processor is \emph{pretrained on synthetic netlists} of at most
$500$ cells. It is then adapted on designs with $500$--$1{,}000$ cells
and evaluated on substantially larger designs, up to $2{,}400$--$2{,}800$
cells that largely beyond the adaptation range. We
consider two settings. In the first, adaptation and evaluation also use
synthetic netlists, so only the design size shifts. In the second, the
same pretrained model is adapted and evaluated on real
netlists from the 2023 MLCAD FPGA macro-placement contest
benchmark~\cite{bustany2023mlcad}, processed with the DE-HNN feature
pipeline~\cite{luo2024dehnn}. LFS trains the controller and the readout, which combined as about $2.9\%$ of the parameters.
Every adapted result is the mean of three independent adaptation runs
from the same pretrained network; the seed variation is at most $.05$
RMSE, far below every gap we discuss.

\begin{table}[!t]
  \centering
  \caption{Chip placement, physical RMSE (mean~$\pm$~sd over 3
  adaptation runs; lower is better; best per column in bold), with trainable parameters. Looped
  schemes deployed at $K{=}2$; `params' counts trained parameters;
  rightmost column: real MLCAD netlists.}
  \label{tab:chipgen}
  \scriptsize
  \setlength{\tabcolsep}{1.5pt}
  \resizebox{\columnwidth}{!}{%
  \begin{tabular}{lrccccc}
    \toprule
    & & \multicolumn{4}{c}{synthetic netlists, cells per design} & real \\
    \cmidrule(lr){3-6}
    scheme & params & 0.4--0.6k & 0.8--1k & 1.8--2k & 2.4--2.8k & MLCAD \\
    \midrule
    Fixed       & 0 & 1.51 & 3.00 & 6.17 & 8.50 & 7.18 \\
    Fixed-Tune  & 5.85M & 1.48{\tiny$\pm$.00} & 2.81{\tiny$\pm$.00} & 5.59{\tiny$\pm$.00} & 7.54{\tiny$\pm$.01} & 6.78{\tiny$\pm$.00} \\
    Loop-Tune   & 5.85M & \textbf{1.11}{\tiny$\pm$.01} & \textbf{1.86}{\tiny$\pm$.00} & \textbf{3.69}{\tiny$\pm$.02} & \textbf{5.30}{\tiny$\pm$.04} & \textbf{6.01}{\tiny$\pm$.05} \\
    LFS         & \textbf{0.17M} & 1.27{\tiny$\pm$.01} & 2.06{\tiny$\pm$.01} & 3.95{\tiny$\pm$.02} & 5.61{\tiny$\pm$.01} & 6.24{\tiny$\pm$.02} \\
    \bottomrule
  \end{tabular}}
\end{table}

Tab.~\ref{tab:chipgen} reports the results. The loop matters most: on
the largest synthetic designs, the two looped adaptation schemes cut the
error of Fixed-Tune by $25$--$30\%$ (Loop-Tune $5.30$ and LFS
$5.61$ versus $7.54$). 
On the synthetic designs, where
the adaptation and deployment data differ only in size, Loop-Tune is the most accurate at every size while LFS stays close using only $2.9\%$ of its trainable parameters. 
The same ordering holds on real MLCAD netlists under additional distribution shift: Loop-Tune achieves $6.01$ and LFS $6.24$,
both below Fixed-Tune's $6.78$. 
Freezing therefore beats fixed-depth fine-tuning under
the shift.
In practice, adaptation must balance predictive accuracy with
resource costs. LFS offers a favorable accuracy--parameter trade-off:
its error is within $4\%$ of Loop-Tune's while reducing the trainable
parameter budget by $97.1\%$. This makes LFS an attractive option
when adaptation resources are limited.

\section{Conclusion}
Looped GNNs enable graph-size generalization by reusing learned local updates beyond training depth. Through six schemes, we distinguish the contributions of depth extension and adaptation, and introduce LFS to steer iterative reuse with fewer trainable parameters.
Our theory and experiments identify conditions supporting these benefits. The analysis focuses on contractive dynamics with controlled alignment and cross-size transfer. Future work will broaden these conditions, develop label-free deployment-depth selection, and evaluate larger chip designs.

\clearpage

\bibliography{references}
\bibliographystyle{ieeetr}

\clearpage
\appendix
\thispagestyle{empty}

\onecolumn

\section{Examples of dependency-radius regimes}\label{app:regime-examples}

\subsection{What the classification measures}
For a uniformly sampled node $v$ in $G_N\sim\mathcal P_N$, the
effective radius $L_\epsilon(N)$ is the smallest integer $k$ for which
the optimal squared prediction error using $B_k(G_N,v)$ is at most
$\epsilon^2$. In the notation of the main analysis,
\[
 \err_N(k)=\E\!\left[\left\|f_N^\star(G_N,v)
 -\E[f_N^\star(G_N,v)\mid\mathscr B_{N,k}]\right\|^2\right].
\]
We compare this radius with a deterministic diameter scale $D_N$:
bounded radius gives $\mathcal F_1^{(\epsilon)}$;
$L_\epsilon(N)\to\infty$ with $L_\epsilon(N)=o(D_N)$ gives
$\mathcal F_2^{(\epsilon)}$; and
$L_\epsilon(N)=\Theta(D_N)$ gives $\mathcal F_3^{(\epsilon)}$.
Thus the classification concerns the information needed at a fixed
accuracy, rather than the number of steps in a chosen implementation.
All examples below assume finite second moments. Neighborhoods contain
the induced edges and all specified node and edge attributes, with
no additional global summaries. The predictor may depend on $N$,
as permitted by the size-indexed definition.

\subsection{Bounded-local targets: $\mathcal F_1^{(\epsilon)}$}
\paragraph{Degree and local clustering.}
The degree of a node is determined by its one-hop neighborhood, so
$\err_N(1)=0$ and $L_\epsilon(N)\le1$. Its local clustering coefficient
is the number of edges between its neighbors divided by the number of
possible such edges, with value zero for degree less than two.
It is also determined by the induced one-hop neighborhood.
If a neighborhood representation omits edges between boundary nodes,
one extra hop suffices; either convention gives bounded radius.

\paragraph{Fixed-hop label counts.}
For a fixed integer $r$ and a node label $\lambda_u$, the target
\[
 f_N^\star(G_N,v)=\sum_{u\in B_r(G_N,v)}
                    \mathbf 1\{\lambda_u=1\}
\]
has $L_\epsilon(N)\le r$ for every $N$. Its output magnitude may grow
on high-degree graphs, but its required radius stays bounded.
The effective radius can be smaller than $r$ when the distribution
makes some labels predictable or their effect falls below tolerance.

\subsection{Growing sub-diameter targets: $\mathcal F_2^{(\epsilon)}$}
\paragraph{An explicit neighborhood statistic.}
Let $G_N$ have the topology of the cycle $C_N$ and independent
standard Gaussian node attributes $X_u$. Its diameter is
$D_N=\lfloor N/2\rfloor$. Choose an integer radius $k_N\to\infty$
with $k_N=o(N)$, and define the variance-normalized local sum
\[
 f_N^\star(G_N,v)=\frac{1}{\sqrt{2k_N+1}}
                      \sum_{u\in B_{k_N}(G_N,v)}X_u.
\]
For sufficiently large $N$, the ball contains $2k_N+1$ nodes.
Conditional on $B_k(G_N,v)$, the attributes outside the observed
ball are still independent with mean zero and variance one.
The optimal predictor sums the observed terms, and hence
\[
 \err_N(k)=\left(1-\frac{2k+1}{2k_N+1}\right)_+,
 \qquad k\ge0,
\]
where $(x)_+=\max\{x,0\}$. For any fixed $0<\epsilon<1$,
\[
 L_\epsilon(N)=
 \max\!\left\{0,
 \left\lceil\frac{(1-\epsilon^2)(2k_N+1)-1}{2}\right\rceil\right\}
 =\Theta(k_N).
\]
Thus this family belongs to $\mathcal F_2^{(\epsilon)}$.
For example, $k_N=\lceil\log N\rceil$ gives logarithmic reach,
whereas $k_N=\lfloor N^{1/2}\rfloor$ gives polynomial but
sub-diameter reach. Both have unit output variance: their increasing
radius is not an artifact of growing output magnitude.

\paragraph{Finite-horizon local dynamics.}
A node-wise encoder and decoder with $T_N$ one-hop updates give
$L_\epsilon(N)\le T_N$ for an exactly represented finite-horizon
target. To classify such a target as growing sub-diameter, one must
also establish a growing information requirement. For example, if
$T_N\to\infty$, $T_N=o(D_N)$, and for a constant $c\in(0,1)$,
\[
 \err_N(\lfloor cT_N\rfloor)>\epsilon^2
 \quad\text{for all sufficiently large }N,
\]
then $cT_N<L_\epsilon(N)\le T_N$, giving
$L_\epsilon(N)=\Theta(T_N)$. This lower-error condition expresses
that information arriving late in the computation matters at the
chosen accuracy. A growing simulation horizon alone supplies only
the upper bound.

\subsection{Diameter-scale targets: $\mathcal F_3^{(\epsilon)}$}
\paragraph{A global aggregate broadcast to every node.}
On the same attributed cycle, consider
\[
 f_N^\star(G_N,v)=N^{-1/2}\sum_{u\in V(G_N)}X_u.
\]
The value is identical at every node and has variance one. A radius-$k$
ball reveals $\min\{N,2k+1\}$ attributes, so independence gives
\[
 \err_N(k)=1-\frac{\min\{N,2k+1\}}{N}.
\]
For fixed $0<\epsilon<1$,
\[
 L_\epsilon(N)=
 \max\!\left\{0,
 \left\lceil\frac{N(1-\epsilon^2)-1}{2}\right\rceil\right\}
 =\Theta(N)=\Theta(D_N).
\]
Unlike the preceding local sum, this target requires observing a
nonvanishing fraction of the graph to reach the tolerance.

\paragraph{Distance from a marked source.}
Let $G_N$ be an unweighted cycle with a source $s$ sampled uniformly
and marked by a binary node attribute. Take
$f_N^\star(G_N,v)=\operatorname{dist}(s,v)$, and give the predictor
no coordinates or other attributes that reveal the source location.
For $k\le\lfloor N/8\rfloor$, the source lies outside $B_k(v)$
with probability bounded away from zero. On that event the observed
rooted neighborhood contains no source mark and leaves the source
uniform over the unobserved nodes. Its distance to $v$ ranges from
$k+1$ to $\lfloor N/2\rfloor$, with two nodes per distance except
possibly at the antipode. Its conditional variance is therefore
$\Theta(N^2)$, uniformly over these $k$. Consequently
\[
 \err_N(\lfloor N/8\rfloor)\ge cN^2
\]
for some $c>0$ and all sufficiently large $N$. Observing radius
$D_N$ reveals the source and its exact distance, giving
$L_\epsilon(N)=\Theta(D_N)$ for every fixed $\epsilon>0$.
The key is unresolved source location, not merely a large typical
source--query distance. Coordinates or other informative attributes
can change this conclusion.

\paragraph{Node eccentricity.}
The eccentricity $\max_u\operatorname{dist}(v,u)$ has a global
definition, but its regime depends on the graph distribution. On
$C_N$ it equals $\lfloor N/2\rfloor$ at every node; a size-aware
constant predictor has zero error even at radius zero.
For a distribution supported on graphs of diameter at most $CD_N$,
a sufficient diameter-scale condition is
\[
 \err_N(\lfloor cD_N\rfloor)>\epsilon^2
\]
eventually for some constants $C>c>0$. The full-graph upper bound
and this conditional-uncertainty lower bound then imply
$L_\epsilon(N)=\Theta(D_N)$. On general random graph families,
diameter concentration alone does not replace the needed error control
on rare graphs.

\subsection{Normalization, tolerance, and iterative structure}
\paragraph{A global mean can be bounded-local.}
Replacing the variance-normalized global sum above by the mean
$N^{-1}\sum_u X_u$ gives
\[
 \err_N(0)=\frac{N-1}{N^2}\longrightarrow0.
\]
Thus $L_\epsilon(N)=0$ eventually for every fixed $\epsilon>0$,
even though the exact target uses every node. This illustrates why
the target normalization and tolerance must be specified along with
the graph distribution.

\paragraph{Damped dynamics can have bounded effective radius.}
On a cycle, let $P$ be the neighbor-averaging operator and consider
\[
 h_{t+1}=\alpha Ph_t+(1-\alpha)X,\qquad h_0=0,
 \qquad 0<\alpha<1.
\]
Here $X$ is a fixed input attribute accessible at each node, and
$\|P\|\le1$ in the per-node norm. For the equilibrium target
$h_\infty=(1-\alpha)\sum_{j\ge0}\alpha^jP^jX$, the tail satisfies
\[
 a_N(k)=\E\|h_\infty-h_k\|_N^2
 \le\alpha^{2k}\E\|X\|_N^2=\alpha^{2k}.
\]
Consequently $L_\epsilon(N)=O(1)$ for fixed tolerance, despite the
infinite task horizon. A growing effective radius requires further
size dependence, such as damping approaching one, or a different
signal scale or accuracy requirement, together with an appropriate
information lower bound.

\paragraph{Scope.}
These examples illustrate the distribution-dependent radius regimes.
Membership in a regime does not itself assert a contractive iterative
representation with fixed hidden dimension. In particular, the explicit
neighborhood sums and distance example establish information requirements;
the operator-form and stability assumptions used in the performance
analysis must be checked separately for a chosen implementation.

\section{Proofs of the main statements}\label{app:proofs}

\subsection{Information projection}\label{app:information}
\begin{proof}[Justification of Eq.~\ref{eq:projection}]
Conditional expectation is the orthogonal projection onto the
square-integrable $\mathscr B_{N,k}$-measurable vectors.
For any such predictor $g$, the cross term in
$g-Y_N=(g-m_{N,k}^\star)+(m_{N,k}^\star-Y_N)$ has expectation zero.
Expanding the squared norm gives Eq.~\ref{eq:projection}.
Taking $g=\widetilde Y_{N,k}$ gives
\begin{equation}\label{eq:two-errors}
 a_N(k)=\err_N(k)+\E\|\widetilde Y_{N,k}-m_{N,k}^\star\|^2
 \ge\err_N(k),
\end{equation}
since node-wise encoding and decoding with $k$ one-hop updates
produce a radius-$k$ local prediction.
\end{proof}

\subsection{Proof of Lemma~\ref{lem:P3}: estimation from adaptation graphs}
\label{app:estimation-proof}
\begin{proof}[Proof of Lemma~\ref{lem:P3}]
Condition on the pretrained model and fix a scheme $\mathsf S$ and
depth $K$. The hypothesis class is consequently fixed before sampling
the adaptation graphs. Abbreviate
\[
 n=N_{\mathrm{adapt}},\quad m=m_{\mathrm{adapt}},\quad
 B=B_{\mathrm{adapt}},\quad p=p_{\mathsf S}\ge1,
 \quad\mathcal H=\mathcal H_{\mathsf S}.
\]
Let $G_1,\ldots,G_m$ be i.i.d. from $\mathcal P_n$ and use the
measurability conditions on classes and suprema stated in the setup.
If $B=0$, all losses vanish and both claims are immediate; suppose
$B>0$ below.

\paragraph{Step 1: reduce to a bounded graph-level function class.}
For $g\in\mathcal H$, define
\[
 \ell_g(G):=\frac1n\sum_{v\in V(G)}
       \|g_n(G,v)-f_n^\star(G,v)\|^2.
\]
Then $0\le\ell_g\le B$, and
\[
 \risk_n(g)=\E_G\ell_g(G),\qquad
 \widehat\risk_n(g)=\frac1m\sum_{i=1}^m\ell_g(G_i).
\]
Set $\mathcal F=\{\ell_g/B:g\in\mathcal H\}$; this is a
$[0,1]$-valued class of pseudo-dimension at most $p$, since positive
scaling preserves pseudo-dimension. Write $Pf=\E_G f(G)$ and
$P_mf=m^{-1}\sum_i f(G_i)$. It suffices to show
\begin{equation}\label{eq:est-normalized-goal}
 \E\sup_{f\in\mathcal F}|Pf-P_mf|\le C\sqrt{p/m}.
\end{equation}
Each observation here is an entire graph. Dependence among nodes of
the same graph is therefore allowed.

\paragraph{Step 2: symmetrize the uniform deviation.}
Let $G_1',\ldots,G_m'$ be an independent sample from the same
distribution, with empirical measure $P_m'$. Jensen's inequality
and the supremum give
\[
 \E\sup_{f\in\mathcal F}|Pf-P_mf|
 \le\E\sup_{f\in\mathcal F}|P_m'f-P_mf|.
\]
Let $\sigma_1,\ldots,\sigma_m$ be independent Rademacher signs,
independent of both graph samples. Exchangeability of each pair
$(G_i,G_i')$ and the triangle inequality imply
\begin{align}
 \E\sup_{f\in\mathcal F}|Pf-P_mf|
 &\le\E\sup_{f\in\mathcal F}
       \left|\frac1m\sum_{i=1}^m
          \sigma_i\bigl(f(G_i')-f(G_i)\bigr)\right| \le2\E_{G,\sigma}\sup_{f\in\mathcal F}
       \left|\frac1m\sum_{i=1}^m\sigma_i f(G_i)\right|.
       \label{eq:est-symmetrization}
\end{align}
To handle the absolute value, use
$\mathcal F_{\pm}=\mathcal F\cup(-\mathcal F)\cup\{0\}$.
The last supremum equals
$\sup_{f\in\mathcal F_{\pm}}m^{-1}\sum_i\sigma_i f(G_i)$.

\paragraph{Step 3: control complexity using pseudo-dimension.}
For a fixed graph sample, equip function values with the empirical
$L^2$ pseudometric
\[
 d_m(f,h)=\left(\frac1m\sum_{i=1}^m
                         (f(G_i)-h(G_i))^2\right)^{1/2}.
\]
Let $\mathcal N(t,\mathcal F,L^2(Q))$ denote the covering number at
radius $t$. The uniform VC-subgraph covering bound for a bounded class
of pseudo-dimension at most $p$ gives, for every probability measure
$Q$ and $0<t\le1$,
\begin{equation}\label{eq:est-entropy}
 \log\mathcal N(t,\mathcal F,L^2(Q))
 \le C_0p\log(C_1/t),
\end{equation}
with universal constants. This is the standard VC-subgraph entropy
bound~\cite[Theorem~8.2]{wellner2005empirical}.
Since
\[
 \mathcal N(t,\mathcal F_{\pm},d_m)
 \le2\mathcal N(t,\mathcal F,d_m)+1,
\]
the same logarithmic entropy bound, with adjusted universal constants,
holds for $\mathcal F_{\pm}$ for $0<t\le2$. Here $p\ge1$ absorbs
the additional constant and monotonicity extends the bound beyond
$t=1$.

\paragraph{Step 4: integrate the entropy bound.}
Conditionally on the sample, the process
$X_f=m^{-1}\sum_i\sigma_i f(G_i)$ is centered, $X_0=0$, and satisfies
\[
 \E_\sigma\exp\{\lambda(X_f-X_h)\}
 \le\exp\left\{\frac{\lambda^2d_m(f,h)^2}{2m}\right\},
 \qquad\lambda\in\R.
\]
Indeed, the left side is a product of hyperbolic cosines, and
$\cosh(x)\le e^{x^2/2}$. Thus the increments are sub-Gaussian in
the metric $d_m/\sqrt m$. Dudley's entropy bound, using
$\operatorname{diam}(\mathcal F_{\pm},d_m)\le2$, gives
\begin{align*}
 \E_\sigma\sup_{f\in\mathcal F_{\pm}}X_f
 &\le\frac{C_2}{\sqrt m}\int_0^2
       \sqrt{\log\mathcal N(t,\mathcal F_{\pm},d_m)}\,dt\\
 &\le C_3\sqrt{\frac p m}\int_0^2
       \sqrt{\log(C_4/t)}\,dt
 \le C_5\sqrt{\frac p m},
\end{align*}
where $C_4>2$ can be chosen universal. The integral is finite:
with $u=\log(C_4/t)$ it equals
$C_4\int_{\log(C_4/2)}^\infty\sqrt u\,e^{-u}\,du$.
This bound is uniform over the graph sample. Combining it with
\eqref{eq:est-symmetrization} proves \eqref{eq:est-normalized-goal}.
Multiplying by $B$ yields
\[
 \E\sup_{g\in\mathcal H}
 |\risk_n(g)-\widehat\risk_n(g)|
 \le C_{\mathrm{gen}}B\sqrt{p/m},
\]
which is the asserted uniform-deviation bound.

\paragraph{Step 5: derive the empirical-minimization guarantee.}
Let $Z(\mathcal D)=\sup_{g\in\mathcal H}
|\risk_n(g)-\widehat\risk_n(g)|$ and let $f_{\mathsf S}$ be
an empirical risk minimizer. For every sample and every fixed
comparator $g\in\mathcal H$,
\begin{align*}
 \risk_n(f_{\mathsf S})
 &\le\widehat\risk_n(f_{\mathsf S})+Z(\mathcal D)\\
 &\le\widehat\risk_n(g)+Z(\mathcal D)
 \le\risk_n(g)+2Z(\mathcal D).
\end{align*}
Taking the infimum over comparators and then expectation gives
\[
 \E_{\mathcal D}\risk_n(f_{\mathsf S})
 \le\inf_{g\in\mathcal H_{\mathsf S}}\risk_n(g)
       +2C_{\mathrm{gen}}B_{\mathrm{adapt}}
             \sqrt{\frac{p_{\mathsf S}}{m_{\mathrm{adapt}}}},
\]
which proves the second claim. 
\end{proof}

\subsection{Deployment risk}\label{app:master}
For $\mathsf S\in\{\mathrm{Fixed},\mathrm{Loop}\}$, set $n_{\mathsf S}=N_{\mathrm{train}}$,
$\gamma_{\mathsf S}=0$, and let $\mathcal H_{\mathsf S}$ be the singleton
containing the pretrained predictor at its prescribed depth.
For $\mathsf S\in\{\text{Fixed-Tune},\text{Fixed-Steer},\text{Loop-Tune},\mathrm{LFS}\}$, set
$n_{\mathsf S}=N_{\mathrm{adapt}}$ and use the adaptation class and
$\gamma_{\mathsf S}$ from Lemma~\ref{lem:P3}.
The zero adaptation penalty leaves pretraining error in the source risk.
For all schemes, $K$ denotes their common source/deployment depth;
it is $K_{\mathrm{fix}}$ for the fixed-depth family.

\begin{lemma}[Deployment risk]\label{lem:master}
Under Assumption~\ref{ass:size-transfer}, suppose an adapted scheme
is an empirical risk minimizer satisfying Lemma~\ref{lem:P3}.
For each scheme at depth $K$,
\begin{align}\label{eq:master}
 \err_N(K)\le\E\risk_N(f_{\mathsf S})
 &\lesssim\inf_{g\in\mathcal H_{\mathsf S}}\risk_{n_{\mathsf S}}(g)
   +\gamma_{\mathsf S}+\tau_{\mathsf S}(N,n_{\mathsf S},K)+a_{n_{\mathsf S}}(K)+a_N(K).
\end{align}
The expectation is over adaptation data when used; the implicit
constant is universal. The same formula applies directly to the
singleton classes of Fixed and Loop.
\end{lemma}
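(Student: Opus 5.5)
The lower bound is immediate and I would dispose of it first. Every scheme at depth $K$ uses node-wise encoders, decoders and controllers together with $K$ one-hop processor applications. Hence $f_{\mathsf S}(G_N,v)$ is $\mathscr B_{N,K}$-measurable for every realization of the adaptation data. The projection identity \eqref{eq:projection} then gives $\risk_N(f_{\mathsf S})\ge\err_N(K)$ pointwise, and taking expectation over $\mathcal D_{\mathrm{adapt}}$ preserves the inequality.

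For the upper bound I would route everything through the local comparison quantity $\calA_{N,K}$ of Assumption~\ref{ass:size-transfer}. For any $g\in\mathcal H_{\mathsf S}$, insert $\widetilde Y_{N,K}$ and use $\|x+y\|^2\le 2\|x\|^2+2\|y\|^2$:
\[
 \risk_N(g)\le 2\calA_{N,K}(g)+2a_N(K),\qquad
 \calA_{n,K}(g)\le 2\risk_n(g)+2a_n(K),
\]
with $n=n_{\mathsf S}$. These two inequalities are the only places where $a_N(K)$ and $a_{n_{\mathsf S}}(K)$ enter. Chaining them through \eqref{eq:transfer}, which holds uniformly over the class, gives
\[
 \risk_N(g)\le 2\bigl(\calA_{n,K}(g)+\tau_{\mathsf S}(N,n,K)\bigr)+2a_N(K)\le 4\risk_n(g)+2\tau_{\mathsf S}+4a_n(K)+2a_N(K).
\]
Because the class is fixed before adaptation data are drawn (Assumption~\ref{ass:extrapolation}) and the bound is uniform in $g$, I can apply it to the random minimizer $g=f_{\mathsf S}$ and then take $\E_{\mathcal D_{\mathrm{adapt}}}$.

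It remains to control $\E\risk_{n}(f_{\mathsf S})$, which splits by scheme.

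\begin{itemize}
\item For the adapted schemes, $n=N_{\mathrm{adapt}}$, and \eqref{eq:erm} of Lemma~\ref{lem:P3} gives $\E\risk_n(f_{\mathsf S})\le\inf_{g}\risk_n(g)+2\gamma_{\mathsf S}$.
\item For Fixed and Loop, $\mathcal H_{\mathsf S}$ is the singleton pretrained predictor, $n=N_{\mathrm{train}}$ and $\gamma_{\mathsf S}=0$. The infimum is simply its source risk and no expectation is needed.
\end{itemize}

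Substituting either case yields \eqref{eq:master}, with universal constant $4$ absorbed into $\lesssim$.

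I expect the main obstacle to be bookkeeping rather than any hard inequality. One point is that Assumption~\ref{ass:size-transfer} must be invoked at a data-dependent $g$, which is legitimate only because \eqref{eq:transfer} holds uniformly over $\mathcal H_{\mathsf S}$. The other is checking that the singleton classes for Fixed and Loop satisfy the same assumption at $n=N_{\mathrm{train}}$. For Loop, this requires the deployment depth $K$ to be the same at source and target, as the convention before the lemma stipulates. Varying-depth schedules are excluded here and deferred to App.~\ref{app:depth-schedules}.
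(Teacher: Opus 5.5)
Your proposal is correct and matches the paper's proof essentially step for step. It uses the same chain $\risk_N(g)\le 2\calA_{N,K}(g)+2a_N(K)\le 4\risk_{n}(g)+4a_{n}(K)+2\tau_{\mathsf S}(N,n,K)+2a_N(K)$, substitutes the fitted predictor by uniformity, applies Eq.~\ref{eq:erm} (or the singleton case for Fixed and Loop), and gets the lower bound from the projection identity. The only slip is arithmetic: after substituting Eq.~\ref{eq:erm}, the estimation term carries coefficient $8\gamma_{\mathsf S}$ rather than $4$, which is still a universal constant.
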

The upper bound separates source approximation, estimation, transfer,
and task-computation errors. Lemma~\ref{lem:accum} can control source
approximation using a comparator in the class. This is an alternative
risk certificate, not an additional alignment penalty to add twice.

\begin{proof}[Proof of Lemma~\ref{lem:master}]
Fix $n=n_{\mathsf S}$. For each predictor in the class,
\begin{align*}
 \risk_N(g)
 &\le2\calA_{N,K}(g)+2a_N(K)\\
 &\le2\calA_{n,K}(g)+2\tau_{\mathsf S}(N,n,K)+2a_N(K)\\
 &\le4\risk_n(g)+4a_n(K)+2\tau_{\mathsf S}(N,n,K)+2a_N(K).
\end{align*}
Uniformity permits substitution of the fitted predictor. Estimation gives
\begin{align*}
 \E\risk_N(f_{\mathsf S})
 \le{}&4\inf_{g\in\mathcal H_{\mathsf S}}\risk_n(g)
       +8\gamma_{\mathsf S}+2\tau_{\mathsf S}(N,n,K)+4a_n(K)+2a_N(K).
\end{align*}
For singleton classes apply the deterministic inequality directly.
The lower bound follows from Eq.~\ref{eq:projection}, conditional
on each fitted predictor, and independence of the deployment graph.
The term $a_n(K)$ remains necessary when the common depth exceeds the source task's horizon; it then measures over-iteration.
\end{proof}

\subsection{Fixed-radius limitation}\label{app:fixed-radius}
\begin{proof}[Proof of Proposition~\ref{prop:ceiling}]
Condition on pretraining and, when used, the adaptation sample.
The fitted weights are now fixed. By induction, after $j$ one-hop
updates each node's state depends only on its rooted $j$-hop
neighborhood: the encoder is node-wise, each processor application
adds at most one hop, and the node-wise controller adds none.
Locally available history, iteration indices, and the deterministic
graph size do not enlarge this radius. The node-wise readout therefore
gives a $\mathscr B_{N,K_{\mathrm{fix}}}$-measurable prediction.

This argument applies to Fixed, to all-parameter Fixed-Tune, and to
controller-only Fixed-Steer, regardless of how their weights were fitted.
On a fresh independent graph the projection identity
Eq.~\ref{eq:projection} applies conditionally on those weights.
Averaging over the adaptation sample gives
\[
 \overline{\mathcal R}_{N}(\mathsf S)=\err_N(K_{\mathrm{fix}})
 +\E_{\mathcal D_{\mathrm{adapt}},G_N,v}
   \|f_{\mathsf S}(G_N,v)-m_{N,K_{\mathrm{fix}}}^\star\|^2.
\]
The second term is nonnegative, proving Eq.~\ref{eq:ceiling}.
For Fixed the outer adaptation expectation is omitted.
Since $L_\epsilon(N)\to\infty$, eventually every bounded
$K_{\mathrm{fix}}$ is smaller than $L_\epsilon(N)$ and therefore
$\err_N(K_{\mathrm{fix}})>\epsilon^2$.
\end{proof}

\subsection{Sufficient task-computation depth}\label{app:resolution-depth}
\begin{proposition}[Sufficient resolution depth]\label{prop:resolution-depth}
Under Assumption~\ref{ass:contraction}, suppose $D^\star$ is
$L_D^\star$-Lipschitz and
\begin{equation}\label{eq:amplitude}
 A_N:=\frac{L_D^\star}{1-\rho_N}
 \left(\E_G\|U^\star_{G_N}(h_0^\star)-h_0^\star\|_N^2\right)^{1/2}
 <\infty.
\end{equation}
For $0\le k\le T_N^\star$,
$\err_N(k)\le a_N(k)\le A_N^2\rho_N^{2k}$.
For fixed $\epsilon>0$, define
\[
 K_{\mathrm{res}}(N,\epsilon)
 :=\min\{k\in\mathbb N_0:k\le T_N^\star,\ a_N(k)\le\epsilon^2\}.
\]
Then
\begin{equation}\label{eq:sufficient-depth}
 L_\epsilon(N)\le K_{\mathrm{res}}(N,\epsilon)
 \le\min\left\{T_N^\star,
 \left\lceil\frac{\log\max\{1,A_N/\epsilon\}}{|\log\rho_N|}
 \right\rceil\right\}.
\end{equation}
\end{proposition}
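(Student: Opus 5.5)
The plan is to get the geometric tail bound $a_N(k)\le A_N^2\rho_N^{2k}$ from a telescoping argument on the task iterates, and then read off the depth bounds from the definitions of $L_\epsilon(N)$ and $K_{\mathrm{res}}$.

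First, the inequality $\err_N(k)\le a_N(k)$ is already available. It is Eq.~\ref{eq:two-errors} (equivalently the remark after Eq.~\ref{eq:projection}), because $\widetilde Y_{N,k}$ is a $\mathscr B_{N,k}$-measurable radius-$k$ predictor.

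For the tail bound, use that this is a finite-horizon target, so $Y_N=\widetilde Y_{N,T_N^\star}$ and hence $a_N(k)=\E_G\|D^\star(h_k^\star)-D^\star(h_{T_N^\star}^\star)\|_N^2$. The Lipschitz property of $D^\star$ in per-node norms gives
\[
a_N(k)\le (L_D^\star)^2\,\E_G\|h_k^\star-h_{T_N^\star}^\star\|_N^2 .
\]
Next I bound the state difference on each graph $G_N$. Assumption~\ref{ass:contraction} applied to the consecutive iterates $h_{j+1}^\star=U^\star_{G_N}(h_j^\star)$ gives, by induction, $\|h_{j+1}^\star-h_j^\star\|_N\le\rho_N^j\|U^\star_{G_N}(h_0^\star)-h_0^\star\|_N$. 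Telescoping from $j=k$ to $T_N^\star-1$ and summing the geometric series yields
\[
\|h_{T_N^\star}^\star-h_k^\star\|_N\le\frac{\rho_N^k}{1-\rho_N}\,\|U^\star_{G_N}(h_0^\star)-h_0^\star\|_N .
\]
Squaring, taking graph expectations and inserting the definition of $A_N$ in Eq.~\ref{eq:amplitude} gives $a_N(k)\le A_N^2\rho_N^{2k}$. Here I treat $\rho_N$ as a deterministic, size-indexed constant, as the notation of the assumption suggests. If $\rho_N$ were allowed to depend on the graph, the factor $\rho_N^k/(1-\rho_N)$ would have to be kept inside the expectation, and $A_N$ would need to be redefined accordingly. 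This is the only real subtlety I expect; it is a bookkeeping point rather than a genuine obstacle.

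For Eq.~\ref{eq:sufficient-depth}, I proceed in three steps.
\begin{enumerate}
\item The set defining $K_{\mathrm{res}}$ is nonempty, because $a_N(T_N^\star)=0\le\epsilon^2$. Hence $K_{\mathrm{res}}$ is well defined and $K_{\mathrm{res}}\le T_N^\star$.
\item At $k=K_{\mathrm{res}}$ we have $\err_N(K_{\mathrm{res}})\le a_N(K_{\mathrm{res}})\le\epsilon^2$. Minimality in the definition of $L_\epsilon(N)$ then gives $L_\epsilon(N)\le K_{\mathrm{res}}$.
\item Set $k_0=\lceil\log\max\{1,A_N/\epsilon\}/|\log\rho_N|\rceil$. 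If $A_N\le\epsilon$, then $k_0=0$ and $a_N(0)\le A_N^2\le\epsilon^2$. Otherwise $\rho_N^{k_0}\le\epsilon/A_N$, so again $a_N(k_0)\le\epsilon^2$. Consequently, whenever $k_0\le T_N^\star$, $k_0$ is admissible and $K_{\mathrm{res}}\le k_0$; when $k_0>T_N^\star$, the bound $K_{\mathrm{res}}\le T_N^\star$ from the first step applies.
\end{enumerate}
Combining the two cases in the third step gives the stated minimum.
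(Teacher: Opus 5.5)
Your proof is correct and follows essentially the same route as the paper. Contractivity gives geometrically decaying increments, and telescoping to $T_N^\star$ with the decoder's Lipschitz bound yields $a_N(k)\le A_N^2\rho_N^{2k}$; both depth bounds then follow from $a_N(T_N^\star)=0$, the rounded threshold, and $\err_N(k)\le a_N(k)$. The differences are cosmetic: the paper applies the triangle inequality in $L^2$ rather than pathwise, and it also sketches the equilibrium case $T_N^\star=\infty$ via an $L^2$ limit, which lies outside the stated finite-horizon claim.
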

For equilibrium targets take $T_N^\star=\infty$ and assume the target
is the decoded $L^2$ limit of the task states. This proposition bounds
task-computation depth, not the depth minimizing implemented-model risk.

\begin{proof}[Proof of Proposition~\ref{prop:resolution-depth}]
Contractivity gives
\[
 \|h_{j+1}^\star-h_j^\star\|_N
 \le\rho_N^j\|h_1^\star-h_0^\star\|_N.
\]
Sum the increments from $j=k$ to $T_N^\star-1$ and apply the decoder's
Lipschitz bound and the triangle inequality in $L^2$. Thus
\begin{equation}\label{eq:truncation-envelope}
 \sqrt{a_N(k)}
 \le A_N\rho_N^k(1-\rho_N^{T_N^\star-k})
 \le A_N\rho_N^k.
\end{equation}
The rounded threshold in Eq.~\ref{eq:sufficient-depth} makes the last
bound at most $\epsilon$. If it exceeds the task horizon, use
$a_N(T_N^\star)=0$ instead. Locality gives
$\err_N(k)\le a_N(k)$, hence $L_\epsilon(N)\le K_{\mathrm{res}}$.
The case $A_N=0$ gives sufficient depth zero.
For equilibrium targets, assume the task states converge in $L^2$;
passing to the limit gives the result with $T_N^\star=\infty$ and
the factor $1-\rho_N^{T_N^\star-k}$ replaced by $1$.
\end{proof}

\subsection{Iteration versus fixed-depth adaptation}\label{app:iter}
\begin{proof}[Proof of Proposition~\ref{prop:iter}]
Conditional on pretraining, Loop changes only the number of applications
of $R_{\theta_0,G_N}$; it retains $E_0$ and $D_{\eta_0}$ and has no
adaptation-sample randomness. Let $K_N$ denote its prescribed deployment
depth and let $\mathcal U_{\mathrm{Loop},N}(K_N)$ be the right-hand
side of Eq.~\ref{eq:alignment-risk} specialized to Loop. By hypothesis,
\[
 \overline{\mathcal R}_{N}(\mathrm{Loop})\le\mathcal U_{\mathrm{Loop},N}(K_N)
 \le\epsilon^2.
\]
Proposition~\ref{prop:ceiling} gives
$\overline{\mathcal R}_{N}(\mathsf S)\ge\err_N(K_{\mathrm{fix}})$ for each of the three
fixed-depth schemes, including the adapted ones.
By the definition of $L_\epsilon(N)$, the condition
$K_{\mathrm{fix}}<L_\epsilon(N)$ implies
$\err_N(K_{\mathrm{fix}})>\epsilon^2$. Combining these inequalities proves
Eq.~\ref{eq:separation}.
\end{proof}

\paragraph{Quantitative comparison.}
More generally, whenever Lemma~\ref{lem:accum} applies to Loop at $K_N$,
subtracting its upper bound from the fixed-radius lower bound yields
\begin{equation}\label{eq:quantitative-separation}
 \overline{\mathcal R}_{N}(\mathsf S)-\overline{\mathcal R}_{N}(\mathrm{Loop})
 \ge\err_N(K_{\mathrm{fix}})-\mathcal U_{\mathrm{Loop},N}(K_N).
\end{equation}
For $K_{\mathrm{fix}}=O(1)$, if $L_\epsilon(N)\to\infty$ and
$\mathcal U_{\mathrm{Loop},N}(K_N)\le\epsilon^2$ for all sufficiently
large $N$, Proposition~\ref{prop:iter} gives eventual strict improvement.
The full risk bound includes encoder, accumulated update-alignment,
decoder, and task-computation errors. Reaching a small task-computation
error alone is insufficient; uniformly bounded per-step errors also
need not ensure the required accuracy, especially when $\rho_N\to1$.

\subsection{Conditional gains for all adaptation variants}
\label{app:adaptation-proof}
\begin{proof}[Proof of Proposition~\ref{prop:adapt}]
Fix $\mathsf S\in\{\text{Fixed-Tune},\text{Fixed-Steer},\text{Loop-Tune},\mathrm{LFS}\}$ and condition on pretraining.
The baseline $f_{b(\mathsf S)}$, the depth, and
$\mathcal H_{\mathsf S}$ are then fixed independently of the adaptation
sample $\mathcal D$.
For Fixed-Tune, retaining the pretrained encoder, processor, and
readout recovers Fixed; for Fixed-Steer, $C_0=\mathrm{id}$ does so.
For Loop-Tune, retaining the pretrained processor and readout
recovers Loop; for LFS, $C_0=\mathrm{id}$ and $\eta=\eta_0$ do so.
These choices are admissible by the baseline-inclusion assumption,
at the same prescribed depth and across the graph sizes being compared.
Consequently
\[
 \inf_{g\in\mathcal H_{\mathsf S}}\risk_{N_{\mathrm{adapt}}}(g)
 \le \overline{\mathcal R}_{N_{\mathrm{adapt}}}(b(\mathsf S)),
\]
so $G_{\mathsf S}\ge0$.
Put
\[
 Z_{\mathsf S}(\mathcal D)
 :=\sup_{g\in\mathcal H_{\mathsf S}}
       |\risk_{N_{\mathrm{adapt}}}(g)
         -\widehat\risk_{N_{\mathrm{adapt}}}(g)|.
\]
Lemma~\ref{lem:P3} gives $\E [Z_{\mathsf S}]\le\gamma_{\mathsf S}$.
For every comparator $g\in\mathcal H_{\mathsf S}$, empirical
minimization implies
\begin{align*}
 \risk_{N_{\mathrm{adapt}}}(f_{\mathsf S})
 &\le\widehat\risk_{N_{\mathrm{adapt}}}(f_{\mathsf S})
        +Z_{\mathsf S}\\
 &\le\widehat\risk_{N_{\mathrm{adapt}}}(g)+Z_{\mathsf S}
 \le\risk_{N_{\mathrm{adapt}}}(g)+2Z_{\mathsf S}.
\end{align*}
Taking the infimum and expectation gives
$\overline{\mathcal R}_{N_{\mathrm{adapt}}}(\mathsf S)\le
\inf_{g\in\mathcal H_{\mathsf S}}\risk_{N_{\mathrm{adapt}}}(g)
 +2\gamma_{\mathsf S}$.
The definition of $\Delta_{\mathsf S}$ gives the exact identity
\[
 \overline{\mathcal R}_{N_{\mathrm{OOD}}}(b(\mathsf S))-\overline{\mathcal R}_{N_{\mathrm{OOD}}}(\mathsf S)
 =\overline{\mathcal R}_{N_{\mathrm{adapt}}}(b(\mathsf S))
   -\overline{\mathcal R}_{N_{\mathrm{adapt}}}(\mathsf S)-\Delta_{\mathsf S}.
\]
Substituting the source-risk bound proves Eq.~\ref{eq:adaptation}.

\end{proof}

\subsection{Controller adaptation and parameter accounting}
\label{app:freeze-proof}
For Fixed-Steer and LFS, selecting $C_\phi=\mathrm{id}$ and retaining
the pretrained readout recovers Fixed and Loop, respectively, at
their prescribed depths.
In particular, Proposition~\ref{prop:adapt} specializes to
\[
 \overline{\mathcal R}_{N_{\mathrm{OOD}}}(\mathrm{Loop})
   -\overline{\mathcal R}_{N_{\mathrm{OOD}}}(\mathrm{LFS})
 \ge \operatorname{Gap}_{\mathrm{LFS}}-2\gamma_{\mathrm{LFS}}-\Delta_{\mathrm{LFS}}.
\]
The same statement with Loop-Tune in place of LFS uses the same Loop
baseline but its own class, source gain, estimation term, and shift.
Neither statement compares LFS directly with Loop-Tune.
Sharing the controller architecture across schemes does not require
identical fitted weights or identical loss-class complexities.

For controller-only Fixed-Steer, $d_\phi$ counts all trainable controller
weights, including any learned size or layer-index embeddings.
Fixed-Tune updates all $d_E+d_R+d_D$ pretrained weights.
Loop-Tune also trains $d_E+d_R+d_D$ weights, while
LFS trains $d_\phi+d_D$.
Thus the fraction of Loop-Tune's trainable parameter budget used by LFS
is $(d_\phi+d_D)/(d_E+d_R+d_D)$. This is strictly below one when
$d_\phi<d_E+d_R$, and can be small when the processor dominates the budget.
For a common sample size and a common bounded-loss envelope,
the estimation terms in Lemma~\ref{lem:P3} have ratio
$\sqrt{p_{\mathrm{LFS}}/p_{\mathrm{Loop\text{-}Tune}}}$.
This ratio depends on loss-class complexity, not on the parameter
ratio alone; the prescribed depth can affect that complexity.

LFS may retain useful correction capacity with a small controller,
whereas Loop-Tune may represent corrections unavailable to that controller.
The theoretical benefit depends on $\operatorname{Gap}_{\mathsf S}$, $\gamma_{\mathsf S}$, and $\Delta_{\mathsf S}$ jointly.
The alignment error for a frozen processor is still the mismatch between the complete controlled update and $U^\star_{G_N}$.
Freezing does not imply that this mismatch decreases.

\subsection{When LFS can outperform Loop-Tune}\label{app:lfs-comparison}
Compare the two schemes at the same prescribed depth, conditional on the same pretrained model, with the same adaptation sample size.
Both retain the encoder and adapt the readout, so the comparison isolates processor fine-tuning versus controller adaptation.

\paragraph{An exact comparison.}
For this appendix only, define the expected adaptation-distribution
excess risk
\[
 e_{\mathsf S}:=
 \overline{\mathcal R}_{N_{\mathrm{adapt}}}(\mathsf S)
 -\inf_{g\in\mathcal H_{\mathsf S}}
       \risk_{N_{\mathrm{adapt}}}(g).
\]
Under Assumption~\ref{ass:extrapolation},
$0\le e_{\mathsf S}\le2\gamma_{\mathsf S}$.
The definitions of $\operatorname{Gap}_{\mathsf S}$ and
$\Delta_{\mathsf S}$ give the identity
\[
 \overline{\mathcal R}_{N_{\mathrm{OOD}}}(\mathsf S)
 =\overline{\mathcal R}_{N_{\mathrm{OOD}}}(\mathrm{Loop})
  -\operatorname{Gap}_{\mathsf S}+e_{\mathsf S}+\Delta_{\mathsf S}.
\]
Subtracting the identities for the two schemes yields
\begin{align}\label{eq:lfs-tune-comparison}
 &\overline{\mathcal R}_{N_{\mathrm{OOD}}}(\mathrm{Loop\text{-}Tune})
       -\overline{\mathcal R}_{N_{\mathrm{OOD}}}(\mathrm{LFS})\nonumber\\
 &\quad=
 \operatorname{Gap}_{\mathrm{LFS}}
       -\operatorname{Gap}_{\mathrm{Loop\text{-}Tune}}
 +(e_{\mathrm{Loop\text{-}Tune}}-e_{\mathrm{LFS}})
 +(\Delta_{\mathrm{Loop\text{-}Tune}}-\Delta_{\mathrm{LFS}}).
\end{align}
Thus LFS has smaller expected deployment risk precisely when its
estimation and transfer advantages together exceed any loss in
achievable adaptation gain. The classes need not be nested, so the
achievable-gain difference can have either sign.
This is an exact risk comparison, not a subtraction of upper bounds.
For example, a sufficient condition under the present assumptions is
\[
 \operatorname{Gap}_{\mathrm{LFS}}
 -\operatorname{Gap}_{\mathrm{Loop\text{-}Tune}}
 +\Delta_{\mathrm{Loop\text{-}Tune}}-\Delta_{\mathrm{LFS}}
 >2\gamma_{\mathrm{LFS}},
\]
using $e_{\mathrm{Loop\text{-}Tune}}\ge0$ and
$e_{\mathrm{LFS}}\le2\gamma_{\mathrm{LFS}}$.
A smaller upper bound $\gamma_{\mathrm{LFS}}$ alone does not prove
that the actual excess risk $e_{\mathrm{LFS}}$ is smaller.

\paragraph{From pretraining to adaptation.}
The ratio $N_{\mathrm{adapt}}/N_{\mathrm{train}}$ describes how far
adaptation moves beyond the pretraining size. If the pretrained local
update remains aligned across these sizes and the needed correction
can be expressed by the controller and readout, LFS can retain
comparable achievable gain while adapting fewer parameters.
If larger graphs expose a processor error that the controller cannot
correct, Loop-Tune can instead gain from updating the processor.
Consequently a large ratio can favor either scheme, depending on
whether the transferable processor structure or its inadequacy dominates.
Pretraining size enters the comparison through the pretrained weights
and resulting hypothesis classes, rather than an independent penalty
determined by the ratio alone.

\paragraph{From adaptation to deployment.}
The ratio $N_{\mathrm{OOD}}/N_{\mathrm{adapt}}$ describes the remaining
size change after fitting. If processor fine-tuning learns corrections
specific to the adaptation distribution while LFS preserves useful
cross-size dynamics, LFS can have the smaller relative shift
$\Delta_{\mathsf S}$. Conversely, transferable processor corrections
can favor Loop-Tune; a size-conditioned controller also requires
control outside its adaptation sizes.
The local-error transfer and correction bounds in
App.~\ref{app:shift} make these requirements explicit.
The present assumptions do not imply that this shift, or the difference
between schemes, grows monotonically with the size ratio.
Different-depth deployment additionally requires the joint control
in App.~\ref{app:depth-schedules}.

\paragraph{Data budget and absolute sizes.}
The estimation term depends on the number of independent adaptation
graphs $m_{\mathrm{adapt}}$, not their node count $N_{\mathrm{adapt}}$.
With comparable achievable gains, limited graph samples make a
lower-complexity adaptation class attractive; abundant samples can
make additional corrective capacity more valuable.
Neither the two size ratios nor their product
$N_{\mathrm{OOD}}/N_{\mathrm{train}}$ determines the risk ordering:
absolute sizes, attributed neighborhood laws, target dynamics,
depth, and sample budget also matter.
The favorable regime for LFS is therefore a reusable pretrained
processor, corrections expressible by a small controller and readout,
and sufficiently persistent adaptation benefits.

\subsection{Approximate empirical minimization}
If the returned predictor has empirical risk within $\eta_{\mathsf S}$
of the class infimum, add $\eta_{\mathsf S}$ to the right-hand side
of Eq.~\ref{eq:erm}, add $4\eta_{\mathsf S}$ to the explicit deployment
upper bound, and subtract $\eta_{\mathsf S}$ in the adaptation-gain
lower bounds. Regularized fitting requires accounting for its penalty
or interpreting the regularizer as a restriction of the fitted class.

\section{When logarithmic resolution depth follows}\label{app:geometric}
The main analysis does not assume a geometric resolution profile. To
retain the manuscript's more specific logarithmic law, it must be
established separately for the target--distribution family in question.

\begin{assumption}[Optional geometric resolution profile]\label{ass:resolution}
There exist $\beta>0$ and $0<c_-\le c_+<\infty$, independent of $N,k$,
such that, for integer $0\le k\le T_N^\star$,
\begin{equation}\label{eqn:riskassumption}
 c_-N^{2\beta}q_N(k)\le\err_N(k)\le a_N(k)
            \le c_+N^{2\beta}q_N(k),
 \qquad
 q_N(k)=\frac{\rho_N^{2k}-\rho_N^{2T_N^\star}}
                     {1-\rho_N^{2T_N^\star}}.
\end{equation}
Here the task horizon is a finite integer $T_N^\star\ge1$.
An equilibrium variant with an explicit depth window is given below.
\end{assumption}

This assumption imposes both non-negligible information outside local
neighborhoods and near-optimality, up to constants, of truncated task
computation. Its lower direction does not follow from contractivity.
At $k=0$ it requires initial local uncertainty of order $N^{2\beta}$.

\begin{corollary}[Two-sided logarithmic resolution depth]\label{cor:log-depth}
Under Assumption~\ref{ass:resolution}, suppose
$0<\underline\rho\le\rho_N\le\bar\rho<1$ and
$\rho_N^{2T_N^\star}=O(N^{-2\beta})$. For fixed $\epsilon>0$,
the depths defined in the main text satisfy
\begin{equation}\label{eq:log-depth}
 L_\epsilon(N)=\frac{\beta\log N}{|\log\rho_N|}+O_\epsilon(1),
 \qquad
 K_{\mathrm{res}}(N,\epsilon)
       =\frac{\beta\log N}{|\log\rho_N|}+O_\epsilon(1).
\end{equation}
Both depths are $\Theta(\log N)$. For fixed $K_{\mathrm{fix}}$,
fixed-radius risk is $\Omega(N^{2\beta})$. If Loop's encoder,
per-step update, and decoder errors and its learned decoder Lipschitz
constant are uniformly bounded at $K_{\mathrm{res}}$, Loop risk
is $O(1)$ and eventually strictly smaller. If also $\log N=o(D_N)$,
this specialization belongs to the growing sub-diameter regime.
\end{corollary}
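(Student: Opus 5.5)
The plan is to read both depths directly off the two-sided sandwich in Assumption~\ref{ass:resolution}. Write $Q_N:=1-\rho_N^{2T_N^\star}$ and $r_N:=\rho_N^{2T_N^\star}$; the hypothesis $r_N=O(N^{-2\beta})$, together with $\rho_N\le\bar\rho<1$ and $T_N^\star\ge1$, gives $Q_N\in[1-\bar\rho^2,1]$. The sandwich then reads
\[
 c_-N^{2\beta}\frac{\rho_N^{2k}-r_N}{Q_N}\le\err_N(k)\le a_N(k)\le c_+N^{2\beta}\frac{\rho_N^{2k}-r_N}{Q_N}.
\]
\textbf{Upper bound.} From $a_N(k)\le c_+N^{2\beta}\rho_N^{2k}/(1-\bar\rho^2)$, choose $k_+$ as the smallest integer with $\rho_N^{2k}\le\epsilon^2(1-\bar\rho^2)/(c_+N^{2\beta})$. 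This gives
\[
 k_+=\frac{\beta\log N}{|\log\rho_N|}+\frac{\log(\sqrt{c_+}/(\epsilon\sqrt{1-\bar\rho^2}))}{|\log\rho_N|}+O(1).
\]
The correction is $O_\epsilon(1)$ because $|\log\rho_N|\ge|\log\bar\rho|>0$. If $k_+>T_N^\star$, use $a_N(T_N^\star)=0$ instead. Either way $K_{\mathrm{res}}\le k_+$, and $L_\epsilon\le K_{\mathrm{res}}$ holds by Proposition~\ref{prop:resolution-depth}.

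\textbf{Lower bound.} It suffices to show $\err_N(k)>\epsilon^2$ for every $k\le k_-$, where $k_-=\beta\log N/|\log\rho_N|-C_\epsilon$. The required inequality is
\[
 c_-N^{2\beta}(\rho_N^{2k}-r_N)>\epsilon^2 Q_N\le\epsilon^2 .
\]
This is the delicate step, because the subtracted tail $r_N$ must be dominated. Write $r_N\le MN^{-2\beta}$ using the $O(N^{-2\beta})$ hypothesis. For $k\le k_-$ we have $N^{2\beta}\rho_N^{2k}\ge\rho_N^{-2C_\epsilon}\ge\underline\rho^{-2C_\epsilon}$, so the left side is at least $c_-(\underline\rho^{-2C_\epsilon}-M)$. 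Taking $C_\epsilon$ large enough, depending on $c_-,M,\epsilon,\underline\rho$, makes this exceed $\epsilon^2$. Here the lower bound $\underline\rho>0$ is used, to keep the change of variables uniform.

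The main obstacle is the side condition $k\le T_N^\star$ in the assumption. I would argue that $k_-\le T_N^\star$ eventually: $\rho_N^{2T_N^\star}\le MN^{-2\beta}$ gives $T_N^\star\ge\beta\log N/|\log\rho_N|-\log M/(2|\log\rho_N|)$, and enlarging $C_\epsilon$ absorbs the second term. Monotonicity of $\err_N$ in $k$ (nested sigma-algebras) then ensures the minimum in $L_\epsilon$ exceeds $k_-$. Together with the upper bound this gives both equalities in Eq.~\ref{eq:log-depth}. The $\Theta(\log N)$ claim follows since $|\log\rho_N|\in[|\log\bar\rho|,|\log\underline\rho|]$.

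For the remaining claims, the fixed-radius risk follows from Proposition~\ref{prop:ceiling}. For $K_{\mathrm{fix}}$ fixed and $N$ large we have $K_{\mathrm{fix}}\le T_N^\star$, and
\[
 \err_N(K_{\mathrm{fix}})\ge c_-N^{2\beta}(\underline\rho^{2K_{\mathrm{fix}}}-MN^{-2\beta})=\Omega(N^{2\beta}).
\]
For Loop at $K=K_{\mathrm{res}}$, Lemma~\ref{lem:accum} with bounded $\delta_E,\varepsilon,\delta_D,L_D$ uses $\rho_N^K\le1$ and $(1-\rho_N^K)/(1-\rho_N)\le1/(1-\bar\rho)$. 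Combined with $a_N(K_{\mathrm{res}})\le\epsilon^2$, this yields an $O(1)$ risk. Since $\Omega(N^{2\beta})\to\infty$ while the Loop risk stays $O(1)$, Loop is eventually strictly smaller, giving the quantitative form of Eq.~\ref{eq:quantitative-separation}. Finally, $L_\epsilon=\Theta(\log N)\to\infty$, and if also $\log N=o(D_N)$ then $L_\epsilon=o(D_N)$, so the family lies in $\mathcal F_2^{(\epsilon)}$.
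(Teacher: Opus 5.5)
Your proof follows the paper's route. Both minima are bracketed around $\beta\log N/|\log\rho_N|$ by the two sides of the geometric profile; the paper packages this as the equivalence $q_N(k)\le xN^{-2\beta}\iff\rho_N^{2k}\le z_N+(1-z_N)xN^{-2\beta}$ with $x=\epsilon^2/c_\pm$, whose right side lies between constant multiples of $N^{-2\beta}$. The ceiling and the $O(1)$ Loop risk then come from Proposition~\ref{prop:ceiling} and Lemma~\ref{lem:accum} exactly as you argue. Your explicit check that $k_-\le T_N^\star$, so that the profile applies at every $k\le k_-$, is a detail the paper leaves implicit.

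One step is wrong as written. In the lower bound you assert $\rho_N^{-2C_\epsilon}\ge\underline\rho^{-2C_\epsilon}$, but $\rho_N\ge\underline\rho$ gives the reverse inequality. The valid estimate is $\rho_N^{-2C_\epsilon}\ge\bar\rho^{-2C_\epsilon}$, which follows from $\rho_N\le\bar\rho<1$. This still grows without bound in $C_\epsilon$, so taking $C_\epsilon>\log(M+\epsilon^2/c_-)/(2|\log\bar\rho|)$ makes $c_-(\bar\rho^{-2C_\epsilon}-M)>\epsilon^2$. For $M>1$, the same choice also covers your absorption of $\log M/(2|\log\rho_N|)\le\log M/(2|\log\bar\rho|)$.

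So this step is driven by $\bar\rho<1$, not by $\underline\rho>0$. The lower bound $\underline\rho$ is what you need for three other places, where you use it correctly: the $\Theta(\log N)$ claim, $T_N^\star\to\infty$ (hence $K_{\mathrm{fix}}\le T_N^\star$ eventually), and the $\Omega(N^{2\beta})$ ceiling. With that correction the argument is complete.
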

\begin{proof}
For $z_N=\rho_N^{2T_N^\star}$ and any fixed $x>0$,
\[
 q_N(k)\le xN^{-2\beta}
 \quad\Longleftrightarrow\quad
 \rho_N^{2k}\le z_N+(1-z_N)xN^{-2\beta}.
\]
The right-hand side is bounded above and below by positive constant
multiples of $N^{-2\beta}$, since $z_N=O(N^{-2\beta})$.
Taking logarithms and using the uniform bounds on $\rho_N$ locates
this threshold at $\beta\log N/|\log\rho_N|+O(1)$. The two sides of
\eqref{eqn:riskassumption}, with $x=\epsilon^2/c_-$ and
$x=\epsilon^2/c_+$, bracket both minima in Eq.~\ref{eq:log-depth}.
Also $z_N\to0$ and
$\rho_N^{2K_{\mathrm{fix}}}\ge\underline\rho^{2K_{\mathrm{fix}}}$,
so the lower profile yields the polynomial ceiling. Uniform contraction
away from one and the stated error bounds make the first term of
Eq.~\ref{eq:alignment-risk} bounded; its second term is at most
$2\epsilon^2$. Hence Loop risk is $O(1)$.
\end{proof}

\paragraph{Scope of the special case.}
The profile controls both information and the accuracy of truncated
computation. Contractivity alone supplies an upper envelope. With bounded
$A_N$ and contraction away from one, the general proposition instead
gives bounded depth for fixed tolerance. Growing depth requires
appropriate scale or contraction dependence on size, or a changing
tolerance. The formula concerns tolerance-reaching depth, not the
empirical risk-minimizing depth or its fitted slope across arbitrary
horizon schedules. For finite horizons $O_\epsilon(1)$ is stated for
fixed $\epsilon$, not as a uniform small-tolerance asymptotic.

\paragraph{Equilibrium targets and the depth window.}
For an equilibrium target, assume instead that
\[
 c_-N^{2\beta}\rho_N^{2k}\le\err_N(k)\le a_N(k)
       \le c_+N^{2\beta}\rho_N^{2k},\qquad 0\le k\le J_N,
\]
where $J_N$ is an integer depth limit satisfying
$N^{2\beta}\rho_N^{2J_N}\to0$. Under the same uniform bounds on
$\rho_N$, the resolution thresholds lie inside this window for every
fixed $\epsilon>0$, and
\[
 L_\epsilon(N),\ K_{\mathrm{res}}(N,\epsilon)
 =\frac{\beta\log N+\log(1/\epsilon)}{|\log\rho_N|}+O(1).
\]
Here the notation means each depth equals the displayed expression up
to a bounded additive term. This follows by solving the two geometric
threshold inequalities. A depth window is essential: on a connected
$N$-node graph, radius $N-1$ reveals the entire attributed graph, so
$\err_N(N-1)=0$. A strictly positive geometric lower profile cannot
hold at all finite radii for an equilibrium target. The finite-horizon
profile above must likewise be checked for compatibility with the
available graph information; it is a specialization rather than a
general consequence of iteration.

\section{Depth selection and depth schedules}\label{app:depth-schedules}
\paragraph{Selecting depth for implemented-model accuracy.}
The depth $K_{\mathrm{res}}$ concerns the task computation. To select
using the implemented model's error bound, let $\mathcal K$ be a
specified finite set of candidate depths and suppose
Lemma~\ref{lem:accum} supplies valid bounds at every candidate. If the
following set is nonempty, define
\[
 K_{\mathrm{cert}}(N,\epsilon)
 :=\min\{K\in\mathcal K:\mathcal U_{\mathrm{Loop},N}(K)\le\epsilon^2\}.
\]
Then $\risk_N(f_{N,K_{\mathrm{cert}}}^{\mathrm{Loop}})\le\epsilon^2$.
If computation increases with depth, this is the least costly candidate
certified by the bound to attain the chosen accuracy. It need not
minimize the true risk, nor rule out success at a smaller uncertified
depth. The error terms must be known or validly bounded to implement
this rule. If alignment constants worsen with depth, that dependence
must be included in $\mathcal U_{\mathrm{Loop},N}(K)$.

\paragraph{Different source and deployment depths.}
If adaptation fits at $K_n$ and deployment runs at $K_N$, prescribe the
schedule before adaptation and replace the same-depth transfer condition
with the joint condition
\[
 \sup_{g\in\mathcal H_{\mathsf S}}
 |\calA_{N,K_N}(g)-\calA_{n,K_n}(g)|
 \le\tau_{\mathsf S}(N,n;K_N,K_n).
\]
The deployment lemma then has lower bound $\err_N(K_N)$ and computation
errors $a_n(K_n)$ and $a_N(K_N)$. All source risks and the estimation
class must use the depth actually fitted. The Loop-Tune/LFS comparisons extend
when the baseline and adapted model share the prescribed schedule.
Same-depth transfer does not by itself establish the joint condition.
If depth is selected using adaptation data, its selection must be
included in the hypothesis class or separately analyzed. Deployment
labels used for selection require a separate evaluation sample.

\section{Controlling the relative deployment shift}\label{app:shift}
Proposition~\ref{prop:adapt} characterizes adaptation improvement in terms of
$\Delta_{\mathsf S}$. To obtain a sufficient condition not expressed
through an unknown deployment-risk difference, the following result
connects it to local-risk transfer and the size of the fitted correction.

\begin{corollary}[Sufficient adaptation certificate]\label{cor:adapt-certificate}
Suppose Assumption~\ref{ass:size-transfer} holds for
$\mathcal H_{\mathsf S}$ containing its baseline $f_{b(\mathsf S)}$, and finite
constants $c_{\mathsf S}(j)$ satisfy
\[
 \sup_{g\in\mathcal H_{\mathsf S}}
 \left(\E\|g_j(G_j,v)-f_{b(\mathsf S),j}(G_j,v)\|^2\right)^{1/2}
\le c_{\mathsf S}(j),\qquad
j\in\{N_{\mathrm{adapt}},N_{\mathrm{OOD}}\}.
\]
Then
\begin{align}\label{eq:shift-certificate}
 |\Delta_{\mathsf S}|
 &\le2\tau_{\mathsf S}(N_{\mathrm{OOD}},N_{\mathrm{adapt}},K)
     +2c_{\mathsf S}(N_{\mathrm{OOD}})
        \sqrt{a_{N_{\mathrm{OOD}}}(K)}\nonumber\\
 &\quad+2c_{\mathsf S}(N_{\mathrm{adapt}})
        \sqrt{a_{N_{\mathrm{adapt}}}(K)}.
\end{align}
Any of the four adaptation schemes improves on its own baseline when
$G_{\mathsf S}$ exceeds $2\gamma_{\mathsf S}$ plus the right-hand side
of Eq.~\ref{eq:shift-certificate}.
\end{corollary}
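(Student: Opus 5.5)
The plan is to expand each deployment risk around the truncated task output $\widetilde Y_{N,K}$, so that risk differences between a class member and the baseline split into a local-approximation difference plus a cross term. The local-approximation difference is controlled by Assumption~\ref{ass:size-transfer}, and the cross term by Cauchy--Schwarz using the correction-size constants $c_{\mathsf S}(j)$. Throughout, the common depth is $K$.

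\textbf{Step 1: per-predictor decomposition.} Fix any $g\in\mathcal H_{\mathsf S}$ and a size $j\in\{N_{\mathrm{adapt}},N_{\mathrm{OOD}}\}$. Write $g_j-Y_j=(g_j-\widetilde Y_{j,K})+(\widetilde Y_{j,K}-Y_j)$ and expand the square:
\[
 \risk_j(g)=\calA_{j,K}(g)+2\,\E\langle g_j-\widetilde Y_{j,K},\,\widetilde Y_{j,K}-Y_j\rangle+a_j(K).
\]
Subtracting the same identity for the baseline $b=f_{b(\mathsf S)}$, which lies in the class, the $a_j(K)$ terms cancel and the $\widetilde Y_{j,K}$ inside the inner products cancels. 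This leaves
\[
 \risk_j(g)-\risk_j(b)=\calA_{j,K}(g)-\calA_{j,K}(b)+2\,\E\langle g_j-b_j,\,\widetilde Y_{j,K}-Y_j\rangle .
\]
By Cauchy--Schwarz in $L^2$ over graph and node, the cross term has absolute value at most $2c_{\mathsf S}(j)\sqrt{a_j(K)}$.

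\textbf{Step 2: insert into $\Delta_{\mathsf S}$.} Condition on the adaptation sample $\mathcal D$. The fitted $f_{\mathsf S}$ is then a fixed member of the class, and the deployment graphs are independent of $\mathcal D$, so Step~1 applies to it at both sizes. The size-difference of the $\calA$ terms is
\[
 [\calA_{N_{\mathrm{OOD}},K}(f_{\mathsf S})-\calA_{N_{\mathrm{adapt}},K}(f_{\mathsf S})]-[\calA_{N_{\mathrm{OOD}},K}(b)-\calA_{N_{\mathrm{adapt}},K}(b)].
\]
Assumption~\ref{ass:size-transfer} holds uniformly over the class, which includes both $f_{\mathsf S}$ and $b$. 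Each bracket is therefore at most $\tau_{\mathsf S}(N_{\mathrm{OOD}},N_{\mathrm{adapt}},K)$ in absolute value, giving $2\tau_{\mathsf S}$ in total. The two cross terms contribute $2c_{\mathsf S}(N_{\mathrm{OOD}})\sqrt{a_{N_{\mathrm{OOD}}}(K)}$ and $2c_{\mathsf S}(N_{\mathrm{adapt}})\sqrt{a_{N_{\mathrm{adapt}}}(K)}$. All these bounds are deterministic and uniform in $\mathcal D$, so taking $\E_{\mathcal D}$ (which defines the $\overline{\mathcal R}$ quantities entering $\Delta_{\mathsf S}$) preserves them. This yields Eq.~\ref{eq:shift-certificate}.

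\textbf{Step 3: the improvement claim.} Proposition~\ref{prop:adapt} gives
\[
 \overline{\mathcal R}_{N_{\mathrm{OOD}}}(b(\mathsf S))-\overline{\mathcal R}_{N_{\mathrm{OOD}}}(\mathsf S)\ge \operatorname{Gap}_{\mathsf S}-2\gamma_{\mathsf S}-\Delta_{\mathsf S}\ge \operatorname{Gap}_{\mathsf S}-2\gamma_{\mathsf S}-|\Delta_{\mathsf S}|,
\]
which is strictly positive under the stated condition.

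\textbf{Main obstacle.} The delicate point is measurability and conditioning in Step~2: one must apply the uniform transfer bound and the uniform correction bound to a data-dependent predictor. I would handle this through the uniformity over $\mathcal H_{\mathsf S}$ together with the independence of deployment graphs from $\mathcal D$, exactly as in the proof of Lemma~\ref{lem:master}. For the nonadapted Fixed and Loop baselines, the outer expectation is simply trivial.
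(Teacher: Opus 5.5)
Your proposal is correct and follows essentially the same route as the paper's proof. The paper likewise expands each risk difference around $\widetilde Y_{j,K}$ so that $a_j(K)$ cancels, applies the uniform transfer bound to both the fitted predictor and the baseline, and bounds the cross terms by Cauchy--Schwarz. It then uses uniformity over $\mathcal H_{\mathsf S}$ to average over the fitted predictor and substitutes the result into Proposition~\ref{prop:adapt}, with $-\Delta_{\mathsf S}\ge-|\Delta_{\mathsf S}|$ closing the improvement claim.
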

\begin{proof}
For a fixed predictor, expand the risk difference from the baseline:
\[
 \risk_j(g)-\risk_j(f_{b(\mathsf S)})
 =\calA_{j,K}(g)-\calA_{j,K}(f_{b(\mathsf S)})
 +2\E\langle g_j-f_{b(\mathsf S),j},\widetilde Y_{j,K}-Y_j\rangle.
\]
The pure task-computation errors cancel. Compare
$j=N_{\mathrm{OOD}}$ and $j=N_{\mathrm{adapt}}$,
apply the transfer bound to both predictors, and bound the cross terms
by Cauchy--Schwarz. Uniformity permits averaging over the fitted
predictor and gives Eq.~\ref{eq:shift-certificate}. Substitute in the
appropriate adaptation-gain bound.
\end{proof}

\section{Justifying the transfer assumption}\label{app:transfer}
Let $Z_{j,K}=B_K(G_j,v)$ have law $\mu_{j,K}$ on a common metric space
with finite first moments. Suppose
$\calA_{j,K}(g)=\int\psi_{j,g,K}(z)\,d\mu_{j,K}(z)$.
If, uniformly over $g\in\mathcal H_{\mathsf S}$, the function
$\psi_{N,g,K}$ is $\Lambda_{\mathsf S,K}$-Lipschitz and
\[
 \sup_z|\psi_{N,g,K}(z)-\psi_{n,g,K}(z)|
       \le\chi_{\mathsf S}(N,n,K),
\]
then
\begin{equation}\label{eq:transfer-sufficient}
 |\calA_{N,K}(g)-\calA_{n,K}(g)|
 \le\Lambda_{\mathsf S,K}W_1(\mu_{N,K},\mu_{n,K})
       +\chi_{\mathsf S}(N,n,K).
\end{equation}
Indeed, insert $\int\psi_{N,g,K}\,d\mu_{n,K}$, apply the Lipschitz
coupling bound to the first difference, and bound the second difference
by $\chi_{\mathsf S}$. An integrated bound under $\mu_{n,K}$ suffices in place
of the displayed supremum.

For a common size-independent predictor and task rule, $\chi_{\mathsf S}=0$.
For a size-conditioned controller, it need not vanish. Even if the
neighborhood laws coincide, a predictor can change its output with
$N$, so graph-distribution convergence alone cannot bound its risk shift.
The Lipschitz constant of a squared-loss map also needs control of
prediction/target magnitudes (or an alternative weighted argument).

\paragraph{A separate way to control the adaptation certificate.}
The correction bounds $c_{\mathsf S}(j)$ in Corollary~\ref{cor:adapt-certificate}
can be enforced by a restricted class, but are not implied by freezing.
For example, for $\mathsf S\in\{\text{Fixed-Steer},\mathrm{LFS}\}$, restrict to a controller-only
class using the shared processor $R_{\theta_0,G}$ and controller
$C_\phi(\,\cdot\,,\mathrm{ctx}^{(t)})$, with the same encoder and decoder
as the baseline. Suppose the processor is $\lambda$-Lipschitz,
the decoder is $L_D$-Lipschitz, and
\[
 \left(\E_G\left\|
 C_\phi(h_t^{\mathsf S},\mathrm{ctx}^{(t)})
       -h_t^{\mathsf S}\right\|_j^2\right)^{1/2}
 \le\delta,\qquad 0\le t<K,
\]
uniformly over the class at graph size $j$.
Comparing controlled and uncontrolled states gives
\[
 c_{\mathsf S}(j)\le L_D\lambda\delta
                   \sum_{t=0}^{K-1}\lambda^t.
\]
This bound is uniform in depth if $\lambda<1$, grows linearly for
$\lambda=1$, and can grow exponentially for $\lambda>1$.
It controls departure from the baseline, not alignment with the task
operator or an automatic improvement in accuracy.

\section{Discussion of task structure and scope}\label{app:discussion}
\subsection{Homogeneity and dependency radius}\label{app:homogeneity}
Dependency radius measures the information needed for a prediction,
whereas homogeneity describes its response to rescaling input
attributes. For a fixed graph topology $G$ and specified numerical
attributes $x$, positive homogeneity of degree $p$ means
\[
 f_N^\star(G,\lambda x)=\lambda^p f_N^\star(G,x),
 \qquad \lambda>0.
\]
Only the specified attributes are rescaled; topology and discrete
markers are held fixed. Shortest-path distances and optimal TSP
\emph{costs} scale linearly with edge weights; the optimal paths or
tours themselves need not change. Similarly, maximum-flow values
scale linearly when all edge capacities are rescaled. These numerical
scaling properties motivate homogeneous graph architectures such as
IterGNN~\cite{tang2020itergnn}; graph-level optimization values here
illustrate homogeneity rather than node-level regime membership.

Homogeneity alone determines neither the dependency-radius regime
nor the existence of a contractive iterative representation. The
Gaussian local and global sums above give degree-one homogeneous
examples in the three regimes under their stated conditions. Adding
a known nonzero constant gives non-homogeneous examples with exactly
the same optimal local approximation errors. Nevertheless, input
rescaling changes absolute squared error and hence can change
$L_\epsilon(N)$ at a fixed tolerance. Radius and homogeneity therefore
describe distinct properties, with radius always interpreted relative
to the specified distribution and accuracy criterion.

\subsection{Iterative tasks and the scope of contractivity}
\label{app:iterative-tasks}
The iterative representation specifies a common local update reused
across steps and graph sizes. Assumption~\ref{ass:contraction} adds
strict contractivity for the error-accumulation analysis; it is not
implied by that representation or by a dependency-radius regime.
The following examples separate these requirements.

\paragraph{Restart diffusion and label propagation.}
A concrete template is
\[
 U_G^\star(h)=\alpha P_Gh+(1-\alpha)s_G,
 \qquad 0<\alpha<1,
\]
where $s_G$ is a fixed node-wise input and $P_G$ propagates information
over one hop. If $\|P_G\|_{N\to N}\le1$, then this update is
$\alpha$-contractive in the per-node norm used in our analysis.
Symmetric normalized adjacency on an undirected graph satisfies this
operator-norm condition. Restart label-score propagation
\cite{zhou2003learning} and the propagation stage of
APPNP~\cite{klicpera2019appnp} have this affine structure.
APPNP is an architectural example, rather than a separate task target;
its initial neural predictions play the role of the fixed source.
Here the source is conditioned on as an input attribute, not included
as an independently varying hidden-state component in the contraction
inequality. Locality also requires any normalization information to
be available as local input attributes.

Classical PageRank uses the same restart principle, but its standard
stochastic-matrix contraction must be interpreted in the appropriate
norm: column-stochastic propagation is non-expansive in $\ell_1$,
whereas row-stochastic propagation is non-expansive in $\ell_\infty$.
Neither property alone establishes contraction in our per-node
Euclidean norm. Moreover, fixed damping and bounded signal scale can
give bounded sufficient depth at fixed tolerance, as shown above.
A logarithmic size dependence or membership in
$\mathcal F_2^{(\epsilon)}$ requires additional conditions; it does
not follow merely from using a restart iteration. Equilibrium outputs
use an infinite horizon, while finite-horizon targets specify the
number of updates explicitly.

\paragraph{Diffusion and application-specific dynamics.}
Graph diffusion motivates iterative architectures such as
GRAND~\cite{chamberlain2021grand}, but ordinary heat diffusion is not
strictly contractive on the full state space. For a symmetric graph
Laplacian $\mathcal L_G$, the Euler update $I-\eta\mathcal L_G$
has eigenvalue one on the Laplacian's nullspace. A leaky version
\[
 U_G^\star(h)=\bigl[I-\eta(\mathcal L_G+\mu I)\bigr]h+\eta s_G,
 \qquad \mu>0,
\]
is strictly contractive when
$0<\eta<2/(\lambda_{\max}(\mathcal L_G)+\mu)$.
A size-independent step size needs a uniform spectral bound.
Belief propagation, routing updates, and power-flow solvers require
their own update, state-space, locality, and contraction checks.
Damping alone does not guarantee strict contraction, and a local
convergence statement near an operating point does not establish the
global condition in Assumption~\ref{ass:contraction}.

\paragraph{Iterative computation beyond the contractive analysis.}
Neighborhood maxima and Bellman--Ford distance updates illustrate
local repeated computation whose usual update is non-expansive,
rather than strictly contractive, in the sup norm on finite states.
For example, the unanchored Bellman update satisfies
$U(h+c\mathbf 1)=U(h)+c\mathbf 1$, preserving constant differences.
Such examples lie beyond our stated contraction-based guarantee;
this does not rule out alternative representations or analyses.
Their dependency-radius regime must still be established from
the target distribution and tolerance. Across all examples, iterative
structure motivates processor reuse, contractivity controls state-error
accumulation, and local approximation error determines information reach.


\section{Additional Experimental Results}
\label{app:experiments-full}

This appendix contains the full experimental study summarized in
Sec.~\ref{sec:experiments}. We first give the full experimental protocol
(App.~\ref{app:exp-protocol}) and verify the two main assumptions
without training any model (App.~\ref{app:exp-primitives}). We then
report the complete risk curves of the four schemes across the synthetic
task families (App.~\ref{app:exp-tiers}), results on a natural
fixed-point task (App.~\ref{app:exp-anchors}), the effect of the
adaptation budget and of the size gap (App.~\ref{app:exp-fsft}),
direct measurements of how adaptation changes the learned processor
(App.~\ref{app:exp-a14}), and the behavior of all schemes outside
the contractive class (App.~\ref{app:exp-scope}). The tasks,
protocol, and notation are those of Sec.~\ref{sec:exp-synthetic} unless
stated otherwise. Code, seeds, and raw results are released, and all
datasets can be regenerate deterministically from seeds or from existing open-source benchmark.

\subsection{Protocol details}
\label{app:exp-protocol}

All models are pretrained on graphs with $100$--$200$ nodes using depths
$K \in [2, 8]$, adapted on $100$ labeled graphs with $350$--$700$ nodes,
and evaluated on held-out graphs with $10^3$ to $2 \times 10^4$ nodes.
At the largest size and $d = 2$, the horizons of the four families are
$5$ (\textsc{const}), $9$ (\textsc{log}), $16$ (\textsc{poly}), and
$50$ (\textsc{diam}) steps. Loop-Tune fine-tunes all $265{,}857$
parameters of the looped model, including the encoder; LFS trains the
controller and the readout, $42{,}817$ parameters or $16\%$ of the
model. The Fixed model is implemented in the standard way, with a
distinct set of weights per layer, so it has $1.06$M parameters against
the looped model's $265{,}857$. The Fixed-versus-Loop comparison
therefore differs in weight sharing as well as in depth, with the fixed
model the larger of the two: it compares the loop against the
fixed-depth architecture a practitioner would train, not against the
same weights deployed at a fixed depth. The $4\%$ adaptation fraction
of Fixed-Steer is relative to this larger model. For the looped schemes, the number of iterations used at test
time is chosen from data: at each test size we set aside two validation
graphs, run each scheme on them at a range of depths, and keep the depth
with the lowest validation error; the scheme is then evaluated at that
depth on eight further test graphs of the same size.

\textbf{Controller.} The controller $C_\phi$ is a feature-wise affine
modulation (FiLM) of the hidden state, applied per node between
applications of the frozen processor. At iteration $k$ on a graph with
$N$ nodes, a conditioning vector
$z = \mathrm{MLP}\big([\log N / \log 10^4,\; k/K_{\max},\;
\sin(2\pi k/K_{\max}),\; \cos(2\pi k/K_{\max})]\big)$
is computed by a two-layer MLP of width $32$ with SiLU activations
($K_{\max} = 256$), and a linear head maps $z$ to per-channel scales and
shifts $(\gamma, \beta)$. The controller output is
$C_\phi(h, \mathrm{ctx}) = h \odot (1 + \gamma) + \beta$. The head is
zero-initialized, so $C_\phi$ is the identity at $\phi = 0$ and
adaptation starts from the unadapted looped model. This gives
$d_\phi = 9{,}664$ parameters at hidden width $128$. The chip-placement
network uses a FiLM controller of the same form between its processor
applications. Fixed-Steer inserts the same module between the layers of
the fixed-depth model, indexed by layer.

\textbf{Chip placement details.} The pretrained placement network
applies its three-block processor once; the looped schemes re-apply it
as a unit, and we deploy them at $K = 2$ unrolls (deeper unrolls behave
consistently and are omitted). Pretraining and adaptation both minimize
the mean squared error to reference placements (target supervision; the
intermediate states of a reference placer are not available, so there is
no operator-aligned variant on this task). Each synthetic size bucket
contains $20$ held-out designs; the MLCAD column evaluates $1{,}000$
held-out samples from netlists not seen during adaptation, under a
design-level split. Every adapted cell is the mean of $3$ independent
adaptation runs from the same pretrained network. Fixed-Tune and
Loop-Tune adapt all $5{,}851{,}650$ parameters; LFS adapts $169{,}858$
(the $37{,}760$-parameter controller plus the $132{,}098$-parameter
readout), $2.9\%$ of the model, with correspondingly smaller optimizer
state and adaptation checkpoints ($34\times$). The first two size
buckets overlap the pretraining ($\le 500$ cells) and adaptation
($500$--$1{,}000$ cells) ranges; only the $1800$--$2000$ and
$2400$--$2800$ buckets are size extrapolation.

\textbf{Computing infrastructure.} All experiments ran on a single
internal lab server with NVIDIA RTX A6000 GPUs (48\,GB), AMD EPYC 7452
CPUs, and 1\,TB of RAM. Every run fits on one GPU, and at most two GPUs
were used concurrently for the chip placement experiments. The full synthetic study (all cells, 5 seeds
each) takes roughly 40 GPU-hours; each chip-placement adaptation run
takes 2--13 GPU-hours depending on the loop depth. Implementations
build on PyTorch~\cite{paszke2019pytorch} (BSD-3-Clause) and PyTorch
Geometric~\cite{fey2019pyg} (MIT). The real-netlist experiments use the
2023 MLCAD FPGA macro-placement contest
benchmark~\cite{bustany2023mlcad}, distributed under the MIT license;
all other datasets are generated by our released code.

\subsection{Verifying the analysis assumptions without training}
\label{app:exp-primitives}

\begin{figure}[t]
  \centering
  \includegraphics[width=0.92\textwidth]{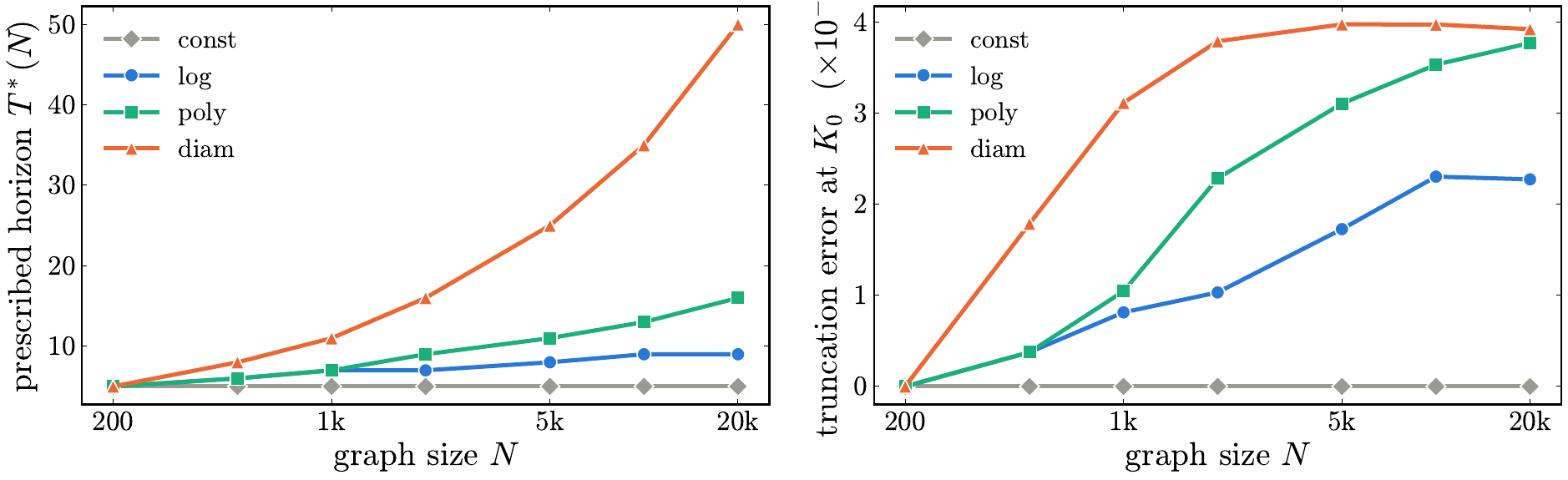}
  \caption{\textbf{The four task families and their fixed-radius error.}
  Computed from the task definition, before any learning, with RGG $d{=}2$,
  $\rho{=}0.7$. Left: the prescribed horizon
  $T^\star(N)$ of each family. Right: the per-node error of the
  $K_0$-step truncation, which uses only a $K_0$-hop neighborhood. For
  \textsc{const} this error is exactly zero at every size. For the other
  families it grows with the graph size. The truncation error
  upper-bounds the fixed-radius floor of
  Proposition~\ref{prop:ceiling}.}
  \label{fig:signature}
\end{figure}

Fig.~\ref{fig:signature} shows the training-free check on the
constructed families: the prescribed horizons $T^\star(N)$ and the
per-node truncation error at the pretraining radius $K_0$. The
truncation error decreases geometrically with the radius $k$, consistent
with the $\rho^{2k}$ decay implied by the contractive update
(Assumption~\ref{ass:contraction}). A predictor using only a $K_0$-hop
neighborhood can achieve lower error than the truncation only when
nearby inputs contain information about more distant ones, and our
smooth input field partially allows this (App.~\ref{app:exp-tiers}).

\begin{figure}[t]
  \centering
  \begin{subfigure}[t]{0.32\textwidth}
    \includegraphics[width=\linewidth]{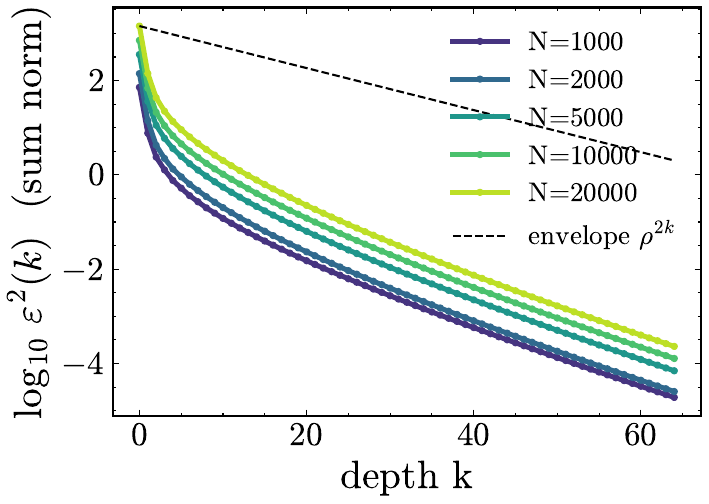}
    \caption{Error of radius-$k$ prediction (Assumption~\ref{ass:contraction}).}
    \label{fig:resprofile}
  \end{subfigure}\hfill
  \begin{subfigure}[t]{0.32\textwidth}
    \includegraphics[width=\linewidth]{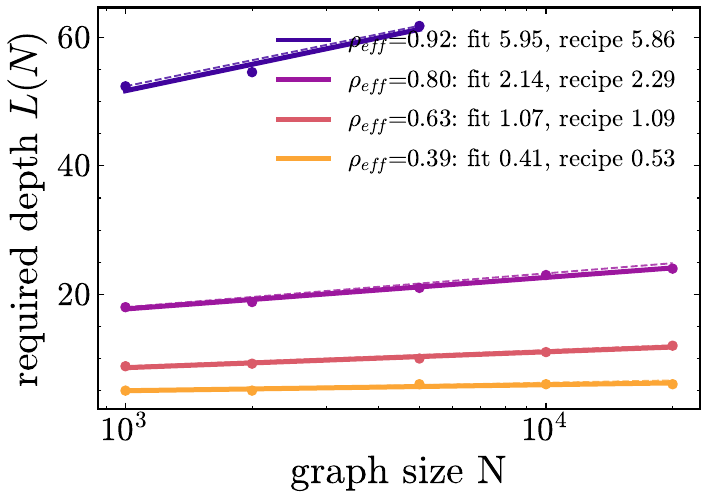}
    \caption{Required depth vs.\ graph size.}
    \label{fig:depthlaw}
  \end{subfigure}\hfill
  \begin{subfigure}[t]{0.32\textwidth}
    \includegraphics[width=\linewidth]{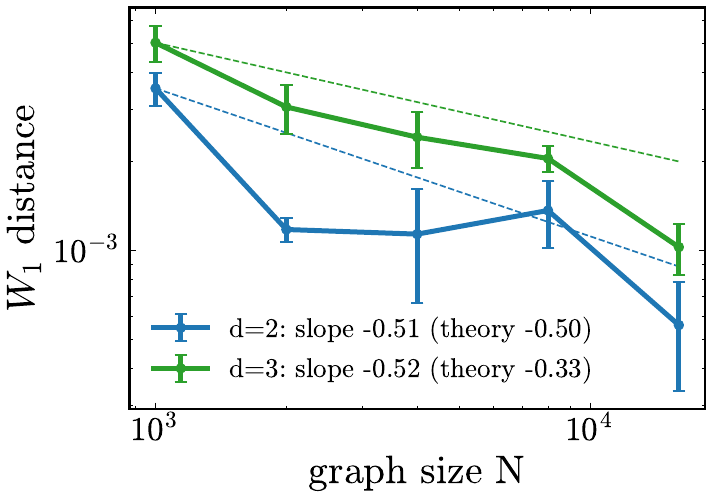}
    \caption{Cross-size transfer (Assumption~\ref{ass:size-transfer}).}
    \label{fig:w1}
  \end{subfigure}
  \caption{\textbf{Training-free checks of the two main assumptions}, on
  damped PageRank (Appendix~\ref{app:exp-anchors}).
  (a) The error of the best radius-$k$ prediction decreases geometrically
  in $k$; the guaranteed rate $\rho^{2k}$ (dashed) is an upper bound, and
  the measured rate $\rho_{\mathrm{eff}}$ is somewhat faster.
  (b) The radius needed to reach a fixed accuracy grows linearly in
  $\log N$; using the \emph{measured} rate $\rho_{\mathrm{eff}}$ predicts
  the growth rate within about $10\%$, whereas using the nominal $\rho$
  overestimates it by up to $10\times$ on expander graphs.
  (c) The distance between local neighborhood distributions at size $N$
  and at a large reference size decays as $N^{-1/d}$: the measured
  exponent is $-0.510$ against the predicted $-0.500$ at $d=2$
  (16 seeds); at $d=3$ the measurement ($-0.52$) is consistent with the
  bound but the graphs are too small to reach the asymptotic rate.}
  \label{fig:primitives}
\end{figure}

In this subsection we verified two assumptions that carry the risk decomposition: the geometric
decay of the radius-$k$ prediction error, implied by the contractive
target (Assumption~\ref{ass:contraction}), and the $N^{-1/d}$ transfer rate
between sizes (Assumption~\ref{ass:size-transfer}). 
Fig.~\ref{fig:primitives} shows the results on damped PageRank, where
the contraction rate is known.
We show that the guaranteed decay rate $\rho^{2k}$ is a valid but conservative
bound and the measured decay is faster. Therefore any practical rule for choosing
the deployment depth should use the measured rate. On sparse
Erd\H{o}s--R\'enyi graphs the measured rate is well approximated by
$\rho\,\lambda_2(P)$, where $\lambda_2$ is the second eigenvalue of the
transition matrix. The spectral gap of the graph and 
damping together determines how quickly information is resolved.

\subsection{Risk of the four schemes across task families}
\label{app:exp-tiers}

\begin{figure}[t]
  \centering
  \includegraphics[width=\textwidth]{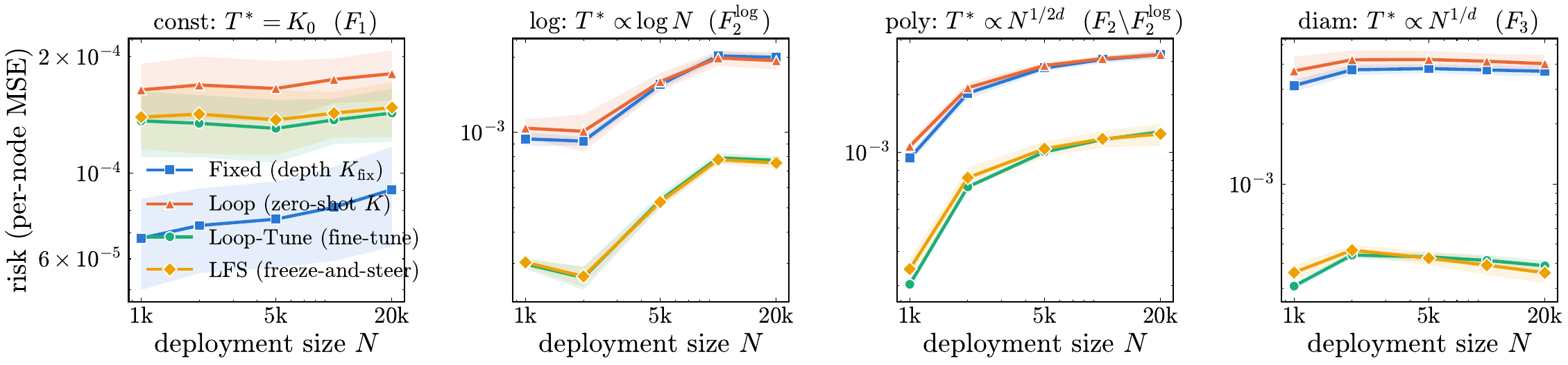}
  \caption{\textbf{Test risk of the four schemes across deployment
  sizes}, one panel per task family (target-supervised pretraining, RGG
  $d{=}2$, $\rho{=}0.7$; 5 seeds, $\pm1$~sd; each scheme selects its
  deployment depth on a held-out split at every size). On the
  $\mathcal{F}_1$ family all four schemes perform alike and no scheme
  degrades with size. On the other families, the risk of the two
  unadapted schemes grows with the graph size while the two adapted
  schemes are $2.6$--$10\times$ more accurate, and LFS
  gradually overtakes Loop-Tune as the size grows, most clearly on
  \textsc{diam}.}
  \label{fig:hier-tiers}
\end{figure}

Fig.~\ref{fig:hier-tiers} shows the complete risk curves for the
summary in Sec.~\ref{sec:exp-synthetic}. We observed that on the
\textsc{const} family all four schemes have risk curves within $5 \times 10^{-5}$
and none degrades with size, as expected when a fixed radius is sufficient.
On every other family the unadapted schemes degrade as the graph size
grows, in line with the error floor of
Proposition~\ref{prop:ceiling}, while adaptation at intermediate sizes
removes most of the degradation. Under end-to-end target-supervised
pretraining the looped model does \emph{not} improve over the fixed-depth
model, for the
reason established in Sec.~\ref{sec:exp-synthetic}. Under operator-aligned pretraining the ordering
changes: the looped model outperforms the fixed-depth model in $4$--$5$ of $5$
seeds in every $\mathcal{F}_2/\mathcal{F}_3$ setting (for example
$0.0144$ versus $0.0459$ on \textsc{diam} with $\rho = 0.9$ at
$N = 2\times10^4$), and Loop-Tune improves another $2$--$6\times$.

\begin{figure}[t]
  \centering
  \includegraphics[width=0.62\textwidth]{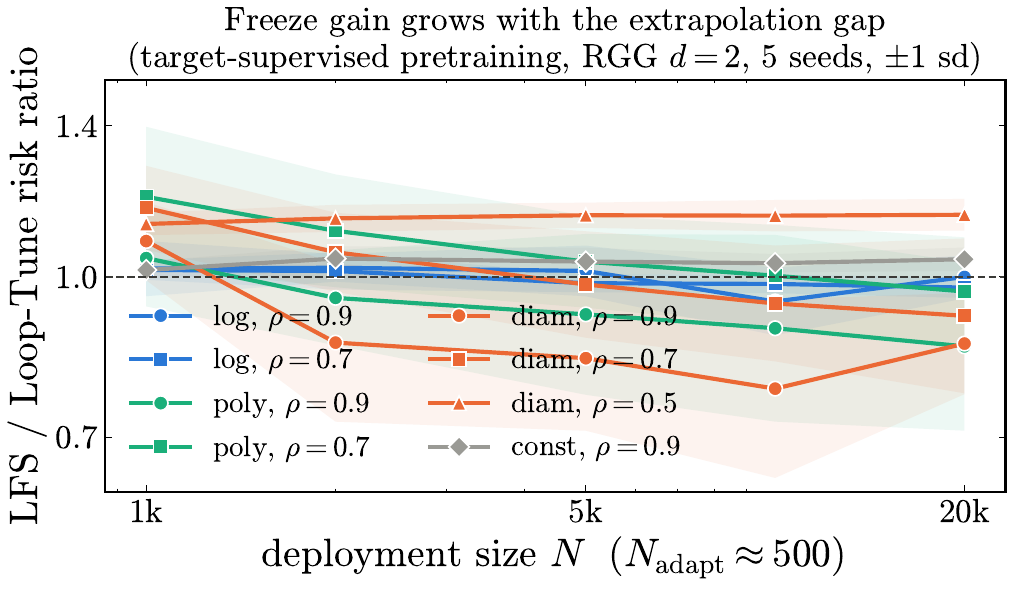}
  \caption{\textbf{Risk ratio of LFS to Loop-Tune as a
  function of deployment size} (target-supervised pretraining, RGG
  $d{=}2$; 5 seeds, $\pm1$~sd; adaptation on 100 graphs of $350$--$700$
  nodes). Values below $1$ mean LFS is more accurate. Near
  the adaptation sizes Loop-Tune leads; the curves cross for
  \textsc{poly} and \textsc{diam} between $2$ and $20$ times the
  adaptation size at $\rho \ge 0.7$, \textsc{log} stays within a few
  percent of parity, and where a small neighborhood suffices
  ($\rho{=}0.5$, and \textsc{const} at every $\rho$) Loop-Tune leads
  at every size.}
  \label{fig:hier-fsft}
\end{figure}

\textbf{Fixed-depth adaptation variants.}
Tab.~\ref{tab:tier0-variants} adds the two fixed-depth adaptation
variants of Sec.~\ref{sec:schemes}: Fixed-Tune fine-tunes all parameters of
the fixed-depth model on $D_{\mathrm{adapt}}$, and Fixed-Steer trains
the between-layer adapter and the readout ($42{,}817$ parameters,
$4.0\%$ of the fixed model's $1.06$M). Two observations. Each fixed-depth
variant performs on par with its looped counterpart (Fixed-Tune with
Loop-Tune, and Fixed-Steer with LFS): at this contraction rate the
optimal deployment depth stays near the pretraining depth
(Sec.~\ref{sec:exp-synthetic}), so once the model is adapted, the loop
itself adds little under target-supervised pretraining. Fixed-Steer
matches or improves on Fixed-Tune on every family while training $4\%$
of the parameters. The Fixed rows of this
table come from the same runs as the variants; they differ from the
corresponding runs of Fig.~\ref{fig:hier-tiers} only within seed
noise because the two studies pretrain independent models under the same
protocol. One further observation: the adapted fixed-depth variants
reach errors below the truncation values of Fig.~\ref{fig:signature}.
This is possible because our smooth input field has few modes, so its
distant values are partially predictable from a local patch, and
adaptation teaches the fixed-depth models this inference. The truncation
error is therefore an upper bound on what a fixed-radius predictor can
achieve on these inputs, and the gap between the two closes as the input
field carries more local randomness.

\begin{table}[t]
  \centering
  \caption{\textbf{All six schemes of Sec.~\ref{sec:schemes}} on the
  constructed families (RGG $d{=}2$, $\rho{=}0.7$; per-node MSE
  $\times 10^{-3}$, mean over 5 seeds, sd $\le 0.3$ in these units for
  every entry; deployment at $N = 2\times10^4$, i.e.\ $40\times$ the
  adaptation sizes). Best per column in bold.}
  \label{tab:tier0-variants}
  \small
  \begin{tabular}{llcccc}
    \toprule
    scheme & depth / adapted params & \textsc{const} & \textsc{log} & \textsc{poly} & \textsc{diam} \\
    \midrule
    Fixed       & fixed / none        & 0.10 & 2.00 & 3.31 & 3.71 \\
    Fixed-Tune  & fixed / all         & 0.09 & 0.77 & 1.36 & 0.38 \\
    Fixed-Steer & fixed / adapter + readout & \textbf{0.07} & \textbf{0.74} & \textbf{1.06} & \textbf{0.25} \\
    Loop        & tunable / none      & 0.18 & 1.94 & 3.26 & 4.01 \\
    Loop-Tune   & tunable / all       & 0.14 & 0.77 & 1.28 & 0.39 \\
    LFS         & tunable / controller + readout & 0.15 & 0.75 & 1.25 & 0.36 \\
    \bottomrule
  \end{tabular}
\end{table}

The picture changes when the computation genuinely needs more depth.
Tab.~\ref{tab:tier0-variants-op} repeats the comparison at slower
contraction ($\rho = 0.9$) with operator-aligned pretraining, on the two
families whose dependency radius grows fastest; the constructed radius
at the largest size is about $10$ for \textsc{log} and $50$ for
\textsc{diam}, well beyond the fixed depth $K_{\mathrm{fix}} = 5$. Here
the ranking separates along both axes of Sec.~\ref{sec:schemes}.
Adaptation at fixed depth recovers only part of the error: Fixed-Tune
removes $16$--$18\%$ and Fixed-Steer $24$--$38\%$ of the unadapted
error. The loop
alone, with no adaptation at all, already beats every fixed-depth
variant. Adapting the looped model then cuts the error further, to
$3$--$11\times$ below the best fixed-depth variant, with the adapted
looped models selecting deployment depths of $8$--$9$ (\textsc{log}) and
$16$--$24$ (\textsc{diam}). Together the two tables give the summary:
when the task's dependency radius stays near the pretraining depth,
adaptation matters and the loop adds little
(Table~\ref{tab:tier0-variants}); when the radius grows past it, the
loop is necessary and adaptation alone cannot substitute for it
(Table~\ref{tab:tier0-variants-op}).

\begin{table}[t]
  \centering
  \caption{\textbf{The six schemes at slow contraction under
  operator-aligned pretraining} (RGG $d{=}2$, $\rho{=}0.9$; per-node MSE
  $\times 10^{-3}$, mean $\pm$ sd over 5 seeds; deployment at
  $N = 2\times10^4$). Best per column in bold.}
  \label{tab:tier0-variants-op}
  \small
  \begin{tabular}{llcc}
    \toprule
    scheme & depth / adapted params & \textsc{log} & \textsc{diam} \\
    \midrule
    Fixed       & fixed / none        & $5.92 \pm 0.11$ & $47.0 \pm 0.4$ \\
    Fixed-Tune  & fixed / all         & $4.98 \pm 0.58$ & $38.3 \pm 3.4$ \\
    Fixed-Steer & fixed / adapter + readout & $3.64 \pm 0.86$ & $35.8 \pm 1.6$ \\
    Loop        & tunable / none      & $2.37 \pm 0.57$ & $14.4 \pm 7.9$ \\
    Loop-Tune   & tunable / all       & $\mathbf{1.04 \pm 0.17}$ & $\mathbf{3.26 \pm 1.17}$ \\
    LFS         & tunable / controller + readout & $1.12 \pm 0.19$ & $3.99 \pm 1.57$ \\
    \bottomrule
  \end{tabular}
\end{table}

Fig.~\ref{fig:hier-fsft} shows the ratio curves that underlie the
summary plot (Fig.~\ref{fig:hier-phase}). The effect is stronger at
$d=3$: the ratio falls to $0.98$ (\textsc{log}),
$0.77$ (\textsc{poly}), and $0.64$ (\textsc{diam}) at the largest size,
with LFS more accurate in $62$ of $75$
seed--size combinations. The one regime where Loop-Tune stays ahead at
every size is fast contraction: at $\rho = 0.5$ the ratio is
$1.11$--$1.15$ on every family, as the class definitions predict. At
$\rho = 0.7$ and $d = 2$, Loop-Tune also leads near the adaptation
sizes on \textsc{poly} and \textsc{diam} (ratios up to $1.2$) before
the curves cross.

\subsection{Damped PageRank}
\label{app:exp-anchors}

\begin{figure}[t]
  \centering
  \begin{subfigure}[t]{0.49\textwidth}
    \includegraphics[width=\linewidth]{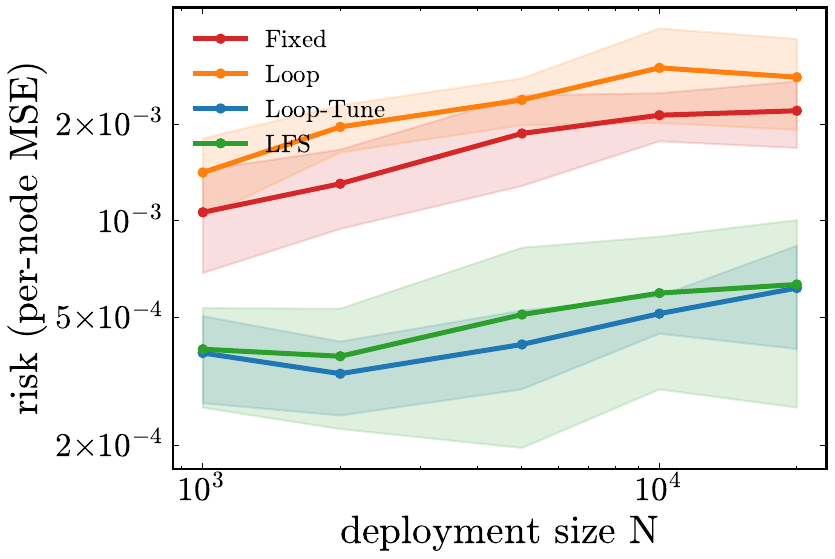}
    \caption{Risk of the four schemes across sizes.}
  \end{subfigure}\hfill
  \begin{subfigure}[t]{0.49\textwidth}
    \includegraphics[width=\linewidth]{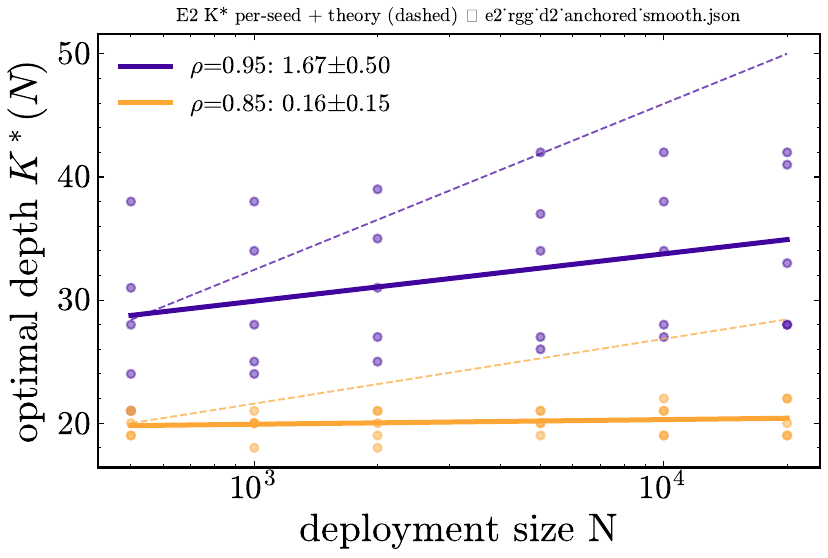}
    \caption{Risk-minimizing depth across sizes.}
    \label{fig:e2}
  \end{subfigure}
  \caption{\textbf{Damped PageRank} ($\rho = 0.95$, RGG $d{=}2$; 5
  seeds). (a) The two adapted schemes are $2.8\times$ more accurate than
  the unadapted ones at $N{=}10^3$, growing to about $4\times$ at
  $N{=}2\times10^4$. (b) With the contractive update described in the
  text, the risk-minimizing deployment depth is ordered by the
  contraction rate at every size (about $27/20/13/8$ for measured rates
  $0.92/0.80/0.63/0.42$) and grows with $\log N$ wherever the benefit of
  resolving more of the target exceeds the estimation error.}
  \label{fig:e1}
\end{figure}

The constructed tasks of Sec.~\ref{sec:exp-synthetic} prescribe a finite
number of update steps. We also repeat the study on damped PageRank, a
natural task whose target is the \emph{fixed point} of the same kind of
contractive update, so the dependency radius is set by the contraction
rate alone ($\mathcal{F}_2^{\log}$ in Definition~\ref{def:classes}).
Fig.~\ref{fig:e1} shows that the findings transfer: adaptation provides
a growing advantage over the unadapted schemes, the $\mathcal{F}_1$
control behaves like \textsc{const}, and the risk-minimizing depth grows
with the graph size at a rate ordered by the contraction rate. A plain
learned update tends to become unstable when unrolled far beyond its
training depth. For these experiments we therefore use an update of the
form $h \leftarrow a\,h_{\mathrm{enc}} + (1-a)\,R(h)$ with a learned
mixing weight $a$, which is contractive by construction and removes the
instability. On this task Loop-Tune and LFS perform alike at every size
(risk ratio $1.00$--$1.03$), because the adaptation
data are plenty and are drawn from the same distribution, so neither
condition that favors freezing (scarce data, or adaptation that disturbs
cross-size behavior) holds.

\subsection{Effect of the adaptation budget and of the size gap}
\label{app:exp-fsft}

\begin{figure}[t]
  \centering
  \includegraphics[width=0.9\textwidth]{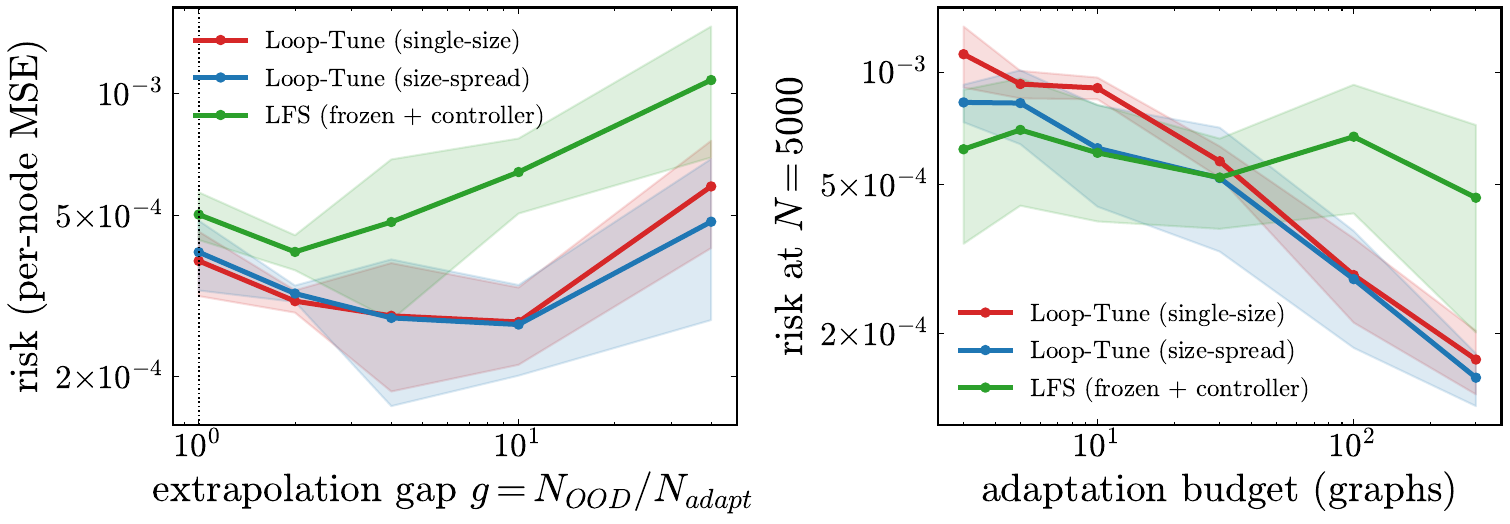}
  \caption{\textbf{Loop-Tune and LFS as the size gap
  (left) and the adaptation budget (right) vary} (damped PageRank, 3
  seeds). Left: Loop-Tune on a single adaptation size and on a spread
  of sizes perform alike at every gap up to $40\times$: on these
  homogeneous graphs, single-size fine-tuning does not damage cross-size
  behavior (cf.\ Figure~\ref{fig:epsb}). Right: the comparison crosses
  with the budget: LFS is more accurate with $30$ or fewer adaptation
  graphs (risk ratios $0.56$--$0.90$), and Loop-Tune leads once the
  budget reaches $100$ graphs.}
  \label{fig:e4e6}
\end{figure}

Proposition~\ref{prop:adapt} attributes the potential advantage of
LFS to two terms: an estimation term, which favors the
smaller adapted parameter set when labeled data are scarce, and a
transfer term, which favors the frozen processor when fine-tuning would
disturb its behavior across sizes. Fig.~\ref{fig:e4e6} isolates the
estimation term on damped PageRank, and it appears directly: with $30$
or fewer adaptation graphs LFS is more accurate (risk ratios
$0.56$--$0.90$), and Loop-Tune takes over once the budget reaches $100$
graphs, where every node provides a regression label and the data are
plentiful. On the constructed families
the crossing is reached at roughly ten times the adaptation size
(Fig.~\ref{fig:hier-fsft}). On chip placement, where the adaptation
data are also plentiful, Loop-Tune likewise leads and LFS stays within
$4\%$ of its error while training $2.9\%$ of the parameters
(Tab.~\ref{tab:chipgen}).

\subsection{Measuring how adaptation changes the processor}
\label{app:exp-a14}

\begin{figure}[t]
  \centering
  \includegraphics[width=0.95\textwidth]{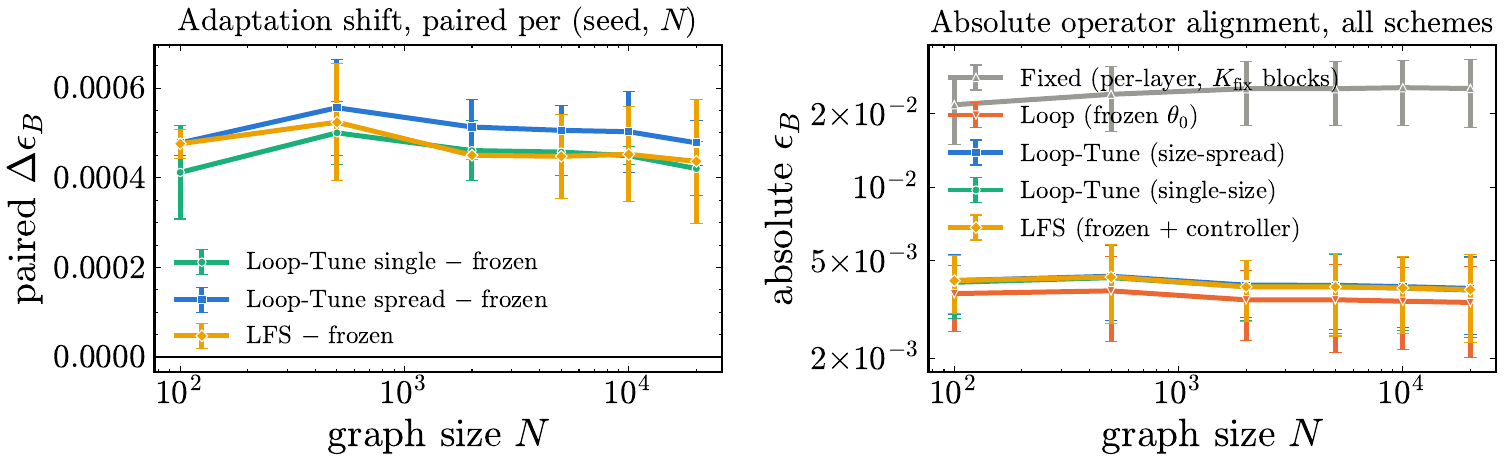}
  \caption{\textbf{How far the learned update deviates from the task
  update}, measured as
  $\lVert D(R_\theta(h)) - U_N(D(h)) \rVert^2$ along the model's own
  trajectory (damped PageRank, $\rho = 0.95$, 3 seeds). Left: change in
  this deviation caused by adaptation, relative to the frozen pretrained
  processor, matched per seed and size. Right: the deviation itself for
  all schemes. The frozen processor (Loop) is the best-aligned
  reference; fine-tuning increases the deviation uniformly across sizes;
  the layers of the fixed-depth model (Fixed), measured one by one, sit
  $5$--$10\times$ higher at every size: trained without weight sharing,
  they never learn to approximate $U_N$.}
  \label{fig:epsb}
\end{figure}

LFS can outperform Loop-Tune only if fine-tuning harms the processor's behavior on sizes beyond the adaptation range, and this is the deployment-shift term in Proposition~\ref{prop:adapt}. In this subsection we measure that harm directly. At each step of the model's trajectory, we apply the learned update and the true task update $U_N$ to the same hidden state, decode both results, and record the squared difference between them. Fig.~\ref{fig:epsb} shows this difference before and after adaptation. Fine-tuning increases it in every seed--size combination, by $9$--$22\%$. The LFS-adapted model shifts this decoded deviation by a comparable amount ($9$--$19\%$): the probe decodes through the readout, which both schemes now train, so on matched data it does not separate the two. The increase is the same at every graph size for both schemes, which explains why neither loses accuracy far beyond the adaptation sizes in Fig.~\ref{fig:e4e6}. Fine-tuning hurts size extrapolation only when the adaptation data differ structurally from the deployment data, as real design data usually do; the next experiment creates this situation deliberately, and there the two schemes separate. 

\textbf{Creating the bias deliberately.}
We adapt on random geometric graphs of average degree $4$ while
pretraining and evaluating on the standard average degree $8$. The
geometry and the task are unchanged, and only the adaptation
distribution differs (Fig.~\ref{fig:e9bias}). Adaptation on matched
data improves the processor's agreement with the task update at every
size, for Loop-Tune and LFS alike. Adaptation on the
biased data removes this improvement for Loop-Tune and pushes its
processor away from the task update near the adaptation sizes, while
LFS stays closer to the pretrained behavior at every
size. Under bias, the risk ratio of LFS to Loop-Tune
falls to $0.72$--$0.51$ (\textsc{log}) and $0.72$--$0.42$
(\textsc{diam}) across deployment sizes, with LFS more
accurate in all $5$ seeds at every size, whereas the same settings
without bias give ratios of $0.9$--$1.2$. Loop-Tune's risk increases
$2.7$--$11\times$ under bias; LFS's increases far less
($1.4$--$7\times$). The two experiments together cover both directions: on
matched adaptation data there is no harmful drift for freezing to
prevent, and on structurally biased data the drift appears and freezing
prevents it.

\begin{figure}[t]
  \centering
  \includegraphics[width=0.95\textwidth]{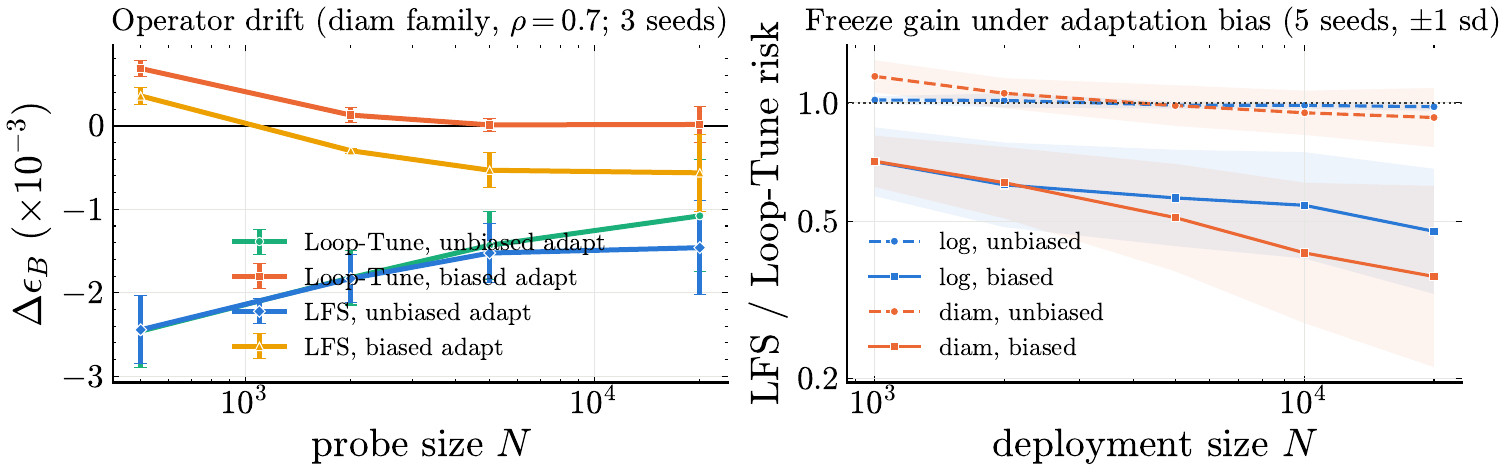}
  \caption{\textbf{Deliberately biased adaptation data} (adapt on
  degree-4 graphs at $N \approx 500$, evaluate on degree-8 graphs;
  target-supervised pretraining, $\rho = 0.7$). Left: change in the
  learned update's deviation from the task update, relative to the
  frozen processor (\textsc{diam}, 3 seeds): matched adaptation data
  improve the agreement, while biased data push Loop-Tune away from the
  task update while LFS stays closer to its pretrained behavior.
  Right: risk ratio of LFS to Loop-Tune across deployment
  sizes for \textsc{log} and \textsc{diam} (5 seeds; dashed: matched
  adaptation data, solid: biased): the bias reverses the comparison, by
  up to a factor of $2.4$.}
  \label{fig:e9bias}
\end{figure}

\subsection{Scope and failure modes}
\label{app:exp-scope}

\begin{figure}[t]
  \centering
  \begin{subfigure}[t]{0.44\textwidth}
    \includegraphics[width=\linewidth]{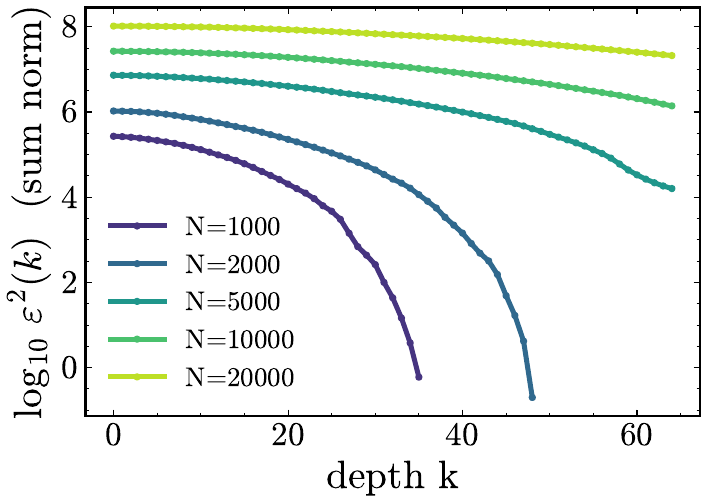}
    \caption{Error of radius-$k$ prediction (no model involved).}
  \end{subfigure}\hfill
  \begin{subfigure}[t]{0.54\textwidth}
    \includegraphics[width=\linewidth]{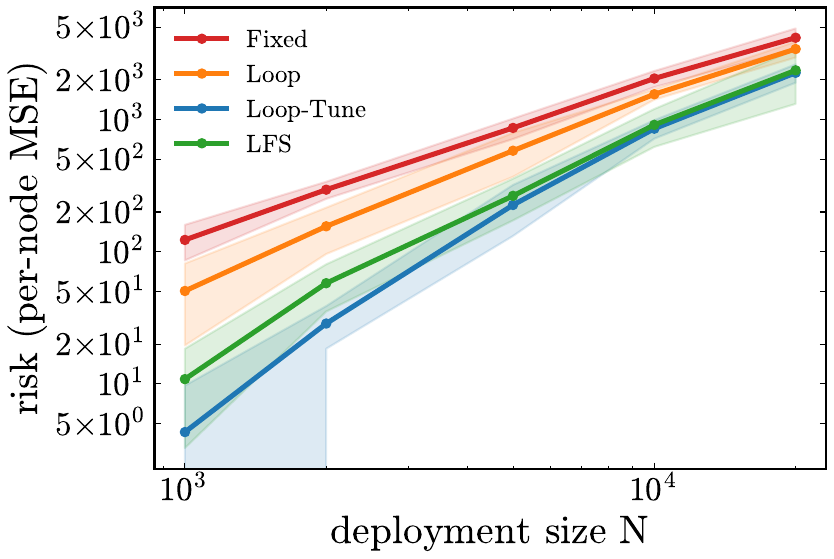}
    \caption{All four schemes on shortest paths (3 seeds).}
  \end{subfigure}
  \caption{\textbf{Outside the contractive class.} (a) For shortest-path
  distances, the error of the best radius-$k$ prediction does not decay
  geometrically; it drops only once $k$ reaches the graph diameter, of
  order $\sqrt{N}$, a property of the task itself. (b) Trained on
  this task, every scheme degrades severely with the graph size and the
  differences between schemes collapse: the failure is not specific to
  LFS, and no ordering among the schemes survives.}
  \label{fig:sssp}
\end{figure}

This subsection reports the boundaries of the framework. Shortest-path
distances are computed by a min-plus update that does not contract, and
Fig.~\ref{fig:sssp} shows the consequence: the error of
radius-limited prediction does not decay geometrically, and every scheme
fails at deployment sizes, with risk growing by one to three orders of
magnitude between $N = 10^3$ and $N = 2\times10^4$. The LFS
controller, built from feature-wise scaling, cannot express min-plus
corrections, so its advantage vanishes as well. The long-range content
of the inputs also matters. If node inputs are independent across nodes,
or if the input field on a one-dimensional graph can be inferred from
any small window, then distant information is predictable from local
information and fixed-depth models escape the error floor. Inputs whose
distant values are not locally predictable, such as the smooth field in
our construction, are therefore a requirement for the fixed-radius lower
bound (Proposition~\ref{prop:ceiling}), not a technicality. Depth extension has a stability limit. Without contraction
in the learned update, unrolling beyond about $1.5\times$ the training
depth makes risk explode; the contractive update of
App.~\ref{app:exp-anchors} removes the explosion, and
operator-aligned pretraining then extends the usable depth to about
$2.5\times$ the pretraining depth (Fig.~\ref{fig:hier-kstar}). When
the update contracts quickly ($\rho \le 0.5$), every family behaves
like $\mathcal{F}_1$: the relevant neighborhood fits inside the
pretraining radius and the differences among all four schemes vanish, as
the class definitions predict.

\end{document}